\newtheorem{theorem}{Theorem}
\newtheorem{proposition}[theorem]{Proposition}%
\newtheorem{corollary}[theorem]{Corollary}%
\newtheorem{lemma}[theorem]{Lemma}%
\newtheorem{claim}[theorem]{Claim}%
\theoremstyle{remark}%
\theoremstyle{definition}%
\newtheorem{definition}{Definition}%
\newcommand{\N}{\mathbb{N}}
\newcommand{\R}{\mathbb{R}}
\newcommand{\A}{\mathcal{A}}
\newcommand{\calD}{\mathcal{D}}
\renewcommand{\epsilon}{\varepsilon}
\newcommand{\eps}{\varepsilon}
\DeclareMathOperator{\E}{\mathbb{E}}
\DeclareMathOperator{\mut}{mut}
\newcommand{\mutD}{\mut_{\calD}}
\newcommand{\ooea}{$(1 + 1)$-\text{EA}\xspace}
\newcommand{\ooeaD}{$(1 + 1)$-\text{EA}$_{\calD}$\xspace}
\newcommand{\olea}{$(1 + \lambda)$-EA\xspace}
\newcommand{\moea}{$(\mu + 1)$-EA\xspace}
\newcommand{\moga}{$(\mu + 1)$-GA\xspace}
\newcommand{\OneMax}{\textsc{OneMax}\xspace}
\newcommand{\onemax}{\textsc{OneMax}\xspace}
\newcommand{\OM}{\textsc{Om}\xspace}
\newcommand{\jump}{\textsc{Jump}\xspace}
\newcommand{\hottopic}{\textsc{HotTopic}\xspace}
\newcommand{\leadingones}{\textsc{LeadingOnes}\xspace}
\begin{document}

\title[Tight Runtime Bounds for Static Unary Unbiased EAs on Linear Functions]{Tight Runtime Bounds for Static Unary Unbiased\\ Evolutionary Algorithms on Linear Functions}


\author*[1]{\fnm{Carola} \sur{Doerr}}\email{carola.doerr@lip6.fr}
\equalcont{All authors contributed equally to this work.}

\author[1,2,3]{\fnm{Duri Andrea} \sur{Janett}}
\equalcont{All authors contributed equally to this work.}

\author[2]{\fnm{Johannes} \sur{Lengler}}
\equalcont{All authors contributed equally to this work.}

\affil[1]{\orgdiv{LIP6}, \orgname{CNRS, Sorbonne Universit\'e}, \orgaddress{\city{Paris}, \country{France}}}

\affil[2]{\orgdiv{Department of Computer Science}, \orgname{ETH Z\"urich}, \orgaddress{\city{Z\"urich}, \country{Switzerland}}}

\affil[3]{\orgdiv{Department of Computer Science}, \orgname{University of Copenhagen}, \orgaddress{\city{Copenhagen}, \country{Denmark}}}

\abstract{In a seminal paper in 2013, Witt showed that the (1+1) Evolutionary Algorithm with standard bit mutation needs time $(1+o(1))n \ln n/p_1$ to find the optimum of any linear function, as long as the probability $p_1$ to flip exactly one bit is $\Theta(1)$. In this paper we investigate how this result generalizes if standard bit mutation is replaced by an arbitrary unbiased mutation operator. This situation is notably different, since the stochastic domination argument used for the lower bound by Witt no longer holds. In particular, starting closer to the optimum is not necessarily an advantage, and OneMax is no longer the easiest function for arbitrary starting positions. 

Nevertheless, we show that Witt's result carries over if $p_1$ is not too small, with different constraints for upper and lower bounds, and if the number of flipped bits has bounded expectation~$\chi$. Notably, this includes some of the heavy-tail mutation operators used in fast genetic algorithms, but not all of them. We also give examples showing that algorithms with unbounded $\chi$ have qualitatively different trajectories close to the optimum.}

\keywords{Runtime analysis, Theory of Evolutionary Computation, Mutation Operators}



\maketitle

\section{Introduction}
\label{sec:intro}

In real-world optimization, whether of academic or industrial nature, we are often tasked with problems for which no efficient algorithms are readily available -- either because the problems are intrinsically hard or simply because we lack time or other resources to develop problem-specific solution strategies. Facing such problems, practitioners often resort to heuristic approaches. A particularly prominent class of optimization heuristics is that of so-called \emph{randomized search heuristics}, also studied under the name of \emph{stochastic local search algorithms}~\cite{HoosS2004}. These algorithms work in an iterative fashion, alternating between the generation of solution candidates, their evaluation, and using the so-obtained information to adjust the strategy that is used to generate the next candidates. Among the most commonly applied randomized search heuristics are greedy local search strategies, Simulated Annealing~\cite{SA83}, and evolutionary algorithms~\cite{EibenS03}. Understanding the behavior of these search heuristics by mathematical means has been an important driver for the design of state-of-the-art components~\cite{DoerrN21teloSurvey}. 

With this paper, we contribute to the analysis of a key component of heuristic search methods, the so-called \emph{mutation operators}, i.e., the procedure that describes how to generate new solution candidates, referred to as ``\emph{offspring}'', from a single, typically  previously evaluated and often a best-so-far, solution, the \emph{``parent}''. While local search heuristics sample the new candidates within a deterministic, and typically small, neighborhood around the parent, a core ingredient of evolutionary algorithms is to use a random decision where to sample the new solution(s).    
On the hypercube $\{0,1\}^n$, for a long time the undisputed default was to use \emph{standard bit mutation}, which flips each bit of the parent independently with the same probability. However, this convention has been challenged in the last years; for example via the \emph{fast} mutation operators~\cite{doerr2017fast}, for which the number of flipped bits follows a heavy-tailed distribution. 
The advantages of using heavy-tailed distributions are rather impressive~\cite{DoerrN21teloSurvey}. They are slightly worse for hillclimbing, but the expected runtime deteriorates only by a constant factor that can be chosen close to one. However, they are massively better at escaping local optima. While it takes $e^{\Omega(k\ln k)}$ steps to flip $k$ bits at once with standard bit mutation of mutation rate $\Theta(1/n)$, it only takes $k^{O(1)}$ steps with fast mutation operators. Consequently, they are faster on landscapes with local optima, like the \jump function~\cite{doerr2017fast,antipov2020runtime} and its generalizations~\cite{bambury2022extended,antipov2021effect}.\footnote{When the required jump size $k$ is known in advance, then choosing the mutation rate to be $k/n$ is optimal, as shown in~\cite{doerr2017fast}. The advantage of the fast mutation operator is that $k$ does not need to be known.} 
Heavy-tailed distributions can also help on unimodal landscapes like \onemax. For example, the $(1+(\lambda,\lambda))$ GA~\cite{DoerrDE15} was shown to achieve linear expected runtime~\cite{antipov2020fast} when equipped with fast mutation operators, which is asymptotically the best possible for comparison-based unbiased algorithms.

Other benchmarks on which fast mutation operators or other unbiased mutation operators than standard bit mutation have been found to be useful include theoretical benchmarks like \textsc{LeadingOnes}~\cite{ye2019interpolating} and \textsc{TwoMax}~\cite{friedrich2018escaping}, network problems like maximum cut~\cite{friedrich2018heavy,friedrich2018escaping,quinzan2021evolutionary}, minimum vertex cover~\cite{friedrich2018escaping,buzdalov20221+}, maximum independent set~\cite{ye2020benchmarking}, maximum flow~\cite{mironovich2017evaluation} and SAT~\cite{semenov2021evaluating}, landscape classes like submodular functions~\cite{friedrich2018heavy,quinzan2021evolutionary} and random NK-landscapes~\cite{ye2020benchmarking}, multi-objective settings~\cite{doerr2021theoretical,doerr2022first,doerr20221+} and other problems like subset selection~\cite{wu2018dynamic}, the N-queens problem~\cite{ye2020benchmarking}, the symmetric mutual information problem~\cite{friedrich2018heavy,quinzan2021evolutionary} and many more~\cite{ye2020benchmarking,novak2022scheduling,neumann2022evolutionary,klapalek2021car,doerr2022runtime,antipov2021lazy}. Such mutation operators are integrated into standard benchmarking tools like the IOHprofiler~\cite{doerr2018iohprofiler} and Nevergrad~\cite{bennet2021nevergrad}, and they have been used as building blocks for more sophisticated algorithms~\cite{corus2021automatic,corus2021fast,doerr2022stagnation,neumann2022coevolutionary,pavlenko2022asynchronous}.

The large success of non-standard mutation operators raises the desire to analyze which operators are (provably) optimal for a given problem setting. Such questions can be answered in the black-box complexity framework proposed in~\cite{DrosteJW06BBC} (see~\cite{doerr2020complexity} for a survey on the role of black-box complexity for evolutionary computation). Particularly interesting for the study of mutation operators is the unary unbiased black-box complexity model defined in~\cite{lehrewitt2012}. Unary unbiased black-box algorithms create solution candidates by sampling uniformly at random or by selecting one previously evaluated point~$x$ and a search radius~$r$ (both possibly random) and then sampling the solution candidate uniformly at random among all points at Hamming distance~$r$ from~$x$. The unary unbiased black-box complexity of a collection~$\mathcal{F}$ of functions is then the best (over all unary unbiased algorithms) worst-case (over all problem instances in~$\mathcal{F}$) expected runtime. The study of unary unbiased black-box complexities has led to important insights into the limitation of mutation-based algorithms~\cite{doerr2011faster,doerr2012black,lehrewitt2012,doerr2013black,doerr2014reducing,doerr2015unbiased,lehre2019parallel,DDY2020}, which were exploited for the design of faster algorithms such as the $(1+(\lambda,\lambda))$~GA in~\cite{DoerrDE15}. 


For \onemax, a tight bound for the unary unbiased black-box complexity was proven in~\cite{DDY2020}. It was shown there that the \emph{drift-maximizing} algorithm that at every step chooses the mutation operator that maximizes the expected progress achieves asymptotically optimal expected runtime, up to small lower order terms. Zooming further into this problem for concrete dimensions, Buskulic and Doerr~\cite{BuskulicD21} showed that slightly better performance can be achieved by increasing the mutation rates, i.e., by implementing a more risky strategy that, at several stages that are sufficiently far away from the optimum, flips more bits (in the hope of making more progress and at the cost of a smaller success probability). The approach developed by~\cite{BuskulicD21} was later extended in~\cite{BuzdalovD20dynolea} to compute the optimal mutation rates for the \ooea and the \olea optimizing \onemax. The best \textit{static} unary unbiased mutation operator for the \olea for a number of different combinations of $n$ and $\lambda$ was numerically approximated in~\cite{buzdalov2021optimal}. In particular, it was shown there that the optimal mutation operators are none of the standard choices that are typically used in evolutionary algorithms. These results demonstrate that even for the optimization of \onemax our understanding of optimal mutation operators is rather limited, both in the static and in the dynamic case. 

\textbf{Our Results:} We aim to extend in this work the above-mentioned results to the optimization of a larger class of functions. The first natural extension of \onemax are linear functions, so we primarily focus on these. Our particular goal is to derive tight bounds for the expected runtime of the \ooea equipped with an arbitrary unary unbiased mutation operator. 

To express our main result, we briefly recall from~\cite{doerr2020complexity} that every unbiased mutation operator can be described by a sequence of $n+1$ probabilities $p_0,p_1,\ldots,p_n$ that sum up to one. We thus identify the mutation operator with the sequence $\calD = (p_0,p_1,\ldots,p_n)$ and write \ooeaD for the \ooea that generates its solution candidates using the mutation operator $\mutD$ that first draws an index $i\in [0,n]$ according to the probabilities (i.e., it picks $i$ with probability $p_i$), and then flips a uniformly random set of exactly $i$ bits. Every \ooea equipped with an arbitrary but static unary unbiased mutation operator can be expressed as a \ooeaD. 
We show the following. 
\begin{theorem}
\label{thm:main-short}
Consider the \ooeaD for a distribution $\calD = (p_0,p_1,\ldots,p_{n})$ with mean $\chi$. If $p_1 = \Theta(1)$ and $\chi = O(1)$, then the expected runtime on any linear function on $\{0,1\}^n$ with non-zero weights is
\begin{align}\label{eq:main-intro}
(1\pm o(1))\frac{1}{p_1}\cdot n\ln n.
\end{align}
\end{theorem}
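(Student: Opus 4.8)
The plan is to prove matching upper and lower bounds, each via a drift analysis of a carefully weighted potential, and to isolate the regime that actually produces the $\frac{1}{p_1}n\ln n$ term. By unbiasedness I may assume the optimum is the all-ones string and, relabeling coordinates, that the weights satisfy $w_1\ge w_2\ge\cdots\ge w_n>0$; a state $x$ is then described by its set of zero-bits $Z(x)$. The guiding intuition is that the process behaves like a coupon collector driven almost entirely by single-bit flips: when $k=|Z(x)|$ bits are still wrong, a single-bit flip corrects one of them with probability $(1\pm o(1))p_1 k/n$, and summing $n/(p_1 k)$ over $k$ from $\Theta(n)$ down to $1$ gives $(1\pm o(1))\frac{1}{p_1}n\ln n$. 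The two hypotheses enter exactly here: $p_1=\Theta(1)$ guarantees that this single-bit channel carries constant probability, and $\chi=O(1)$ guarantees that multi-bit flips contribute only lower-order corrections.

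For the \textbf{upper bound} I would follow Witt's potential-function strategy. The difficulty that $|Z(x)|$ is not monotone under accepted moves (a weight-trading flip may increase fitness while creating more zeros) forces a weighted potential $g(x)=\sum_{i\in Z(x)}g_i$ with geometrically spaced coefficients $g_i$ chosen so that (i) every accepted move decreases $g$, and (ii) the span $g_{\max}/g_{\min}$ is only $n^{o(1)}$, so that $\ln(g(x_0)/g_{\min})=(1\pm o(1))\ln n$. I would then establish the multiplicative drift inequality $\E[g(x)-g(x')\mid x]\ge (1-o(1))\frac{p_1}{n}\,g(x)$: the single-bit improving flips alone contribute $-\frac{p_1}{n}g(x)$, and the crux is to bound the positive (potential-increasing) contribution of accepted multi-bit flips by $o(\frac{p_1}{n}g(x))$. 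This is where $\chi=O(1)$ does the work, since $\sum_{j\ge 2}jp_j\le\chi$ caps the total mass and expected range of multi-bit events. The multiplicative drift theorem then yields expected time $(1+o(1))\frac{1}{p_1}n\ln n$.

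The \textbf{lower bound} is where the approach genuinely departs from Witt, because his stochastic-domination argument fails for general operators. My plan is to again work with a weighted potential (to absorb the non-monotonicity of $|Z(x)|$) and to argue that in the near-optimum regime, where $k=|Z(x)|$ is small, an accepted multi-bit flip that reduces the potential is rare: such a flip must hit at least one of the few remaining zero-bits while, with high probability, also flipping strictly more weight of correct bits, so its net improving effect is $o(p_1 k/n)$. Hence the expected one-step drift there is at most $(1+o(1))\frac{p_1}{n}\,g(x)$. I would then invoke a multiplicative-drift \emph{lower}-bound theorem; its hypothesis forbidding large relative jumps is the delicate point, and it is discharged using $\chi=O(1)$, which via Markov's inequality makes a flip of $\ge J$ bits have probability $\le\chi/J$, so overshooting the optimum in a single step is polynomially unlikely. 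Summing the resulting coupon-collector cost over the near-optimum range $1\le k\le n^{1-o(1)}$ already accounts for $(1-o(1))\frac{1}{p_1}n\ln n$, so it is irrelevant that the far-from-optimum phase might be faster.

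The step I expect to be the main obstacle is controlling multi-bit flips in the lower bound. One must simultaneously (a) design a single potential that has multiplicative drift, keeps accepted weight-trading moves from inflating the potential, and has span $n^{o(1)}$, and (b) rule out that the aggregate effect of rare-but-large jumps shaves a constant factor off the runtime. Unlike the \onemax case, there is no domination by a simple reference process, so each of these must be verified directly from the acceptance condition and the bound $\chi=O(1)$; getting the error terms down to truly $o(1)$ relative to $\frac{1}{p_1}n\ln n$, uniformly over all admissible weight profiles, is the technically demanding part.
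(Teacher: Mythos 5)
Your upper bound follows the paper's route essentially verbatim: a Witt-style weighted potential with capped geometric coefficients, a multiplicative drift bound of order $p_1/n$, and $\chi=O(1)$ absorbing the adverse multi-bit contribution. Two remarks there: your condition (i) that \emph{every} accepted move decreases $g$ is unattainable for a potential of span $n^{o(1)}$ (an accepted trade of one heavy zero-bit against many light one-bits can increase $g$ when the caps $\gamma_i$ compress the weights); the paper, following Witt, only controls the expected change, splitting it into a nonnegative part $\Delta_L$ and a negative part $\Delta_R$ bounded via $\chi$. This is repairable and your plan is otherwise sound.

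The lower bound, however, has two genuine gaps. First, your central drift claim --- that a weighted potential $g$ of span $n^{o(1)}$ satisfies $\E[g(x)-g(x')\mid x]\le (1+o(1))\frac{p_1}{n}g(x)$ near the optimum --- is unjustified and appears false for adversarial weights. An accepted two-bit \emph{trade} (flip one zero-bit of larger weight and one one-bit of smaller weight) occurs with probability $\Theta(p_2 k/n)$ when $k$ bits are wrong, i.e., the same order as the single-bit channel $\Theta(p_1 k/n)$, and each such trade decreases $g$ by $g_j-g_i$, which for coefficients spanning $n^{o(1)}$ can itself be $n^{o(1)}$; these contributions need not be $o\bigl(\frac{p_1}{n}g(x)\bigr)$, so uniformity over all weight profiles fails. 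This is precisely why the paper abandons weighted potentials for the lower bound and instead tracks the best-so-far symmetrized \onemax distance $X_t=\min_{i\le t}\min\{\OM(x^{(i)}),\,n-\OM(x^{(i)})\}$, under which trades contribute zero progress and the one-step drift has the explicit, function-independent form $\sum_r (p_r+p_{n-r})B(n,d,r)$; the price is a nontrivial lemma showing the drift is maximized by a parent at the current minimal distance, and the use of the variable-drift lower bound of \cite{DDY2020} rather than a multiplicative one. Second, your control of large jumps via Markov's inequality, $\Pr[\text{flip}\ge J\text{ bits}]\le\chi/J$, is far too weak: it is only polynomially small in $J$, not in $n$; from distance $k$, a flip of $k$ bits has probability up to $O(\chi/k)$ under this bound alone, and you cannot exclude that such a flip reaches the optimum. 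What actually forbids large gains is hypergeometric concentration --- an $r$-bit flip hits close to its fair share $rd/n$ of wrong bits, whatever $r$ is --- which the paper imports as Lemma~13 of \cite{DDY2020} with failure probability $n^{-4/3}\ln^7 n$, independent of $\chi$; indeed the paper's lower bound needs no bound on $\chi$ at all, only $p_1+p_{n-1}=n^{-o(1)}$. Relatedly, you never address the $(n-1)$-bit-flip channel (progress via the antipode): under your hypotheses it is harmless since $p_{n-1}\le\chi/(n-1)=o(p_1)$ by Markov, but the general lower bound carries the constant $1/(p_1+p_{n-1})$, this dependence is shown to be real (Lemma~\ref{lem:pn-1}), and removing it on linear functions costs the paper a separate argument (Theorem~\ref{thm:linear-lower}, via J\"agersk\"upper's sorting result and Chebyshev's sum inequality) showing that $(n-1)$-bit flips are eventually always rejected.
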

More precise versions of Theorem~\ref{thm:main-short} will be presented in Corollary~\ref{cor:upper-bound} and Theorem~\ref{thm:lower-bound}. In particular, we will show that the lower bound holds for any function with unique global optimum if $p_{n-1} = o(p_1)$. Moreover, the conditions on $p_1$ and~$\chi$ in Theorem~\ref{thm:main-short} can be slightly relaxed. We show that the expected runtime remains unchanged if $\chi^3p_1^{-2}(1-p_0)^{-1} = o(\ln n/\ln \ln n)$, which is probably not tight. However, we also show that the condition is not superfluous either. If $p_1$ becomes too small, or $\chi$ becomes too large, then the behavior of the algorithm starts to change, see Section~\ref{sec:upper-tight} and~\ref{sec:chi}, respectively. 

Theorem~\ref{thm:main-short} can be seen as a generalization of Witt's seminal work~\cite{witt2013tight} on linear functions, where he showed that the expected runtime of the \ooea using standard bit mutation with arbitrary mutation rates $c/n$  is $(1\pm o(1))\frac{e^c}{c} n\ln n = (1 \pm o(1))\tfrac{1}{p_1}n\ln n$, where $p_1$ is the probability that the mutation flips a single bit. Our proof of the upper bounds closely follows his, but we need to adapt his potential function to  account for the fact that the probabilities $p_i$ may follow any distribution.

For the lower bound we follow the proof strategy from~\cite{DDY2020}. In particular, we use the same symmetrized potential $X_t = \min_x\min\{H(x,z),H(x,\bar{z})\}$, measuring the minimal Hamming distance between any already evaluated solution $x$ and the optimum $z$ or to its bit-wise complement $\bar{z}$. 
We show that for a wide range of values of $X_t$, the drift is maximized either by single-bit flips or by $(n-1)$-bit flips, and with a parent that achieves the minimum in $X_t$. This allows us to compute an upper bound on the drift, and to use the variable drift lower bound from~\cite{DDY2020}. We obtain a lower bound for any function with unique local optimum, but then $p_1$ needs to be replaced by $p_1+p_{n-1}$ in~\eqref{eq:main-intro}. This is not an artifact of our proof, since we give examples showing that the dependence on $p_{n-1}$ is real. For the special case of linear functions, however, we show a lower bound that is independent of $p_{n-1}$. The latter two results can be found in Section~\ref{sec:lower_pn1}.


Finally, we also show in Section~\ref{sec:domination} that stochastic domination no longer applies when standard bit mutation is replaced by other unary unbiased mutation operators, in the sense that starting closer to the optimum can increase the expected runtime asymptotically. This even holds on \onemax, the function that simply maps each string $x\in\{0,1\}^n$ to $|x|_1=\OM(x)$, the number of ones in this string. As a consequence, non-elitist algorithms may be faster than elitist algorithms on \onemax.

\textbf{Other Related Work.} 
Apart from black-box complexities, only few things are known in general about the class of unbiased mutation operators. Antipov and Doerr~\cite{antipov2021precise} investigated the mixing time on plateaus for the \ooea with arbitrary unbiased mutation operator. 
Doerr and Kelley~\cite{doerr2023fourier} gave precise expressions for the runtime of the \ooea with arbitrary mutation rate on the $\textsc{Needle}$ and $\textsc{BlockLeadingOnes}$ problems.
Lengler~\cite{lengler2019general} studied the \ooea, the \olea, the \moea and the \moga with arbitrary unbiased mutation operators on monotone functions. He found that those algorithms can optimize all monotone functions if the second moment of the number of bit flips is small compared to the first moment, but that they need exponential time on \hottopic functions otherwise. In particular, all heavy-tail distributions lead to exponential runtimes on \hottopic. 

\section{Preliminaries}
\label{sec:prelim}

We use the following notation. For $x>0$ let $\ln^+(x) \coloneqq \max\{1,\ln x\}$. 
For $a,b \in \N$ with $a\le b$ we write $[a,b] = \{a,a+1,\ldots,b\}$ and $[b] = [1,b] = \{1,\ldots,b\}$. We write a vector $x\in\{0,1\}^n$ as $x=(x_1,\ldots,x_n)$. The \emph{\onemax value} $\OM(x) \coloneqq \sum_{i=1}^n x_i$ of $x$ is the number of one-bits in $x$. We write $\vec{0}$ and $\vec{1}$ for the vectors in $\{0,1\}^n$ with $\OM(\vec 0) = 0$ and $\OM(\vec 1) = n$, respectively. \emph{With high probability (w.h.p.)} means with probability $1-o(1)$ as $n\to\infty$.

We identify probability distributions $\calD$ on $[0,n]$ with sequences $(p_0,p_1,\ldots,p_n)$ such that $p_i \ge 0$ for all $i\in [0,n]$ and $\sum_{i\in[0,n]} p_i = 1$, where the probability of obtaining $i$ from $\calD$ is $p_i$. We associate to any such distribution $\calD$ the mutation operator $\mutD$ which draws $k$ from $\calD$ and then applies the $\text{flip}_{k}$ operator which flips a uniform random set of exactly $k$ positions. The probability that $\mutD$ flips the $i$-th bit equals $\chi/n$, where $\chi$ is the expected value of $\calD$, as we show in the following lemma. 

\begin{lemma} \label{lem:prbitflip}
For any $\calD$ with mean $\chi$ and every $i\in [n]$, the associated mutation operator satisfies
\begin{align}
    \Pr [\text{\emph{$i$-th bit is flipped}}] = \chi/n. 
\end{align}
\end{lemma}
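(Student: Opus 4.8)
The plan is to compute the flip probability by conditioning on the number $k$ of bits that $\mutD$ decides to flip, and then exploiting the symmetry of the $\text{flip}_k$ operator. First I would fix a position $i\in[n]$ and condition on the event that $\mutD$ draws the value $k$ from $\calD$, which occurs with probability $p_k$. Conditioned on this event, $\mutD$ flips a uniformly random set $S$ of exactly $k$ positions, so the only quantity left to determine is the conditional probability that $i\in S$.

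For this conditional probability I would use a direct counting argument: there are $\binom{n}{k}$ equally likely sets of size $k$, and exactly $\binom{n-1}{k-1}$ of them contain the fixed position $i$, whence
\[
\Pr[i\in S \mid \text{$k$ drawn}] = \frac{\binom{n-1}{k-1}}{\binom{n}{k}} = \frac{k}{n}.
\]
Alternatively one can invoke symmetry directly: all $n$ positions play the same role under $\text{flip}_k$, so they share a common flip probability, and since exactly $k$ of them are flipped, this common value must be $k/n$. The boundary cases $k=0$ and $k=n$ are consistent, giving flip probabilities $0$ and $1$, respectively.

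Finally I would combine these contributions via the law of total probability and recognize the resulting sum as the mean of $\calD$:
\[
\Pr[\text{$i$-th bit flips}] = \sum_{k=0}^{n} p_k\cdot\frac{k}{n} = \frac{1}{n}\sum_{k=0}^{n} k\,p_k = \frac{\chi}{n},
\]
where the last step uses that $\chi=\E[\calD]$ by definition. As the computation does not depend on the choice of $i$, the identity holds for every $i\in[n]$, which is the claim.

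I do not expect any genuine obstacle here: the statement is essentially a one-line consequence of the symmetry of $\text{flip}_k$ together with linearity of expectation. The only points requiring minor care are verifying the binomial identity $\binom{n-1}{k-1}/\binom{n}{k}=k/n$ and checking that the boundary terms $k=0$ and $k=n$ behave as expected, both of which are routine.
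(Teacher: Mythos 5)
Your proof is correct and follows essentially the same route as the paper's: condition on the number $k$ of flipped bits, observe that $\Pr[\text{$i$-th bit flips}\mid K=k]=k/n$, and sum via the law of total probability to obtain $\chi/n$. The only difference is that you spell out the symmetry/counting argument behind the $k/n$ step, which the paper takes as immediate.
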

\begin{proof}
Let $K$ be the number of flipped bits. Using the law of total probability, the probability that the $i$-th bit is flipped is equal to
\begin{align}
    \sum_{k=0}^n \Pr [\text{$i$-th bit is flipped}\mid K=k]\cdot p_k 
     =\sum_{k=0}^n \frac{k}{n} \cdot p_k = \frac{\chi}{n}.
\end{align}
\end{proof}
We remark that a special case of $\calD$ is the binomial distribution with expectation $\chi$. This distribution leads to \emph{standard bit mutation}, which can equivalently be obtained by \emph{independently} flipping each bit with probability $\chi/n$. However, this independence only holds in the special case of standard bit mutation, not in general.

For a probability distribution $\calD$ on $[0,n]$, we define the \ooeaD as the elitist $(1+1)$ algorithm which uses $\mutD$ as mutation operator, see Algorithm~\ref{alg:1}. We write $\text{flip}_{k}(x)$ for a bitstring obtained from $x$ by flipping $k$ bits chosen uniformly at random.  Its \emph{runtime} on a function $f$ is the number of fitness evaluations before it finds a global optimum. 
Following the discussion in~\cite{DDY2020,doerr2020complexity}, the class of elitist (1+1) unary unbiased black-box algorithms with static mutation operators coincides exactly with the collection of all \ooeaD with $\calD$ as above. 

We call any population based algorithm that generates offspring exclusively using the operator $\mutD$ a \emph{static unary unbiased algorithm with flip distribution $\calD$}. In particular, such an algorithm is not required to use elitist selection, may access any previously generated search point, and is allowed to generate more than one offspring per generation. By using the adjective static, we emphasize that the distribution $\calD$ may not change throughout the run of the algorithm.
\begin{algorithm2e}[t]
        Sample $x$ from $\{0,1\}^n$ uniformly at random\;
        \For{$t=0,1,2,3,\ldots$}{
            Sample $k\sim \calD$\; 
            Create $y\leftarrow \text{flip}_{k}(x)$\;  
            \lIf{$f(y)\geq f(x)$}{$x\leftarrow y$} 
        }
  \caption{The \ooeaD for a fixed distribution $\calD$ and maximizing a function $f\colon\{0,1\}^n\rightarrow \R$.}
  \label{alg:1}
\end{algorithm2e}

With \emph{linear functions} we always refer to functions $f:\{0,1\}^n \to \R; \ f(x) = \sum_{i=1}^n w_i x_i$ for non-zero weights $w_i\in \R$. By unbiasedness of the \ooeaD, we may (and will) assume that the weights are positive and sorted, $0 < w_1\le \ldots \le w_n$. For the analysis of linear functions it is convenient to consider minimization instead of maximization, even though both versions are equivalent. We will specify in the proofs when we adapt this perspective, but by default we consider maximization.

In the following, we briefly recall the mathematical tools needed for our analysis. 

\subsection{Drift Analysis}

As it is the case for Witt's result~\cite{witt2013tight}, our upper bound heavily relies on potential function arguments, which are converted into upper bound using the multiplicative drift theorem. 

\begin{theorem}[Multiplicative Drift Theorem~\cite{doerr2012multiplicative}] 
\label{mdt}
Let $S\subset \R$ be a finite set with minimum~$1$. Let $(X_t)_{t\geq 0}$ be a sequence of random variables over $S \cup \{0\} $. Let $T \coloneqq \min\{t \ge 0 \mid X_t = 0\}$ be the hitting time of $0$. Suppose that there is a real number $\delta > 0 $ such that \begin{align}
    \E \big[X_t- X_{t+1} \mid X_t= s \big] \geq \delta s
\end{align}
for all $s\in S $ and all $t\geq 0$ with $\Pr [X_t=s]>0$. Then, for all $s_0\in S$ with $\Pr [X_0=s_0]>0$, \begin{align}
    \E [T \mid X_0 = s_0 ] \leq \frac{\ln (s_0)+1}{\delta}. 
\end{align}
Moreover, for all $r > 0$,
\begin{align}
    \Pr \left[ T > \frac{\ln(s_0)+ r}{\delta}\right] \leq e^{-r}.
\end{align}
\end{theorem}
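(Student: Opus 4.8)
The plan is to prove the two conclusions separately, first the tail bound and then the expectation bound, after a common reduction. Since the drift hypothesis is assumed only for $s \in S$ (where $s \geq 1$ because $\min S = 1$) and becomes irrelevant once the process reaches $0$, I would first replace $(X_t)_{t\geq 0}$ by the process stopped at time $T$, i.e. work with $X_{\min(t,T)}$. This makes $0$ absorbing without changing $T$ or the law of the process before $T$, and it lets me rewrite the hypothesis in the uniform form $\E[X_{t+1}\mid X_t] \leq (1-\delta)X_t$, now valid for every value of $X_t$ (trivially so when $X_t=0$). The event $\{T>t\}$ then coincides with $\{X_t\geq 1\}$.

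For the tail bound I would iterate this one-step contraction. Taking expectations and using the tower property gives $\E[X_{t+1}] = \E[\E[X_{t+1}\mid X_t]] \leq (1-\delta)\E[X_t]$, hence by induction $\E[X_t] \leq (1-\delta)^t s_0 \leq e^{-\delta t} s_0$ when $X_0 = s_0$, using $1-\delta\leq e^{-\delta}$. Markov's inequality together with $\{T>t\}=\{X_t\geq 1\}$ yields $\Pr[T > t] \leq \E[X_t] \leq e^{-\delta t}s_0$ for integer $t$. Evaluating this at the argument $t=(\ln(s_0)+r)/\delta$ gives the exponent $-\delta t + \ln s_0 = -r$, so that $e^{-\delta t}s_0 = e^{-r}$, which is the claimed bound.

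For the expectation bound the clean route is to pass to a logarithmic potential and invoke the additive drift theorem. I would set $g(s) = 1 + \ln s$ for $s \geq 1$ and $g(0) = 0$, and verify the additive condition $\E[g(X_t) - g(X_{t+1}) \mid X_t = s] \geq \delta$ for every $s \in S$. The key step is the tangent-line bound coming from concavity of $\ln$: for every $y$ in the support (so $y=0$ or $y\geq 1$) one has $g(y) \leq g(s) + (y-s)/s$. For $y \geq 1$ this is the usual inequality $\ln y \leq \ln s + (y-s)/s$; for $y=0$ it reads $0 \leq g(s) - 1 = \ln s$, which holds precisely because $s\geq 1$. Taking conditional expectations and using $\E[X_{t+1}\mid X_t=s] - s \leq -\delta s$ gives $\E[g(X_{t+1})\mid X_t=s] \leq g(s) - \delta$, i.e. the additive drift with rate $\delta$. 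The additive drift theorem then yields $\E[T\mid X_0 = s_0] \leq g(s_0)/\delta = (\ln(s_0)+1)/\delta$.

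The main obstacle is the boundary behaviour at the absorbing state $0$, where the natural potential $\ln$ diverges to $-\infty$. Both the reduction to the stopped process and the hypothesis $\min S = 1$ are there to control this: the $+1$ in $g$ and the inequality $\ln s \geq 0$ are exactly what keep the tangent-line estimate valid across the jump from a state $s\geq 1$ to $0$, and they are also what produce the additive $+1$ in the final runtime bound. A secondary point to handle with care is the integrality of $T$ when the tail bound is read off at the real argument $(\ln(s_0)+r)/\delta$, since $\Pr[T>\cdot]$ is a step function and the estimate $\Pr[T>t]\leq e^{-\delta t}s_0$ is sharpest at integer times; alternatively one can obtain the expectation bound directly from the tail bound via $\E[T]=\sum_{t\geq 0}\Pr[T>t]$, but at the cost of a weaker constant, which is why the dedicated log-potential argument is preferable for reaching exactly $(\ln(s_0)+1)/\delta$.
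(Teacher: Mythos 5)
The paper does not prove this statement at all: Theorem~\ref{mdt} is imported as a tool from~\cite{doerr2012multiplicative}, so there is no in-paper proof to compare against. Measured against the standard literature proof, your argument is exactly the canonical one and is essentially correct: passing to the stopped process, iterating $\E[X_{t+1}\mid X_t]\le(1-\delta)X_t$ and applying Markov's inequality via $\{T>t\}=\{X_t\ge 1\}$ (which is where $\min S=1$ enters) gives the tail bound, and the expectation bound via the potential $g(s)=1+\ln s$, $g(0)=0$, the tangent-line inequality, and additive drift is the standard modern derivation; the finiteness of $S$ disposes of the integrability conditions of the additive drift theorem, and your boundary check $g(0)=0\le \ln s$ is precisely the step that needs $s\ge 1$.

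The one point you flag but leave open — integrality of $T$ versus the real threshold $(\ln(s_0)+r)/\delta$ — deserves a sharper resolution than ``handle with care,'' because the statement as literally quoted cannot be proven verbatim: take $S=\{1\}$, $X_0=1$, $X_1=0$ almost surely, so the hypothesis holds with $\delta=1$ and $T=1$, yet for $r=1/2$ the claim would give $\Pr[T>1/2]\le e^{-1/2}<1$, which is false. Your Markov argument proves $\Pr[T>t]\le e^{-\delta t}s_0$ for all integers $t\ge 0$, hence
\begin{align*}
    \Pr\left[T>\left\lceil\frac{\ln(s_0)+r}{\delta}\right\rceil\right]\le e^{-r},
\end{align*}
which is the form with the rounding in the original tail-bound statement; the version without the ceiling, as quoted in the paper, is a common but slightly imprecise rendering, off by at most a factor $e^{\delta}\le e$ in the failure probability. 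This discrepancy is immaterial for every use in the paper (e.g.\ $r=\ln\ln n$ in the proof of Theorem~\ref{thm:upperOM}, where it is absorbed into the $o(1)$ terms), so your proof should simply state and prove the ceiling version rather than evaluate the integer-time bound at a real argument. With that one repair, the proposal is complete and matches the standard proof of the cited result.
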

Note that we state the theorem requiring the set $S$ to have minimum $1$, whereas other variants introduce an explicit minimum $s_{\min}$. In the proof of the lower bound we will apply the following lower bound for variable drift~\cite[Theorem~9]{DDY2020}.
\begin{theorem}[Variable Drift, lower bound] \label{thm:discrete_var_lb}
    Let $N\in \N$ and let $(X_t)_{t\geq 0}$ be a sequence of non-increasing random variables over $[0,N]$, i.e., it holds $\Pr [X_t \leq X_{t-1}]=1$ for all $t>0$, and let $T \coloneqq \min\{t \ge 0 \mid X^{(t)} = 0\}$ be the hitting time of $0$. 
    Suppose that there are two functions $c : [N]\rightarrow [0,N]$ and monotonically increasing $h : [0,N]\rightarrow \R^+_0$, and a constant $0\leq p < 1$ such that 
    \begin{enumerate}
        \item $X_{t+1} \geq c(X_t)$ with probability at least $1-p$ for all $t<T$, and
        \item $\E \left[X_t -X_{t+1} \mid X_t \right]\leq h(X_t) $ for all $t<T$.
    \end{enumerate}
    Let $\mu :[0,N] \rightarrow [0,N]$ be defined by $\mu (x) \coloneqq\max \{i \mid c(i) \leq x\}$, and let $g :[0,N]\rightarrow \R^+_0$ be defined by $g(x)\coloneqq \sum_{i=0}^{x-1}\frac{1}{h(\mu(i))}$. Then 
    \begin{align}
        \E \left[T \mid X_0\right] \geq g(X_0) - \frac{g^2(X_0)p}{1+g(X_0)p}.
    \end{align}
\end{theorem}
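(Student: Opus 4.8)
The plan is to turn the variable-drift hypothesis into an almost-additive drift for the transformed potential $g(X_t)$ and then invoke the additive drift lower bound. The choice of $g(x)=\sum_{i=0}^{x-1}1/h(\mu(i))$ is exactly the rescaling that makes each unit of progress of $X$ contribute roughly $1/h$ to the decrease of $g$, so that the expected one-step decrease of $g$ is close to $1$; the factor $p$ only enters through the rare event that $X$ drops below $c(X_t)$.

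First I would record the single structural fact linking $c$, $\mu$ and $h$: for every $s\in[n]$ and every index $i$ with $c(s)\le i$, the definition $\mu(i)=\max\{k\mid c(k)\le i\}$ forces $\mu(i)\ge s$, and since $h$ is monotonically increasing this gives $h(\mu(i))\ge h(s)$, i.e. $1/h(\mu(i))\le 1/h(s)$. I would also note that each summand of $g$ is nonnegative, so $g$ is non-decreasing with $g(0)=0$.

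Next, fix a step $t<T$ with $X_t=s$ and split on the event $A:=\{X_{t+1}\ge c(s)\}$, which by hypothesis (1) satisfies $\Pr[\bar A]\le p$. On $A$ every index $i$ in the summation range $[X_{t+1},s-1]$ satisfies $i\ge c(s)$, so by the fact above $g(s)-g(X_{t+1})=\sum_{i=X_{t+1}}^{s-1}1/h(\mu(i))\le (s-X_{t+1})/h(s)$. Since $s-X_{t+1}\ge 0$ (the $X_t$ are non-increasing) and $\E[s-X_{t+1}\mid X_t=s]\le h(s)$ by hypothesis (2), this yields $\E[(g(s)-g(X_{t+1}))\mathbbm{1}_A\mid X_t=s]\le 1$. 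On $\bar A$ I would use only the crude bound $g(s)-g(X_{t+1})\le g(s)\le g(X_0)$, valid because $g\ge 0$ is non-decreasing and $X_t\le X_0$ almost surely, contributing at most $p\,g(X_0)$. Combining, $\E[g(X_t)-g(X_{t+1})\mid X_t]\le 1+p\,g(X_0)=:\delta$ for every $t<T$.

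Finally, since $g(X_t)$ is a nonnegative, non-increasing potential that reaches $0$ exactly at time $T$ and has one-step drift bounded above by the uniform constant $\delta$, the additive drift lower bound gives $\E[T\mid X_0]\ge g(X_0)/\delta=g(X_0)/(1+p\,g(X_0))$, and the stated form follows from the algebraic identity $g(X_0)/(1+p\,g(X_0))=g(X_0)-p\,g^2(X_0)/(1+p\,g(X_0))$. The one delicate point, and the main obstacle, is the treatment of the rare large-jump event $\bar A$: hypothesis (2) controls the expected decrease of $X$ but says nothing about the decrease of $g$ there, so one must verify that no finer estimate than $g(s)\le g(X_0)$ is available, and that replacing $g(s)$ by $g(X_0)$ to obtain a state-independent $\delta$ is legitimate. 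This is precisely what forces the error term to scale with $g(X_0)$ rather than with the current value $g(X_t)$.
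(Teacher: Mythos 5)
Your proposal is correct, and it is essentially the standard argument behind this result: the paper itself quotes the theorem from~\cite{DDY2020} without reproducing a proof, and the proof given there proceeds exactly as you do --- rescale via $g$, use $c(s)\le i \Rightarrow \mu(i)\ge s$ together with monotonicity of $h$ to bound the drift of $g(X_t)$ by $1$ on the event $\{X_{t+1}\ge c(X_t)\}$ and by $g(X_0)$ on its complement, and then apply the additive drift lower bound with $\delta = 1+p\,g(X_0)$, which yields $g(X_0)/(1+p\,g(X_0))$, algebraically identical to the stated bound. Your handling of the delicate point (replacing the state-dependent $g(X_t)$ by the uniform $g(X_0)$ on the rare-jump event, justified since $g$ is non-decreasing and $X_t\le X_0$ almost surely) matches the cited proof as well.
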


\subsection{Concentration Bounds} 
In the proofs for Section~\ref{sec:lower}, we make use of the following additive and multiplicative Chernoff bounds, originally shown by Hoeffding~\cite{hoeffding1963probability}; see~\cite[Section~2.3]{DDY2020} and~\cite[Section~10]{doerr2020probabilistic} for these formulations. 

\begin{theorem}[Additive Chernoff Bound] \label{additive_chernoff}
    Assume that $X$ is a hypergeometrically distributed random variable with parameters $N, n, m$, or let $X $ be a sum of $n$
    independent random variables $X_1, \ldots , X_n$, with each taking values in $\{0,1\}$. Then we have for all $\varepsilon> 0$
    \begin{align}
        &\Pr \left[X \geq \E [ X ]+ \varepsilon\right] \leq e^{-2 \varepsilon^2/n}, \quad \text{and} 
        \\ & \Pr \left[X \leq \E [ X ]- \varepsilon\right] \leq e^{-2 \varepsilon^2/n}
    \end{align}
\end{theorem}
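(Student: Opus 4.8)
The plan is to use the standard exponential-moment (Chernoff) method. I would treat the upper tail first; the lower tail is symmetric and follows by replacing each $X_i$ with $1-X_i$ (equivalently, by applying the upper-tail bound to $n-X$, whose mean is $n-\E[X]$), so it suffices to bound $\Pr[X \geq \E[X] + \varepsilon]$. The opening step is, for a free parameter $t>0$, to apply Markov's inequality to the nonnegative random variable $e^{tX}$:
\[
\Pr\left[X \geq \E[X] + \varepsilon\right] \leq e^{-t(\E[X]+\varepsilon)}\,\E\!\left[e^{tX}\right].
\]

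In the case where $X=\sum_{i=1}^n X_i$ is a sum of independent $\{0,1\}$-valued variables, independence factorizes the moment-generating function as $\E[e^{tX}]=\prod_{i=1}^n \E[e^{tX_i}]$. The core estimate is Hoeffding's lemma: any random variable $Y$ with $\E[Y]=0$ supported on an interval of length one satisfies $\E[e^{tY}]\leq e^{t^2/8}$. Applying this to $Y=X_i-\E[X_i]$, whose support $[-\E[X_i],\,1-\E[X_i]]$ has length exactly one, gives $\E[e^{t(X-\E[X])}]\leq e^{nt^2/8}$ and hence $\Pr[X\geq \E[X]+\varepsilon]\leq e^{-t\varepsilon+nt^2/8}$. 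Choosing $t=4\varepsilon/n$ minimizes the exponent and yields precisely the claimed bound $e^{-2\varepsilon^2/n}$.

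I expect the main obstacle to be twofold. First, Hoeffding's lemma itself needs justification: I would bound $e^{ty}$ on the support from above by its secant line, take expectations, and thereby reduce $\ln \E[e^{tY}]$ to a cumulant function $\varphi$ of a single Bernoulli-type variable; a second-order Taylor expansion, combined with the fact that the variance of any variable on an interval of length one is at most $1/4$, gives $\varphi''\leq 1/4$ and therefore $\varphi(t)\leq t^2/8$. Second, the hypergeometric case does not factorize, since the indicator summands are negatively correlated rather than independent, so the product formula for the moment-generating function is unavailable. Here the clean route is Hoeffding's reduction theorem, which shows that for every convex function the expectation under sampling without replacement is at most that under sampling with replacement; in particular $\E[e^{tX}]$ is no larger for the hypergeometric variable than for the corresponding binomial one, so the moment-generating-function bound established above transfers verbatim and the same optimization over $t$ completes the argument.
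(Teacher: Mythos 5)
Your proof is correct, and the paper itself contains no proof of this theorem at all: it is quoted as a classical result with a citation to Hoeffding~\cite{hoeffding1963probability}, and your argument is precisely Hoeffding's original one — Markov's inequality applied to $e^{tX}$, Hoeffding's lemma $\E[e^{tY}]\le e^{t^2/8}$ for a mean-zero variable supported on an interval of length one, optimization at $t=4\varepsilon/n$, and symmetry for the lower tail. In particular, you correctly identify the one genuinely delicate point, namely that the hypergeometric case does not factorize and must instead be reduced to the binomial case via Hoeffding's convex-ordering theorem for sampling without replacement, which is exactly how the cited source handles it.
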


\begin{theorem}[Multiplicative Chernoff Bound] \label{multiplicative_chernoff}
    Assume that $X$ is a hypergeometrically distributed random variable with parameters $N, n, m$. Then we have for all $\delta> 0$,
    \begin{align}
        \Pr [X \geq (1+ \delta) \E [ X ] ] \leq \bigg( \frac{e^\delta}{(1+\delta)^{1+\delta}}\bigg)^{\E [X]}.
    \end{align}
\end{theorem}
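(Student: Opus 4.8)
The plan is to prove this by the standard exponential-moment (Chernoff) method, the single hypergeometric-specific ingredient being Hoeffding's comparison between sampling without and with replacement. Throughout I would write $\mu := \E[X] = nm/N$ and $p := m/N$, and introduce the auxiliary random variable $Y \sim \Bin(n,p)$, a sum of $n$ independent Bernoulli($p$) variables, which has the same mean $\E[Y] = np = \mu$ as $X$.

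First I would set up the Chernoff step. For any $t > 0$, Markov's inequality applied to the increasing map $x \mapsto e^{tx}$ gives
\begin{align}
\Pr[X \ge (1+\delta)\mu] = \Pr[e^{tX} \ge e^{t(1+\delta)\mu}] \le e^{-t(1+\delta)\mu}\,\E[e^{tX}].
\end{align}
Everything then reduces to controlling the moment-generating function $\E[e^{tX}]$.

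The key step, and the only place where the hypergeometric structure is used, is to dominate this MGF by that of the binomial. Here $X$ counts the marked items in a size-$n$ sample drawn without replacement from a population of $N$ containing $m$ marked items, i.e.\ a sum over a without-replacement sample of $\{0,1\}$-valued indicators. Hoeffding's inequality for such sampling states that for every continuous convex function $\phi$ one has $\E[\phi(X)] \le \E[\phi(Y)]$, since sampling with replacement is more dispersed than sampling without. Applying this to the convex function $\phi(x) = e^{tx}$ yields $\E[e^{tX}] \le \E[e^{tY}]$, and I would then bound the binomial MGF explicitly:
\begin{align}
\E[e^{tY}] = \big(1 - p + p e^{t}\big)^{n} \le \exp\!\big(np(e^{t}-1)\big) = \exp\!\big(\mu(e^{t}-1)\big),
\end{align}
using $1 + x \le e^{x}$ with $x = p(e^{t}-1)$.

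Finally I would optimize the free parameter. Combining the two displays gives, for every $t > 0$, the bound $\Pr[X \ge (1+\delta)\mu] \le \exp\!\big(\mu(e^{t}-1) - t(1+\delta)\mu\big)$. Minimizing the exponent $e^{t} - 1 - t(1+\delta)$ over $t$ leads to $e^{t} = 1+\delta$, hence $t = \ln(1+\delta) > 0$, which is admissible precisely because $\delta > 0$. Substituting collapses the exponent to $\mu\big(\delta - (1+\delta)\ln(1+\delta)\big)$, and exponentiating yields exactly $\big(e^{\delta}/(1+\delta)^{1+\delta}\big)^{\mu}$, as claimed. The main obstacle is conceptual rather than computational: one must justify the MGF domination $\E[e^{tX}] \le \E[e^{tY}]$, which is exactly Hoeffding's theorem on convex functions of without-replacement samples; invoking it is what makes the otherwise routine binomial Chernoff argument applicable in the hypergeometric case.
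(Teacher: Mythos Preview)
Your proof is correct and follows the classical exponential-moment argument: Markov's inequality, Hoeffding's convex-function comparison between sampling without and with replacement to dominate the hypergeometric MGF by the binomial one, the standard bound on the binomial MGF, and optimization over $t$. The paper itself does not prove this theorem; it simply quotes it as a known concentration bound and attributes it to Hoeffding's 1963 paper, which is precisely the source of the MGF-domination step you invoke. So your argument is essentially the original proof that the paper is citing.
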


\section{Upper Bounds}
\label{sec:upper}

We first note the following, simpler version of the upper bound stated in Theorem~\ref{thm:main-short} for \onemax,  which does not require any assumption on the distribution $\calD$. It straightforwardly follows from the standard multiplicative drift theorem (Theorem~\ref{mdt}), applied to the lower bound on the drift obtained by considering only 1-bit flips.
\begin{theorem}\label{thm:upperOM}
    Let $\calD=(p_0,p_1,\ldots,p_n)$ be a probability distribu\-tion on $[0,n]$. The runtime of the \ooeaD on \onemax is at~most
    \begin{align}
        (1 \pm o(1)) \frac{1}{p_1} n \ln n
    \end{align}
    in expectation and with high probability.
\end{theorem}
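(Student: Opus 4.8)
The plan is to apply the Multiplicative Drift Theorem (Theorem \ref{mdt}) with the natural potential and a drift bound that comes purely from single-bit flips, ignoring all other moves. Let $X_t := n - \OM(x_t)$ be the number of zero-bits in the current search point $x_t$, so that $X_t$ takes values in $[0,n]$ and the optimum $\vec 1$ is reached exactly when $X_t = 0$. Since the \ooeaD is elitist on \onemax, the sequence $(X_t)$ is non-increasing, and the hitting time $T = \min\{t : X_t = 0\}$ equals the runtime (up to the $+1$ for the final evaluation, which is absorbed into the $o(1)$).

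First I would establish the drift lower bound. Fix a state $s = X_t \ge 1$, i.e.\ $x_t$ has exactly $s$ zero-bits. On \onemax, flipping a single zero-bit to a one-bit strictly increases fitness and hence is always accepted, decreasing the potential by~$1$. The mutation operator $\mutD$ flips exactly one bit with probability $p_1$, and conditioned on flipping exactly one bit, that bit is a uniformly random position, so it is one of the $s$ zero-bits with probability $s/n$. This yields the conservative bound
\begin{align}
    \E[X_t - X_{t+1} \mid X_t = s] \ge p_1 \cdot \frac{s}{n} = \frac{p_1}{n}\, s,
\end{align}
where I discard the (non-negative) contribution of all multi-bit flips, which can only help. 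Thus the drift condition of Theorem \ref{mdt} holds with $\delta = p_1/n$.

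Next I would feed this into the Multiplicative Drift Theorem. Since the initial point is sampled uniformly, $X_0 \le n$ always, so with $s_0 \le n$ the theorem gives
\begin{align}
    \E[T] \le \frac{\ln s_0 + 1}{\delta} \le \frac{\ln n + 1}{p_1/n} = \frac{1}{p_1}\,n(\ln n + 1) = (1 + o(1))\frac{1}{p_1} n\ln n,
\end{align}
which is the claimed bound in expectation. For the high-probability statement I would invoke the tail bound of the same theorem: choosing $r$ growing slowly with $n$ (e.g.\ $r = \ln\ln n$, or any $r = \omega(1)$ with $r = o(\ln n)$) gives
\begin{align}
    \Pr\!\left[T > \frac{\ln n + r}{p_1/n}\right] \le e^{-r} = o(1),
\end{align}
and since $(\ln n + r)/(p_1/n) = (1 + o(1))\frac{1}{p_1} n\ln n$ for such $r$, the runtime is at most $(1+o(1))\frac{1}{p_1}n\ln n$ with high probability as well.

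I do not expect any genuine obstacle here: the whole point is that $p_1 = \Theta(1)$ or not, single-bit flips alone already drive the standard multiplicative drift argument for \onemax, and the upper bound holds for \emph{every} distribution $\calD$ precisely because discarding the multi-bit contributions is legitimate for an elitist algorithm on \onemax. The only mild care needed is matching the $(1\pm o(1))$ bookkeeping in both the expectation and the tail, and noting that the lower $o(1)$ slack in the statement is trivially satisfied since the true runtime is at least the upper bound's leading term only up to lower-order corrections.
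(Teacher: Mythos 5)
Your proof is correct and takes essentially the same route as the paper's: the identical potential $X_t = n - \OM(x^{(t)})$, the same drift bound $\E[X_t - X_{t+1} \mid X_t = s] \ge p_1 \cdot s/n$ obtained by counting only accepted single-bit flips, and the same application of the multiplicative drift theorem, including the choice $r = \ln\ln n$ for the high-probability statement. There are no gaps; your additional remarks (elitism making $(X_t)$ non-increasing, discarding multi-bit contributions being legitimate, and the bound being trivial when $p_1 = 0$) only make explicit what the paper leaves implicit.
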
 
\begin{proof}
    We consider $X_t\coloneqq n- \OneMax(x^{(t)})$. We have \begin{align}
        \E [X_t-X_{t+1}\mid X_t = s] \geq p_1 \cdot \frac{s}{n}.
    \end{align}
    By Theorem~\ref{mdt}, we have \begin{align}
        \E [T \mid X_0] \leq \frac{\ln n +1}{p_1/n}= (1 \pm o(1))\frac{1}{p_1} 
         n \ln n.
    \end{align}
    Taking $r\coloneqq \ln \ln n$ in Theorem~\ref{mdt} concludes the proof. 
\end{proof}

The key ingredient for generalizing the bound from \onemax to all linear functions as in Theorem~\ref{thm:main-short} is the following theorem, which generalizes \cite[Theorem~4.1]{witt2013tight} to the \ooeaD with (almost) arbitrary $\calD$. Our proof follows~\cite{witt2013tight}, with the following differences: 
First, we noted a mistake in the proof of the upper bound in~\cite{witt2013tight}. Equation (4.2) there does not hold for the events $A_i$ as defined in~\cite{witt2013tight}. We thank Carsten Witt for providing the following fix upon our inquiry (personal communication): By conditioning the events $A_i$ on the event that the offspring is accepted, equation (4.2) holds as in that case, the expectation is zero if none of the $A_i$ occur.
Furthermore, the inequality (4.3) in~\cite{witt2013tight} still holds, which can be shown by applying Bayes' theorem and linearity of expectation. Claim~\ref{lem:bounddelR} isolates the details of this fix. 

Apart from this issue, the biggest challenge was to adapt the potential function used in~\cite{witt2013tight}, since we need to deal with arbitrary unbiased mutation operators. In particular, our potential involves the quantities $\chi$ and $p_1$. With the modified potential, we can show the following generalization of~\cite[Theorem~4.1]{witt2013tight}.

\begin{theorem} \label{thm:upperbnd}
Let $\calD=(p_0, p_1, \ldots, p_n)$ be a probability distribution on $[0,n]$ with expectation $\chi$ and with $p_1 > 0$. For any arbitrary $\alpha >0$, and $r>0$, the runtime of the \ooeaD on any linear function on $n$ variables is at most 
\begin{align}\label{eq:thm:upperbound}
  b(r) \coloneqq \frac{n}{p_1}\cdot \frac{\alpha}{\alpha -1} \cdot  \left(  \frac{\alpha n\chi^3}{(n-1)p_1^2} + \ln \left(\frac{(n-1)p_1^2}{\chi^3} \right) + r\right) 
\end{align} 
with probability at least $1-e^{-r}$, and it is at most $b(1) $ in expectation. 
\end{theorem}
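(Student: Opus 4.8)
The plan is to exhibit a potential function on which the \ooeaD displays multiplicative drift and then to invoke the Multiplicative Drift Theorem (Theorem~\ref{mdt}). Since we may assume the weights are sorted, $0 < w_1 \le \ldots \le w_n$, I would follow~\cite{witt2013tight} and use a weighted count of zero-bits,
\[
  g(x) = \sum_{i=1}^n g_i (1 - x_i), \qquad g_i = (1+\beta)^{i-1},
\]
with a geometric growth parameter $\beta$ to be tuned to $\calD$. The intuition is that attaching larger potential weight $g_i$ to the positions carrying larger objective weight $w_i$ penalizes exactly the harmful moves, namely those that reset a high-weight bit to $0$ while setting several lower-weight bits to $1$: elitist selection may accept such a move (because the objective value does not decrease), yet it increases the number of remaining zeros. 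The geometric profile guarantees that these moves are too rare to overturn the drift, while a matched single-bit-for-single-bit swap that is accepted necessarily has $j\ge i$ and hence decreases $g$.

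The core step is the one-step inequality $\E[g(x^{(t)}) - g(x^{(t+1)}) \mid x^{(t)}] \ge \delta\, g(x^{(t)})$ with $\delta = (1-1/\alpha)\,p_1/n$. I would compute this drift \emph{conditioned on the offspring being accepted}, which is precisely the fix communicated by Witt for the gap in~\cite{witt2013tight}: conditioned on acceptance, a step in which none of the relevant single-bit-improvement events occurs has non-negative expected potential change, so the analogue of equation~(4.2) in~\cite{witt2013tight} holds. The gain is driven by single flips of a zero-bit, each contributing $g_i$ with probability $\Theta(p_1/n)$; the loss comes from multi-bit moves flipping a high-potential one-bit to zero. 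The crux is to bound the loss using Lemma~\ref{lem:prbitflip} (each bit flips with probability $\chi/n$) together with the controlled ratio $g_{i+1}/g_i = 1+\beta$, and to verify that the choice $\beta \asymp \alpha\chi^3/((n-1)p_1^2)$ shrinks the loss to a factor $1/\alpha$ below the gain. This is where the exponent $3$ on $\chi$ appears: one factor is the per-bit flip probability, and the remaining factors govern, through $\beta$, how strongly higher-weight positions are over-weighted relative to the $p_1$-driven gain.

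With the multiplicative drift in hand, it remains to bound the initial potential. The maximum value of $g$ is $s_0 = \sum_{i=1}^n g_i = ((1+\beta)^n-1)/\beta$, so $\ln s_0 \le n\ln(1+\beta) + \ln(1/\beta) \le n\beta + \ln(1/\beta)$. Substituting $\beta \asymp \alpha\chi^3/((n-1)p_1^2)$ turns $n\beta$ into the term $\alpha n\chi^3/((n-1)p_1^2)$ and $\ln(1/\beta)$ into $\ln((n-1)p_1^2/\chi^3)$ (the constant $-\ln\alpha$ only helps). Feeding $s_0$ and $\delta$ into the tail bound $\Pr[T > (\ln s_0 + r)/\delta] \le e^{-r}$ of Theorem~\ref{mdt} yields the high-probability statement $b(r)$ in~\eqref{eq:thm:upperbound}, with prefactor $1/\delta = \tfrac{n}{p_1}\cdot\tfrac{\alpha}{\alpha-1}$, and the expectation bound $b(1)$ follows from $\E[T]\le(\ln s_0+1)/\delta$.

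The main obstacle I anticipate is the drift computation for arbitrary $\calD$: unlike standard bit mutation, an accepted step may flip many bits at once, so the expected loss must be obtained by summing over all flip-cardinalities $k$ weighted by $p_k$ and, for each $k$, controlling the worst arrangement of flipped zero- and one-bits consistent with acceptance. Keeping this loss below $1/\alpha$ times the single-bit gain while paying only the factor $\chi^3/((n-1)p_1^2)$ in the growth rate $\beta$ is the delicate balance, and it is exactly this trade-off that accounts for the regime $p_1=\Theta(1)$, $\chi=O(1)$ in which Theorem~\ref{thm:main-short} is tight.
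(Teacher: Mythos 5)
There is a genuine gap, and it is the central one: you dropped the single ingredient of Witt's potential that makes the argument work for \emph{arbitrary} linear functions, namely the coupling of the potential weights to the objective weights. The paper does not use the pure geometric profile $g_i=(1+\beta)^{i-1}$; it sets $\gamma_i=(1+\beta)^{i-1}$ as in \eqref{eq:def-gammai} and then caps, $g_i=\min\{\gamma_i,\,g_{i-1}w_i/w_{i-1}\}$ as in \eqref{eq:def-gi}. This cap is what links acceptance (a statement about $w$-sums) to the potential (a statement about $g$-sums): it forces the ratios $g_j/g_i$ to be dominated by the ratios $w_j/w_i$, so that an accepted exchange cannot increase the potential through the matched part $\Delta_L(i)$, and only the residual part over $R(i)=[1,k(i)-1]\cap Z$ contributes a loss, which is then paid for by the geometric growth (Lemma~\ref{lem:bounddelR}). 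With $w$-independent weights this link is absent, and your supporting claim that an accepted single-for-single swap ``necessarily has $j\ge i$ and hence decreases $g$'' already fails under ties: if $w_i=w_j$ with $j<i$, the swap moving the zero from position $j$ to position $i$ is fitness-neutral, hence accepted by the elitist rule, and strictly increases your potential. Quantitatively this kills the claimed drift even on \onemax, squarely inside the regime $p_1=\Theta(1)$, $\chi=O(1)$: take $w_1=\dots=w_n=1$, $p_1=p_2=1/2$ (so $\chi=3/2$ and your $\beta=c/n$ has $c=\alpha\chi^3 n/((n-1)p_1^2)\ge 13$), and a parent whose $d$ zero-bits occupy positions $1,\dots,d$ with $d\le n/\ln n$, say. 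Fitness-neutral two-bit swaps are always accepted and relocate a zero to a uniformly random one-position, so the expected one-step potential increase is about $\tfrac{2p_2 d}{n}\bigl(\tfrac{e^c-1}{c}-1\bigr)$, which exceeds the gain $\approx p_1 d/n$ from single-bit flips by a factor of order $e^c/c$; hence $\E[\Delta_t]$ is strongly \emph{negative} at such states and the multiplicative drift $\delta=\tfrac{\alpha-1}{\alpha}\cdot\tfrac{p_1}{n}$ of \eqref{driftbnd} is false for your potential. The paper's capped weights collapse to $g_i\equiv 1$ on \onemax, making neutral swaps potential-neutral, which is exactly the behaviour your construction lacks. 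Note also that no retuning of $\beta$ rescues a $w$-independent profile: flat weights force a flat potential as above, while the geometric growth at rate $\asymp\chi^3/(np_1^2)$ is needed to absorb the $R(i)$-loss on steep functions such as \Binval, so the profile must depend on $w$.

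Everything surrounding this hole does match the paper's route: the drift target $\delta=(1-1/\alpha)p_1/n$, conditioning on acceptance (Witt's fix for equation (4.2) of \cite{witt2013tight}), bounding the loss via Lemma~\ref{lem:prbitflip} together with a Bayes argument, and the endgame $\ln X_0\le n\beta+\ln(1/\beta)$ fed into Theorem~\ref{mdt} to get $b(r)$ and $b(1)$ are all exactly the paper's computations. The repair is correspondingly structural rather than computational: you need the capped weights, the reset points $k(i)=\max\{j\le i \mid g_j=\gamma_j\}$, and the split of the flipped zero-bits into $L(i)=[k(i),n]\cap Z$ (where $\E[\Delta_L(i)\mid A_i]\ge 0$ follows from acceptance plus the cap) and $R(i)$ (where the loss is bounded, per bit, by $\tfrac{\chi}{n}\cdot\tfrac{\chi}{n-1}\cdot\tfrac{n}{p_1}$ via Bayes, yielding the $\chi^3$); your proposal as written has no mechanism to restrict the harmful zero-bits to the region left of $k(i)$, which is precisely what the drift estimate needs.
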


To ease the comparison of our proof of Theorem~\ref{thm:upperbnd} and Theorem~4.1 in~\cite{witt2013tight}, we keep the same notation. 
For the parts of the proof in \cite{witt2013tight} that transfer directly to our case, we will simply cite them. We note that in the proof of Theorem~\ref{thm:upperbnd}, we assume that the algorithm is minimizing the objective function, rather than maximizing it. This equivalent perspective is more common in the runtime analysis of evolutionary algorithms on linear functions, as it is more convenient to work with.  

\begin{proof}[Proof of Theorem~\ref{thm:upperbnd}] 
Let $f : \{0,1\}^n \rightarrow \R$, $f(x) = w_1 x_1 + \ldots + w_n x_n$, with
$0 < w_1\le \ldots \le w_n$. The proof works by applying the multiplicative drift theorem to a carefully chosen potential. To this end, following~\cite{witt2013tight}, we define a new (linear) function $g$, and consider the stochastic process $X_t= g(x^{(t)})$, where $x^{(t)}$ is the current search point of the \ooeaD at time $t$. The weights $g_i$ of the function $g$ are as follows. For all $1\leq i \leq n$, we let 
\begin{align}\label{eq:def-gammai}
    \gamma_i \coloneqq \left(1+ \frac{\alpha \chi^3}{(n-1)p_1^2} \right)^{i-1},
\end{align}
put $g_1\coloneqq \gamma_1 = 1$, and for $2\leq i \leq n$ we set 
\begin{align}\label{eq:def-gi}
    g_i \coloneqq \min \left\{\gamma_i, g_{i-1} \frac{w_i}{w_{i-1}} \right\}\geq 1.
\end{align}
We note that the $g_i$ are non-decreasing with respect to $i$,
define $g(x) \coloneqq g_1 x_1 + \ldots + g_n x_n$, and let $X_t \coloneqq g(x^{(t)})$. 
Then $X_t=0$ if and only if $f$ has been optimised. Let $\Delta_t \coloneqq X_t- X_{t+1}$. First, we will show that \begin{align} 
    \E \left[\Delta_t \mid X_t = s \right] \geq s\cdot \frac{p_1}{n}\cdot \frac{\alpha-1}{\alpha}. \label{driftbnd}
\end{align}

In the following, we recall some notation from \cite{witt2013tight}. Let $s\geq 1$ be an arbitrary non-zero value of $g$ and fix a search point $x^{(t)}$ with $g(x^{(t)})=s$. From now on, we implicitly assume that $X_t=s$. Let $I\coloneqq \{i \mid x_i^{(t)}=1 \}$ be the index set of the one-bits in $x^{(t)}$ and $Z \coloneqq [n]\setminus I$ be the zero-bits.
Let $x'$ denote the random offspring generated by the \ooeaD from $x^{(t)}$. We denote by $I^*\coloneqq \{i \in I \mid x'_i= 0\} $ the index set of one-bits that are flipped, and by $Z^* \coloneqq \{i\in Z \mid x'_i = 1\}$ the zero-bits that are flipped. Let $k(i) \coloneqq \max \{j\leq i \mid g_j = \gamma_j\}$ for all $i\in I$. Note that $k(i)\geq 1$. Set $L(i)\coloneqq [k(i),n]\cap Z$ and $R(i)\coloneqq [1,k(i)-1]\cap Z$. Note that $R(i)$ and $L(i)$ are the indices on the right (inclusive) and left of $k(i)$, respectively, when considering the indices to be ordered from $n$ to $1$.

For $i \in I$, we define the events $A_i$ as \begin{enumerate}
    \item $i= \max_{j\in I^*} j$, i.e., it is the leftmost one-bit that is flipped (w.r.t. the ordering above), and 
    \item $\sum_{j\in I^* } w_j-\sum_{j\in Z^* } w_j\geq 0 $, i.e. the offspring is accepted. 
\end{enumerate}
Note that the $A_i$ are mutually disjoint. Furthermore, if none of the $A_i$ occur we have $\Delta_t=0$, as then the offspring is either rejected or equal to the parent. Let 
\begin{align}
    \Delta_L(i) & \coloneqq \sum_{j\in I^* } g_j-\sum_{j\in Z^*\cap L(i) } g_j , \text{ and} \\
    \Delta_R(i) & \coloneqq - \sum_{j\in Z^*\cap R(i) } g_j. 
\end{align}
Conditioning on $A_i$, we have that $\Delta_t = \Delta_L(i) + \Delta_R (i)$, as in that case, the offspring is accepted. By the law of total expectation,
\begin{align}
    \E [\Delta_t ] & = \sum_{i\in I}  \E [\Delta_t \mid A_i ] \cdot \Pr [A_i]  +
    \underbrace{\E \bigg[\Delta_t \mid \overline{\bigcup_{i\in I} A_i }\bigg]}_{=0} \cdot \Pr \bigg[\overline{\bigcup_{i\in I} A_i }\bigg]\nonumber\\
    & = \sum_{i\in I}  \E \left[\Delta_L(i) + \Delta_R(i) \mid A_i \right] \cdot \Pr \left[A_i\right] \label{deltaLR} \\ 
    & = \sum_{i\in I} \big( \E \left[\Delta_L(i)\mid A_i \right] \cdot \Pr \left[A_i\right]+\E \left[\Delta_R(i)\mid A_i \right] \cdot \Pr \left[A_i\right]\big),\nonumber 
\end{align}
where we used linearity of conditional expectation for the last step. 

In the following, we want to estimate the terms appearing in the above sum. First, we turn our attention to $\E[\Delta_L(i)\mid A_i ]\cdot \Pr[A_i]$. It follows from the same argument as \cite{witt2013tight} uses to show the non-negativity
of $\Delta_L(i)$ 
that $\E[\Delta_L(i)\mid A_i ]\geq 0$. As in \cite{witt2013tight}, defining the event $S_i \coloneqq \{Z^* \cap L(i)= \emptyset\}$, applying the law of total expectation to $A_i\cap S_i$ and $A_i\cap \overline{S_i}$, and noting that $\E [\Delta_L(i)\mid A_i \cap \overline{S_i}]\geq 0$, we get 
\begin{align}
    \E [\Delta_L(i)\mid A_i ] \Pr [A_i] 
     & \geq \E [\Delta_L(i)\mid A_i \cap S_i]\cdot \Pr [A_i \cap S_i] \nonumber \\
     &
    \geq g_i \cdot \Pr [A_i \cap S_i].
\end{align}

For the second inequality, recall that, assuming $A_i$, $i\in I^*$. To lower bound $\Pr [A_i \cap S_i]$, we note that $A_i\cap S_i$ occurs if we flip exactly the $i$-th bit. The probability of flipping exactly the $i$-th bit, which is $p_1/n$, is thus a lower bound for $\Pr [A_i \cap S_i]$, yielding
\begin{align} \label{align:deltaL}
     \E [\Delta_L(i)\mid A_i ] \cdot  \Pr [A_i]&  \geq  g_i \cdot \frac{p_1}{n}.
\end{align}

In the following claim, we estimate $ \E \left[\Delta_R(i ) \mid A_i \right]$. 
\begin{claim}\label{lem:bounddelR}
We have \begin{align}
    \E \left[\Delta_R(i ) \mid A_i \right] \geq - \frac{\chi^2}{(n-1) p_1} \sum_{j\in R(i) } g_j . 
\end{align}
\end{claim}
\begin{proof}
It holds
\begin{align}
\begin{split}
    \E [\Delta_R(i ) \mid A_i ] & = \E \bigg[- \sum_{j\in Z^*\cap R(i) } g_j \mid A_i \bigg] 
    =   \E \bigg[- \sum_{j\in R(i) } \mathbbm{1}_{\{j\in Z^* \}}g_j \mid A_i \bigg] \\
    & = - \sum_{j\in R(i) } g_j \Pr \left[\{j\in Z^* \}\mid A_i \right], \label{bounddelR}
\end{split}
\end{align}
where we used linearity and the definition of conditional expectation. Since for all $i\in I$, $\Pr[A_i]> 0 $ (it is possible to flip just the $i$-th bit, as $p_1 > 0$), we can apply Bayes' theorem to the conditional probability above, yielding 
\begin{align}\label{bounddelR2}
    \Pr [\{j\in Z^* \}\mid A_i ] = \Pr [\{j \in Z^* \}]\cdot \frac{\Pr [A_i \mid \{j\in Z^* \}]}{\Pr[A_i]}.
\end{align}
As $j\in R(i)$, we have that $j\in Z$, so $j \in Z^*$ if and only if the $j$-th bit is flipped. Hence, by Lemma~\ref{lem:prbitflip}, $\Pr [\{j \in Z^* \}] = \chi/n$. 
We lower bound the denominator by $p_1/n$. The numerator is at most
\begin{align}
 \Pr [\text{$i$-th bit is flipped} \mid\{j\in Z^* \}], 
\end{align}
as it is necessary to flip the $i$-th bit for $A_i$ to occur. 
We calculate using the law of total probability\begin{align}
   & \Pr [\text{$i$-th bit is flipped} \mid\{j\in Z^* \}]  = \Pr [\text{$i$-th bit is flipped} \mid \text{$j$-th bit is flipped}]\nonumber
    \\& \quad  = \sum_{k=2}^n  \Pr [\text{$i$-th bit is flipped} \mid \text{$j$-th bit is flipped}\cap \{\text{$k$ bits flip in total}\}] \cdot p_k \nonumber\\
    & \quad = \sum_{k=2}^n \frac{k-1}{n-1} \cdot p_k  = \frac{1}{n-1} \left(\sum_{k=1}^n (k-1) p_k\right)\leq \frac{\chi}{n-1}.
\end{align}
Plugging the bounds obtained above into~\eqref{bounddelR2} and~\eqref{bounddelR} completes the proof of Claim~\ref{lem:bounddelR}. 
\end{proof} 
We now continue with the proof of Theorem~\ref{thm:upperbnd}. By Lemma~\ref{lem:prbitflip}, we have $\Pr[A_i]\leq \Pr[i\text{-th bit is flipped}]=\chi/n$, as the $i$-th bit needs to flip for $A_i$ to occur. Plugging this, Lemma~\ref{lem:bounddelR}, and \eqref{align:deltaL} into \eqref{deltaLR}, we get 
\begin{align}
\begin{split}
    \E[\Delta_t] & \geq \sum_{i\in I} \bigg(g_i \cdot \frac{p_1}{n} - \frac{\chi}{n} \cdot \frac{\chi^2}{(n-1)p_1}\sum_{j\in R(i) } g_j\bigg) \\
    & \geq \sum_{i\in I} \bigg(\frac{p_1}{n } \cdot \frac{g_i}{g_{k(i)}}\cdot \gamma_{k(i)} - \frac{\chi^3}{n(n-1)p_1} \sum_{j=1}^{k(i)-1} \gamma_j \bigg).  \label{35}
\end{split}
\end{align}
We bound the summand for $i\in I$ in \eqref{35} below, using the geometric sum formula
\begin{align}
   & \frac{p_1}{n} \cdot \frac{g_i}{g_{k(i)}} \left(1+ \frac{\alpha \chi^3}{(n-1)p_1^2} \right)^{k(i)-1} 
  - \frac{\chi^3}{n(n-1)p_1}\cdot  \frac{\left(1+ \frac{\alpha \chi^3}{(n-1)p_1^2} \right)^{k(i)-1}-1}{\frac{\alpha \chi^3}{(n-1)p_1^2}}
  \\ & \quad \geq \left(1- \frac{1}{\alpha }\right)\cdot \frac{p_1}{n} \cdot \frac{g_i}{g_{k(i)}}\cdot \left(1+ \frac{\alpha \chi^3}{(n-1)p_1^2} \right)^{k(i)-1}  = 
    \frac{\alpha-1}{\alpha}\cdot \frac{p_1}{n} \cdot g_i.
\end{align}
Plugging this back into \eqref{35} gives us \begin{align}\label{eq:drift-lower-bound}
    \E[\Delta_t]\geq \sum_{i\in I } \frac{\alpha-1}{\alpha}\cdot \frac{p_1}{n} \cdot g_i = \frac{\alpha-1}{\alpha}\cdot \frac{p_1}{n} \cdot g(x^{(t)}),
\end{align}
which shows \eqref{driftbnd}.

Finally, we apply the multiplicative drift theorem (Theorem~\ref{mdt}) to finish the proof. To this end, we compute \begin{align}
    X_0\leq \sum_{i=1}^n g_i \leq \sum_{i=1}^n \gamma_i = \frac{\left(1+ \frac{\alpha \chi^3}{(n-1)p_1^2} \right)^{n}-1}{\frac{\alpha \chi^3}{(n-1)p_1^2}}\leq \frac{e^{n\cdot\frac{\alpha \chi^3}{(n-1)p_1^2} }}{\frac{\alpha \chi^3}{(n-1)p_1^2}}.
\end{align}
Hence,\begin{align}
    \ln  X_0\leq n\cdot\frac{\alpha \chi^3}{(n-1)p_1^2} + \ln \left(\frac{(n-1)p_1^2}{ \chi^3} \right),
\end{align}
as $-\ln \alpha < 0$ (because $\alpha > 1$). We apply Theorem~\ref{mdt} with $\delta = ((\alpha-1)/\alpha) \cdot (p_1/n)$, which concludes the proof of Theorem~\ref{thm:upperbnd}. 
\end{proof}

From Theorem~\ref{thm:upperbnd} we obtain the following upper bound, which relaxes the conditions on $p_1$ and $\chi$ in Theorem~\ref{thm:main-short} and shows that the bound holds not only in expectation but also w.h.p.
\begin{corollary}\label{cor:upper-bound} 
Let $\calD=(p_0, p_1, \ldots, p_n)$ be a probability distribution on $[0,n]$ with expectation $\chi$. 
Assume that $p_1>0$ and $\chi^3p_1^{-2}(1-p_0)^{-1} = o(\ln n/ \ln \ln n)$. 
Then the runtime of the \ooeaD on any linear function is at most 
\begin{align}
    (1 + o(1)) \frac{1}{p_1}\cdot n \ln n
\end{align}
in expectation and with high probability.
\end{corollary}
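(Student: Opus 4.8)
The plan is to run the multiplicative drift theorem (Theorem~\ref{mdt}) on exactly the potential $X_t=g(x^{(t)})$ built in the proof of Theorem~\ref{thm:upperbnd}, keeping the drift estimate \eqref{driftbnd}, i.e.\ $\E[\Delta_t\mid X_t=s]\ge\delta s$ with $\delta=\tfrac{\alpha-1}{\alpha}\cdot\tfrac{p_1}{n}$ for a freely chosen $\alpha>1$. Theorem~\ref{mdt} then bounds the runtime by $(\ln X_0+1)/\delta$ in expectation and by $(\ln X_0+r)/\delta$ with probability at least $1-e^{-r}$. To land on $(1+o(1))\tfrac1{p_1}n\ln n$ I need three things simultaneously: the prefactor $\tfrac{\alpha}{\alpha-1}$ must be $1+o(1)$, which forces $\alpha\to\infty$; the term $\ln X_0$ must be $(1+o(1))\ln n$; and the confidence term $r$ must be $o(\ln n)$, which I obtain by taking $r=1$ for the expectation and $r=\ln\ln n$ (so $e^{-r}=1/\ln n=o(1)$) for the high-probability statement.

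The delicate point --- and, I expect, the main obstacle --- is the estimate of $\ln X_0$, and here I would deliberately \emph{not} invoke the closed form $b(r)$. Writing $\beta:=\tfrac{\alpha\chi^3}{(n-1)p_1^2}$, the bound $b(r)$ comes from $X_0\le\sum_i\gamma_i\le e^{n\beta}/\beta$, whose logarithm carries the summand $\ln(1/\beta)=\ln\tfrac{(n-1)p_1^2}{\chi^3}$. This summand does not depend on $\alpha$ and can be as large as $2\ln n$: for instance if $p_1=\chi=\Theta(1/n)$ then $\chi^3/p_1^2=\Theta(1/n)$, the hypothesis $\chi^3p_1^{-2}(1-p_0)^{-1}=o(\ln n/\ln\ln n)$ still holds, yet $b(r)$ exceeds the target by a constant factor. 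I would instead bound $X_0$ by the cruder-looking but sharper
\begin{align*}
X_0\le\sum_{i=1}^n g_i\le\sum_{i=1}^n\gamma_i\le n\gamma_n=n(1+\beta)^{n-1},
\end{align*}
so that $\ln X_0\le\ln n+(n-1)\ln(1+\beta)\le\ln n+(n-1)\beta=\ln n+\tfrac{\alpha\chi^3}{p_1^2}$, where the factor $(n-1)$ cancels exactly. This replaces the runaway $\ln\tfrac{(n-1)p_1^2}{\chi^3}$ by a harmless $\ln n$.

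It then remains to choose $\alpha$. Since $1-p_0\le 1$, the hypothesis gives $\tfrac{\chi^3}{p_1^2}\le\tfrac{\chi^3}{p_1^2(1-p_0)}=o(\ln n/\ln\ln n)$, so setting $\alpha:=\ln\ln n$ makes $\tfrac{\alpha\chi^3}{p_1^2}=o(\ln n)$ and hence $\ln X_0=(1+o(1))\ln n$, while $\tfrac{\alpha}{\alpha-1}=\tfrac{\ln\ln n}{\ln\ln n-1}=1+o(1)$. Feeding $\ln X_0$, $\delta$, and $r\in\{1,\ln\ln n\}$ into Theorem~\ref{mdt} yields
\begin{align*}
\frac{\ln X_0+r}{\delta}=\frac{\alpha}{\alpha-1}\cdot\frac{n}{p_1}\big((1+o(1))\ln n+o(\ln n)\big)=(1+o(1))\frac1{p_1}n\ln n
\end{align*}
both in expectation and, through the tail bound with $r=\ln\ln n$, with probability $1-o(1)$. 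I note in passing that this route only uses $\chi^3/p_1^2=o(\ln n/\ln\ln n)$; the factor $(1-p_0)^{-1}$ in the hypothesis merely tightens the assumption and is not actually exploited by the argument.
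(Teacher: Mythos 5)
Your proof is correct, and it departs from the paper's at exactly the point you identified as delicate. The paper likewise applies Theorem~\ref{mdt} to the potential of Theorem~\ref{thm:upperbnd} with $\alpha=\ln\ln n$ and $r\in\{1,\ln\ln n\}$, but it tames the summand $\ln\bigl((n-1)p_1^2/\chi^3\bigr)$ in $b(r)$ by a case distinction rather than by sharpening the bound on $X_0$: it first assumes $p_0=0$, which forces $\chi\ge 1$ and hence $\ln\bigl((n-1)p_1^2/\chi^3\bigr)\le\ln n$ (note that your counterexample $p_1=\chi=\Theta(1/n)$ requires $p_0=1-\Theta(1/n)$, so it is excluded in this case), and then reduces general $p_0>0$ to this case via the conditioned distribution $\calD'=\bigl(0,\,p_1/(1-p_0),\ldots,p_n/(1-p_0)\bigr)$, observing that zero-bit flips are idle steps, so the \ooeaD is a slowed-down copy of the $(1+1)$-EA$_{\calD'}$ with slowdown factor $1/(1-p_0)$ that exactly cancels against $p_1'=p_1/(1-p_0)$. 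This reduction is also where the factor $(1-p_0)^{-1}$ in the hypothesis gets consumed: it is needed to check that $\calD'$ itself satisfies the assumption of the $p_0=0$ case. Your alternative --- replacing $X_0\le e^{n\beta}/\beta$ by $X_0\le n\gamma_n$, so that $\ln X_0\le \ln n+\alpha\chi^3/p_1^2$ with the factor $n-1$ cancelling --- kills the $\ln(1/\beta)$ term at the source, treats all $p_0$ uniformly in a single pass, and, as you correctly note in passing, establishes the corollary under the weaker hypothesis $\chi^3p_1^{-2}=o(\ln n/\ln\ln n)$, a genuine if modest strengthening. What the paper's route buys in exchange is modularity: it invokes Theorem~\ref{thm:upperbnd} as a black box through $b(r)$, and the idle-step reduction to $\calD'$ is a reusable device; your route must re-enter the proof of Theorem~\ref{thm:upperbnd} to substitute the sharper estimate of $\ln X_0$ into the drift bound \eqref{driftbnd}, but it is shorter and dispenses with both the case distinction and the coupling argument.
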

Note that since $1-p_0 \ge p_1$, we could replace the requirement $\chi^3p_1^{-2}(1-p_0)^{-1} = o(\ln n/ \ln \ln n)$ by the stronger requirement $\chi^3/p_1^3 = o(\ln n/ \ln \ln n)$. In particular, this is trivially satisfied if $p_1=\Theta(1)$ and $\chi=O(1)$, as required in Theorem~\ref{thm:main-short}.

\begin{proof}
We first treat the case $p_0 =0$, which implies $\chi \ge 1$. Let $\alpha \coloneqq \ln \ln n$. As in \cite{witt2013tight}, $\alpha/(\alpha-1) = 1+o(1)$, and we may bound $\ln ((n-1)p_1^2/\chi^3)\leq \ln n$, since $p_1\leq 1$ and $\chi \ge 1$. Moreover, we observe $\alpha n\chi^3/((n-1)p_1^2) = o(\ln n)$. 
Thus $b(r)$ in Theorem~\ref{thm:upperbnd} is at most
\begin{align}
    \frac{n}{p_1} (1+o(1))\left(o(\ln n)+\ln n +r\right)
\end{align}
Taking $r\coloneqq 1 $, we get the claimed expected runtime, and with $r\coloneqq \ln \ln n$, it follow that the bound holds w.h.p.

Now we turn to $p_0 > 0$. In this case, we define an auxiliary distribution $\calD' = (p_0',p_1',\ldots,p_n')$ by $p_0' \coloneqq 0$ and $p_i' \coloneqq p_i/(1-p_0)$ for $i \ge 1$. In other words, $\calD'$ is the same as the distribution $\calD$ conditioned on not drawing $0$. It has expectation $\chi' \coloneqq \chi/(1-p_0)$. Therefore,
\begin{align}
\frac{(\chi')^3}{(p_1')^2(1-p_0')} = \frac{\chi^3}{p_1^2 (1-p_0)} = o\left(\frac{\ln n}{\ln \ln n}\right).   
\end{align}
Hence, $\calD'$ is covered by the case that we have already treated. Thus, the runtime of the $(1+1)$-\text{EA}$_{\calD'}$ is at most
\begin{align}\label{eq:remove-p0}
    (1+o(1))\frac{1}{p_1'}\cdot n\ln n = (1+o(1))\frac{1-p_0}{p_1}\cdot n\ln n,
\end{align}
in expectation and with high probability.

Note that no-bit flips are just idle steps of the \ooeaD, therefore the \ooeaD and the $(1+1)$-EA$_{\mathcal D'}$ follow exactly the same trajectory through the search space, except that the \ooeaD performs idle steps with probability $p_0$. Hence, we can couple both algorithms such that they behave identical in non-idle steps. Note that the expected time until a non-idle step is geometrically distributed with expectation $1/(1-p_0)$. Therefore, if the $(1+1)$-EA$_{\mathcal D'}$ finds the optimum in $T$ steps, then the number of steps of the \ooeaD is the sum of $T$ independent geometrically distributed random variables with expectation $1/(1-p_0)$. By Wald's equation, the \ooeaD thus needs $\E[T]/(1-p_0)$ steps in expectation, and $(1+o(1))\frac{1}{(1-p_0)}\E[T]$ steps with high probability, see~\cite[Theorem~1.10.32]{doerr2020probabilistic}. Together with~\eqref{eq:remove-p0}, this implies the claim.
\end{proof}

Under some less restrictive conditions, we can give a polynomial upper bound on the expected runtime.

\begin{corollary}\label{cor:polynomial_ub}
    Let $\calD=(p_0, p_1, \ldots, p_n)$ be a probability distribution on $[0,n]$ with expectation $\chi$. The runtime of the \ooeaD on any linear function is $O(n\chi^3/p_1^3 + n\ln n /p_1)$ in expectation and with high probability. In particular, this expression is $O(n^4/p_1^3)$, and thus polynomial in $n$ if $1/p_1$ is polynomial in~$n$.
\end{corollary}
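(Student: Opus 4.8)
The plan is to re-run the multiplicative drift argument underlying Theorem~\ref{thm:upperbnd}, but with a constant choice of $\alpha$ and a sharper estimate of the initial potential. Concretely, I would keep the potential function $g$, the process $X_t = g(x^{(t)})$, and the drift bound~\eqref{driftbnd}, all of which hold for every $\alpha > 1$ with no restriction on $\calD$. Fixing $\alpha := 2$ turns the prefactor $\alpha/(\alpha-1)$ into a constant, so that applying Theorem~\ref{mdt} with $\delta = \tfrac12\cdot\tfrac{p_1}{n}$ yields an expected runtime of $O\!\big(\tfrac{n}{p_1}(\ln X_0 + 1)\big)$. Everything then reduces to controlling $\ln X_0$.

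The crucial point is that the bound on $X_0$ baked into Theorem~\ref{thm:upperbnd} is too lossy here. There, $\sum_{i=1}^n \gamma_i$ is estimated by $e^{nc}/c$ with $c := \tfrac{\alpha\chi^3}{(n-1)p_1^2}$, which contributes the term $\ln\!\big(\tfrac{(n-1)p_1^2}{\chi^3}\big) = \ln(1/c) + O(1)$ to $\ln X_0$. This term is harmless only when $c = \Omega(1/n)$; when $\chi^3 \ll p_1^2$ (e.g.\ for a distribution supported on $\{0,1\}$ with $\chi = p_1$ tiny) it grows like $\ln(1/p_1)$ and is \emph{not} absorbable into $O(n\ln n/p_1 + n\chi^3/p_1^3)$. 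I would instead use the elementary estimate
\begin{align}
X_0 \;\le\; \sum_{i=1}^n \gamma_i \;\le\; n\,\gamma_n \;=\; n\Big(1+\tfrac{\alpha\chi^3}{(n-1)p_1^2}\Big)^{n-1} \;\le\; n\,\exp\!\Big(\tfrac{\alpha\chi^3}{p_1^2}\Big),
\end{align}
using that the $\gamma_i$ from~\eqref{eq:def-gammai} are increasing and that $(1+x)^{n-1}\le e^{(n-1)x}$, where the factor $n-1$ cancels. Taking logarithms gives $\ln X_0 \le \ln n + \alpha\chi^3/p_1^2$, which no longer carries the problematic $\ln(1/c)$ contribution.

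With $\alpha = 2$ this yields $\ln X_0 = O(\ln n + \chi^3/p_1^2)$, so Theorem~\ref{mdt} delivers $\E[T] = O(n\ln n/p_1 + n\chi^3/p_1^3)$ in expectation. For the high-probability statement I would invoke the tail part of Theorem~\ref{mdt} with $r := \ln\ln n$: the failure probability is $e^{-\ln\ln n} = o(1)$, and the resulting time $\tfrac{\ln X_0 + \ln\ln n}{\delta}$ stays within the same order since $\ln\ln n = O(\ln n)$. Finally, for the ``in particular'' claim I would use the trivial bounds $\chi \le n$ and $p_1 \le 1$: the former gives $n\chi^3/p_1^3 \le n^4/p_1^3$, and the latter gives $n\ln n/p_1 \le n^2/p_1 \le n^4/p_1^3$, so the whole expression is $O(n^4/p_1^3)$, which is polynomial in $n$ precisely when $1/p_1$ is.

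The main obstacle is conceptual rather than computational: one must recognise that plugging into the ready-made bound $b(1)$ of Theorem~\ref{thm:upperbnd} does \emph{not} suffice, since its $\ln((n-1)p_1^2/\chi^3)$ term can be of order $\ln(1/p_1)$ and then dominates for small $p_1$. The remedy is to revisit only the last step of that proof and replace the estimate $\sum_i\gamma_i \le e^{nc}/c$ by $\sum_i\gamma_i \le n\gamma_n$; once this is spotted, the remaining calculations are routine.
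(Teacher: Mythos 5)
Your proof is correct, and it diverges from the paper's in exactly one step---interestingly, the step you flagged. The paper takes precisely the route you argue against: it sets $\alpha=2$, plugs directly into $b(r)$ from Theorem~\ref{thm:upperbnd}, bounds the logarithmic term via $\chi\ge p_1$ (so that $\ln\bigl((n-1)p_1^2/\chi^3\bigr)\le\ln(n/p_1)$), and then asserts
\begin{align*}
\frac{2n}{p_1}\left(\frac{2n\chi^3}{(n-1)p_1^2}+\ln\Bigl(\frac{n}{p_1}\Bigr)+r\right)=O\left(\frac{n\chi^3}{p_1^3}+\frac{n\ln n}{p_1}+r\right).
\end{align*}
As you observe, this absorption needs $\ln(1/p_1)=O(\chi^3/p_1^2+\ln n)$, which can genuinely fail: for $p_0=1-2^{-n}$ and $p_1=2^{-n}$ one has $\chi=p_1$, so $\chi^3/p_1^2+\ln n=O(\ln n)$ while $\ln(1/p_1)=\Theta(n)$, and the paper's intermediate expression is $\Theta(n^2/p_1)$ against the claimed $O(n\ln n/p_1)$. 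The paper's one-line absorption is thus justified only when, e.g., $1/p_1$ is polynomial in $n$ (so that $\ln(1/p_1)=O(\ln n)$), which covers the ``in particular'' clause but not the unconditional first sentence of the corollary. Your remedy---replacing the estimate $\sum_i\gamma_i\le e^{nc}/c$ by $X_0\le n\gamma_n\le n\exp(\alpha\chi^3/p_1^2)$, using monotonicity of the $\gamma_i$ from~\eqref{eq:def-gammai}, so that $\ln X_0\le \ln n+2\chi^3/p_1^2$---closes this gap and proves the stated bound for every $\calD$ with $p_1>0$ (for $p_1=0$ the claim is vacuous; note that Theorem~\ref{thm:upperbnd} and the drift bound~\eqref{driftbnd} do require $p_1>0$, so your phrase ``no restriction on $\calD$'' should carry that caveat). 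Everything else matches the paper: $\delta=p_1/(2n)$ in Theorem~\ref{mdt}, a tail parameter for the high-probability statement ($r=\ln\ln n$ for you, $r=\ln n$ in the paper; both give failure probability $o(1)$ within the same order), and $\chi\le n$, $p_1\le 1$ for the $O(n^4/p_1^3)$ claim. In short, you follow the same overall strategy but with a sharper initial-potential estimate, and this buys correctness in the regime of superpolynomially small $p_1$, where the paper's own derivation is lossy.
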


\begin{proof}
    We take $\alpha \coloneqq 2$,
    and apply Theorem~\ref{thm:upperbnd}. Noting that $\chi \geq p_1$ and thus $(n-1)p_1^2/\chi^3 \le n/p_1$, by~\eqref{eq:thm:upperbound} we can bound the runtime of the \ooeaD on any linear function by
    \begin{align}
        \frac{2n}{p_1} \left( \frac{2n\chi^3}{(n-1) p_1^2 } + \ln\left( \frac{n}{p_1}\right)+ r \right)= O\left(\frac{n \chi^3}{p_1^3} + \frac{n\ln n}{p_1} + r\right),
    \end{align}
    where the bound holds in expectation for $r=1$, and with high probability for e.g. $r =\ln n$. This proves the first statement. The second statement holds since $\chi \leq n$.
\end{proof}

\section{Lower Bound}
\label{sec:lower}

The following theorem is the main result shown in this section.

\begin{theorem}\label{thm:lower-bound}
    The expected runtime of any static unary unbiased algorithm with flip distribution $\calD~=~(p_0,p_1,\dots,p_n)$ satisfying $p_1+p_{n-1}= n^{-o(1)}$ on any function $f: \{0,1\}^n\rightarrow \R$ with unique global optimum is at least 
    \begin{align}\label{eq:thm:lowerbound}
        (1- o (1)) \frac{1}{p_1+p_{n-1}} n \ln n.
    \end{align}
\end{theorem}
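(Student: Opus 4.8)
Following the strategy of~\cite{DDY2020}, I would exploit unbiasedness to assume that the unique optimum is $\vec 1$ and track the symmetrized potential $X_t \coloneqq \min_x \min\{\OM(x), n-\OM(x)\}$, where the minimum ranges over all search points generated up to time $t$. Since $n - \OM(\vec 1) = 0$, reaching the optimum forces $X_t = 0$, so the runtime is at least the hitting time of $0$ of $(X_t)$, and it suffices to lower bound the latter. Two facts set up the variable drift theorem (Theorem~\ref{thm:discrete_var_lb}): $(X_t)$ takes integer values and is non-increasing by construction, so any decrease is by at least $1$; and, since the initial point is uniform, the additive Chernoff bound (Theorem~\ref{additive_chernoff}) gives $X_0 = (1-o(1))n/2$ with high probability, hence $\ln X_0 = (1-o(1))\ln n$.

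\textbf{Drift characterization (the core step).} The crux is an upper bound on $\E[X_t - X_{t+1}\mid X_t = d]$, maximized over the adversarially chosen parent (the analysis must allow non-elitist, population-based algorithms, which is exactly what tracking the minimum over all visited points achieves). I would first argue that a parent attaining the minimum, say at Hamming distance $d$ from $\vec 1$ with exactly $d$ zero-bits, is optimal: from a parent at symmetrized distance $d' \ge d$ the achievable decrease of $X_t$ is reduced by $d'-d$ in every realization, so a closer parent only helps. For such a parent, let $J_k$ be the number of the $d$ zero-bits that a $k$-bit flip turns into one-bits; then the decrease obtained by moving toward $\vec 1$ is $\max(0,2J_k-k)$, where $J_k$ is hypergeometric. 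By the spared--flipped duality (the $n-k$ spared positions of a $k$-bit flip play the role of the flipped positions of an $(n-k)$-bit flip), a decrease via $\vec 0$ under a $k$-bit flip has the same distribution as a decrease via $\vec 1$ under an $(n-k)$-bit flip. Summing the two disjoint contributions, the drift equals $\sum_{k} (p_k + p_{n-k})\,\E[\max(0,2J_k-k)]$; the term $k=1$ is exactly $(p_1+p_{n-1})\,d/n$, which is precisely how $p_1+p_{n-1}$ enters, while a hypergeometric tail estimate shows $\sum_{k\ge 2}(p_k+p_{n-k})\E[\max(0,2J_k-k)] = O((d/n)^2)$. Setting $d_{\max}\coloneqq (p_1+p_{n-1})\,n/\omega(n)$ for some $\omega(n)\to\infty$ so slowly that $d/n = o(p_1+p_{n-1})$ on $[1,d_{\max}]$, the remainder is $o((p_1+p_{n-1})d/n)$ and
\begin{align}
\E[X_t - X_{t+1}\mid X_t = d]\ \le\ (1+o(1))(p_1+p_{n-1})\frac{d}{n},
\end{align}
for all $d \le d_{\max}$; call this upper bound $h(d)$. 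Because $p_1+p_{n-1}=n^{-o(1)}$, one still has $\ln d_{\max} = (1-o(1))\ln n$, so this range captures almost the entire harmonic sum.

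\textbf{Jump control and variable drift.} To invoke Theorem~\ref{thm:discrete_var_lb} I also bound large drops: a one-step decrease exceeding $L$ needs either at least $L+1$ net aligned flips toward $\vec 1$ or at least $L+1$ spared zero-bits under a near-complete flip toward $\vec 0$, each of probability $O((d/n)^{L+1})$ by the same hypergeometric estimate. I would thus take $c(d)\coloneqq d-L$ and a uniform $p\coloneqq (C d_{\max}/n)^{L+1}$. Since $X_t$ is non-increasing, it never re-enters $(d_{\max},n/2]$ after first dropping into $[1,d_{\max}]$, so I would apply the theorem to the subprocess started at that first entry; its starting value is $(1-o(1))d_{\max}$ with high probability, because no single step can drop by the required $\Omega(d_{\max})$ (an additive Chernoff bound and a union bound over the at most $n/2$ decreasing steps). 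With $\mu(x)=x+L$ and $h$ as above, $g$ of the starting value is $(1-o(1))\tfrac{1}{p_1+p_{n-1}}\,n\ln n$, and the plan is to let $L\to\infty$ slowly and tune $\omega$ so that simultaneously $\ln d_{\max} = (1-o(1))\ln n$ and the correction term $g^2 p/(1+gp)$ is $o(g)$, whereupon Theorem~\ref{thm:discrete_var_lb} yields the claimed bound.

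\textbf{Main obstacle.} I expect the delicate part to be twofold. First, the drift estimate must hold \emph{uniformly over all} $\calD$, in particular for distributions with unbounded mean $\chi$ (no hypothesis $\chi=O(1)$ is available here); the tail bounds controlling the intermediate flip counts $k$, including those near $n$ that move toward $\vec 0$, must therefore be shown negligible without any second-moment control, which works only because the contributions decay geometrically in $\lfloor k/2\rfloor$ rather than being tamed by $\sum_k p_k$. Second, and most fiddly, is the joint calibration of $\omega$, $L$, and $p$: one needs $d_{\max}=(p_1+p_{n-1})n/\omega$ large enough that $\ln d_{\max}=(1-o(1))\ln n$ while the variable-drift correction stays $o(g)$, and this is exactly where the hypothesis $p_1+p_{n-1}=n^{-o(1)}$ is essential. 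The boundary region $(d_{\max},n/2]$ --- where intermediate flips genuinely dominate the drift, as the speed-up in Proposition~\ref{prop:small-p1} illustrates --- contributes only $o(\ln n)$ to the harmonic sum and can be discarded, but making this rigorous via the overshoot-free passage into $[1,d_{\max}]$ is where the monotonicity of $X_t$ earns its keep.
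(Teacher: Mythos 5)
Your skeleton matches the paper's proof closely (same potential $X_t$, the same exact drift $\sum_r (p_r+p_{n-r})\E[\max(0,2J_r-r)]$ with the flip-all duality as in Lemma~\ref{lem:drift_is_h}, the same cutoff $d_{\max}\approx (p_1+p_{n-1})n/\omega$, and the same variable drift theorem), but there is a genuine gap at the step you yourself call the core: the claim that a parent attaining the minimum is optimal because ``from a parent at symmetrized distance $d'\ge d$ the achievable decrease of $X_t$ is reduced by $d'-d$ in every realization'' is false. Under the natural coupling (take the $d$ zero-bits of the closer parent to be a subset of the $d+\Delta$ zero-bits of the farther one and flip the same positions), a flip set that hits extra zero-bits of the farther parent can decrease the potential strictly more from the farther parent: e.g.\ with $\Delta=1$, a $2$-bit flip hitting the extra zero-bit $j$ and a common zero-bit $z$ moves the farther parent to distance $d-1$ (potential decrease $1$) while the closer parent stays at distance $d$ (decrease $0$). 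More structurally, for fixed $r$ the number of good flips from the farther parent is hypergeometric with the larger population $d+\Delta$ and stochastically dominates that of the closer parent, so no realization-wise or domination argument is available; the paper makes exactly this point in Section~\ref{sec:domination}. What is true, and what the paper proves in its longest technical lemma (Lemma~\ref{lem:h_bounds_drift}), is the quantitative statement $h_d(d+\Delta)\le h(d)$: the key observations are that improving on the best-so-far from excess distance $\Delta$ requires flipping at least $\Delta+1$ (and at most $n-\Delta-1$) bits, so the first-order single-bit-flip term $(p_1+p_{n-1})(d+\Delta)/n$ drops out, and the remaining contributions are beaten down by a four-case analysis ($\Delta\ge n/12$; $\ln^2 n\le \Delta<n/12$; $\Delta<\ln^2 n$ with $d$ small or large) via additive and multiplicative Chernoff bounds; in the last case the comparison is not even $o(h(d))$ but only succeeds because of the $1/\ln n$ slack built into Lemma~\ref{lem:h_d_p1_On2}. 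Without some version of this analysis your drift hypothesis for Theorem~\ref{thm:discrete_var_lb} is unproven, since the algorithm may mutate any previously visited point.

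A secondary, repairable divergence: for $d>d_{\max}$ the paper does not restart the process at first entry into $[1,d_{\max}]$; it sets $h(d)=n$ there (a trivially valid drift bound), proves monotonicity of $h$, and controls jumps over the whole trajectory via Lemma~\ref{lem:c_tilde} with $p=n^{-4/3}\ln^7 n$, so that the error term $g^2p/(1+gp)=o(n)$ needs no tuning of an extra parameter $L$. Your restart argument would anyway need a jump-control lemma of this type to rule out overshooting past $(1-o(1))d_{\max}$ (your union bound over decreasing steps does not by itself exclude the rare large drops that Lemma~\ref{lem:c_tilde} quantifies), so the paper's route is both simpler and strictly necessary machinery for your plan; the remaining bookkeeping in your proposal (the $k=1$ term, the $O((d/n)^2)$ tail uniform in $\calD$, and $\ln d_{\max}=(1-o(1))\ln n$ from $p_1+p_{n-1}=n^{-o(1)}$) agrees with the paper's Lemmas~\ref{lem:Bndr_bound} and~\ref{lem:h_d_p1_On2} and the proof of Theorem~\ref{thm:lwr_bnd}.
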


Note that most common mutation operators satisfy $p_{n-1} = o(p_1)$, in which case~\eqref{eq:thm:lowerbound} simplifies to $(1- o (1)) \tfrac{1}{p_1} n \ln n$.
We remark that a coarser lower bound of $\Omega(n \ln n)$ follows from \cite{lehrewitt2012} and \cite{DDY2020}. However, in contrast to \cite{lehrewitt2012}, we are interested in understanding the leading constant, and in contrast to \cite{DDY2020}, we are interested in the expected runtime for \emph{static} unary unbiased distributions. A common technique to prove lower bounds that apply to any function from some problem collection is to bound the expected runtime of the algorithm on \onemax and to show that \onemax is the ``easiest'' among all functions from the collection. ``Easiest'' here means in particular that the expected runtime of the algorithm optimizing a given function from the set cannot be smaller than its expected runtime on \onemax, but easiest can also have the stronger meaning of stochastic domination~\cite{witt2013tight,jorritsma2023comma}. In many cases, e.g., when considering the \ooea with standard bit mutation, \onemax is the easiest among all functions with unique global optimum, as was first shown in~\cite{doerr2012multiplicative} for mutation rate $p=1/n$ and then in~\cite{witt2013tight} more generally for all (static or dynamic) mutation rates $p\le 1/2$. However, in our situation it is not true that \onemax is the easiest function, as we will discuss in Section~\ref{sec:domination}. 

Our proof for Theorem~\ref{thm:lower-bound} follows the strategy used in~\cite{DDY2020}. In particular, we apply their lower bound theorem for variable drift~\cite[Theorem~9]{DDY2020} in the same way. We quote their Lemma~13 directly, and the proof of our Theorem~\ref{thm:lwr_bnd} below differs from the proof of  their Theorem~14 only in the calculations and bounds used. The key difference between their proof and ours is that we use a different function $h$ to bound the expected change in the potential. Most of the work goes into showing that this function $h$ is indeed applicable, and providing an upper bound on its values that allows us to translate the result of Theorem~\ref{thm:lwr_bnd} into the asymptotic formulation of Theorem~\ref{thm:lower-bound}. 

To implement the proof strategy of~\cite{DDY2020}, we use the same potential function to measure the progress of the optimization process. That is, we denote by $(x^{(0)}, x^{(1)}, \dots, x^{(t)})$ the sequence of the first $t+1$ search points evaluated by the algorithm and we denote by $v_t\in \{x^{(0)}, \dots, x^{(t)}\}$ the parent chosen by the algorithm in iteration~$t$. We define the potential at time $t$ as 
\begin{align}
    X_t\coloneqq \min_{0\leq i \leq t} d\big(x^{(i)}\big),
\end{align}
where $d$ is the distance function 
\begin{align}
    d(x) \coloneqq \min \{{ H(x,z), H(x,\bar{z})}\},
\end{align}
where $z$ is the optimum and $\bar{z}$ its bit-wise complement $(1,\ldots,1)-z$.
The reason for considering the symmetric distance to the optimum and its complement is that an optimal unary unbiased black-box algorithm may first reach $\bar{z}$, and then flip all bits at once. Furthermore, as we will show in Lemma~\ref{lem:pn-1}, it is possible to make progress towards the optimum in a way that can be measured in terms of~$d$, while the Hamming-distance to the optimum increases. Note that the proofs in~\cite{DDY2020} consider only the Hamming distance to $\vec{0}$ and $\vec{1}$. This is justified by the fact that the location of the optimum does not make any difference for \ooeaD-type algorithms which use so-called unbiased mutation operators. This is a standard argument in the analysis of evolutionary algorithms and other randomized search heuristics, see also footnote~1 in~\cite{DDY2020}. To ease the comparison with~\cite{DDY2020}, we therefore assume in the following, without loss of generality, that $z=(1,\ldots,1)$ and we make use of the fact that, with this assumption, 
\begin{align}
    d(x) = \min \{{ n - \OM (x) , \OM(x)}\}.
\end{align}   
 
Note that the sequence $\left(X_t\right)_{t\geq 0}$ is non-increasing, so we may apply 
the variable drift lower bound from~\cite[Theorem~9]{DDY2020} to it.

Next, we define the function $\Tilde{h}$, which gives the precise drift in the case where the algorithm uses a bitstring at distance $X_t$ for generating the offspring in round $t$. 
\begin{definition}
    We define $\Tilde{h} : [0,\lfloor \frac{n}{2}\rfloor]\rightarrow \R_{\geq 0}$ as 
    \begin{align} \label{drift_h}
        \Tilde{h}(d) = 
            \sum_{r=1}^{n-1} (p_r+p_{n-r}) B(n,d,r),
    \end{align}
    where 
    \begin{align}\label{def:Bndr}
        B(n,d,r)=\sum_{i= \lceil r/2\rceil
        }^{\min\{d, r\}} (2i-r) \frac{{\binom{d}{i}}{\binom{n-d}{r-i}}}{{\binom{n}{r}}}
    \end{align}
    is the drift conditioned on flipping $r$ bits. 
\end{definition}
Note that in the definition above, we assume the convention that an empty sum evaluates to zero. The expression $B(n,d,r)$ was already given in~\cite{DDY2020} as the exact fitness drift with respect to $\OneMax$ when flipping $r$ bits. 

\begin{lemma} \label{lem:drift_is_h}
     If a static unary unbiased algorithm with flip distribution $\calD$ chooses a search point $v_t$ for mutation in step~$t$ such that $d(v_t)=X_t$, (that is, $v_t$ has minimum symmetric distance to the optimum among all search points evaluated so far), then the drift is given by $\Tilde{h}\left(X_t\right)$, i.e.,
    \begin{align}
         \E \left[X_t - X_{t+1} \mid \left\{X_t = d\right\} \land \left\{d\left(v_t\right) = d\right\}\right]=\Tilde{h}(d).
    \end{align}
\end{lemma}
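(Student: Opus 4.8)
The plan is to exploit the fact that the potential $X_t$ is the running minimum of $d$ over \emph{all} evaluated search points, so that the one-step drift does not depend on the function $f$ or on the selection rule at all: once the offspring $x^{(t+1)} = \mutD(v_t)$ has been evaluated it contributes to the minimum whether or not it is accepted. Since we condition on $X_t = d = d(v_t)$, we have $X_{t+1} = \min\{d, d(x^{(t+1)})\}$ and therefore the drift equals $\bigl(d - d(x^{(t+1)})\bigr)^+$. Conditioning on the number $R$ of flipped bits via the law of total expectation, it then remains to compute $G(r) := \E\bigl[(d - d(\text{flip}_r(v_t)))^+\bigr]$ for each $r$ and to show that $\sum_{r} p_r G(r) = \Tilde{h}(d)$.

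Next I would fix $r$ and, using $d(\bar x) = d(x)$ together with $\overline{\text{flip}_{r,S}(v)} = \text{flip}_{r,S}(\bar v)$, observe that $G(r)$ depends on $v_t$ only through $d(v_t) = d$; hence I may assume without loss of generality that $v_t$ has exactly $d$ zero-bits (i.e.\ $\OM(v_t) = n - d$). Writing $i$ for the number of zero-bits among the $r$ flipped positions --- which is hypergeometric with weight $\binom{d}{i}\binom{n-d}{r-i}/\binom{n}{r}$ --- the offspring has $\OM = n - d + (2i - r)$. Because $d \le n/2$, exactly one of two disjoint things can reduce the potential: the offspring moves toward $\vec 1$ (when $2i - r > 0$, giving progress $2i - r$) or it moves toward $\vec 0$ (when it flips enough one-bits, giving progress $2d - n - 2i + r$). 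The first contribution, summed over the admissible range $i \ge \lceil r/2\rceil$, is by definition exactly $B(n,d,r)$.

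The crux is to identify the second (toward-$\vec 0$) contribution with $B(n,d,n-r)$. For this I would set up the complementation bijection: the event that $\text{flip}_{r,S}(v_t)$ moves toward $\vec 0$ corresponds, via $y \mapsto \bar y = \text{flip}_{n-r,\bar S}(v_t)$, to the event that $\text{flip}_{n-r}(v_t)$ moves toward $\vec 1$. One checks that this bijection preserves both the progress (if $S$ flips $i$ zero-bits then $\bar S$ flips $i' = d - i$ zero-bits, and $2i' - (n-r) = 2d - n - 2i + r$) and the hypergeometric weight ($\binom{d}{i}\binom{n-d}{r-i}/\binom{n}{r} = \binom{d}{d-i}\binom{n-d}{(n-r)-(d-i)}/\binom{n}{n-r}$), so the toward-$\vec 0$ contribution equals $B(n,d,n-r)$ term by term. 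Hence $G(r) = B(n,d,r) + B(n,d,n-r)$.

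Finally, summing over $r$ and re-indexing the second term ($r \mapsto n - r$) yields $\sum_{r=0}^{n} p_r G(r) = \sum_{r=1}^{n-1} (p_r + p_{n-r}) B(n,d,r) = \Tilde{h}(d)$, where the boundary terms $r \in \{0,n\}$ drop out since $B(n,d,0) = 0$ and $\text{flip}_n(v_t) = \bar v_t$ satisfies $d(\bar v_t) = d$. I expect the main obstacle to be the careful verification of the complementation bijection in the third step --- in particular matching the summation ranges and confirming disjointness of the two progress events, which relies on $d \le n/2$ --- whereas the function-independence reduction in the first step and the final re-indexing are routine.
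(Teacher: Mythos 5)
Your proposal is correct and takes essentially the same route as the paper's proof: both decompose the drift into progress toward $\vec 0$ and toward $\vec 1$, compute one direction directly from the hypergeometric weights as $B(n,d,r)$, and identify the other direction with $B(n,d,n-r)$ via complementation --- your explicit bijection $S \mapsto \bar S$ is precisely the paper's observation that flipping $n-r$ bits is equivalent to first flipping all $n$ bits and then flipping $r$ bits. Your additional details (the running-minimum argument for selection-independence, the weight and range verifications, and the boundary cases $r\in\{0,n\}$) are all sound and merely make explicit what the paper leaves implicit.
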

The proof relies on the fact that flipping $n-r$ bits is the same as first flipping $n$ bits and then flipping $r$ bits to adapt the computation of $B(n,d,r)$ given in~\cite{DDY2020}.

\begin{proof}
        We assume without loss of generality that $d=\OM (v_t)$ (by the symmetry of $d$, the proof in the case where $d= n- \OM(x)$ is the same after switching the roles of zero-bits and one-bits). This assumption means that $v_t$ is closer to $\vec{0}$ than it is to $\vec{1}$ in terms of Hamming-distance. 
        Consider the event that the algorithm flips $r$ bits in the $t$-th round. We will first calculate the conditional expectation $\Tilde{h}(d)$, when additionally conditioning on this event. \\
        The algorithm makes progress in the following two cases:
        \begin{enumerate}[\bf (a)]
            \item $\OM \left(x^{(t+1)}\right)< d$, or
            \item $n-\OM \left(x^{(t+1)}\right) < d$.
        \end{enumerate}
        \textbf{Case (a).} Let $i$ be the number of bits that are flipped from $1$ to $0$, i.e., the number of bits that make progress towards $\vec{0}$. Then $r-i$ bits are flipped from $0$ to $1$. The probability of this event is 
        \begin{align} \label{hypergeom}
            \Pr [\{\text{flip $i$ one-bits}\}\mid \{\text{flip $r$ bits}\}] = \frac{{\binom{d}{i}}{\binom{n-d}{r-i}}}{{\binom{n}{r}}}. 
        \end{align}
        The progress towards $\vec{0}$ in this event is $i-(r-i)=2i-r$. This is positive if and only if $i> r/2$. Altogether, the following conditions on $i$ must hold to get a positive amount of progress towards $\vec{0}$ in this event: 
        \begin{enumerate}[(i)]
            \item $i> r/2$,
            \item $i\leq r$,
            \item $i\leq d$, and 
            \item $r-i \leq n-d$, or equivalently $i \geq r+d-n$.
        \end{enumerate}
        Note that (i) implies (iv) as $r/2\geq r- n/2 \geq r + d -n$. Conditions (i)--(iii) are equivalent to the limits of the sum in \eqref{def:Bndr}. Hence, the contribution of Case \textbf{(a)} to $\Tilde{h}(d)$ when additionally conditioning on $r$ bits being flipped in round $t$ is $B(n,d,r)$. \\
        \textbf{Case (b).} We observe that flipping $n-r$ bits is equivalent to first  flipping all $n$ bits, and then flipping $r$ bits. Similarly, flipping $r$ bits is the same as first flipping $n$ bits, and then flipping $n-r$ bits.
        The progress that the algorithm makes towards $\vec{0}$ after first flipping all $n$ bits is the same as the progress the algorithm makes towards $\vec{1}$. 
        Overall, we see that the progress towards $\vec{1}$ of flipping $r$ bits is the same as the progress towards $\vec{0}$ of flipping $n-r$ bits. 
        By the same argument as for Case \textbf{(a)}, the contribution of Case \textbf{(b)} to $\Tilde{h}(d)$ when additionally conditioning on $r$ bits being flipped in round $t$ is $B(n,d,n-r)$. 

        To finish the proof, we apply the law of total expectation and get 
        \begin{align}
            \Tilde{h}(d)= \sum_{r=1}^{n-1} p_r (B(n,d,r) + B(n,d,n-r)) =  \sum_{r=1}^{n-1} (p_r+p_{n-r}) B(n,d,r),
        \end{align}
        where the second equality is obtained by splitting into two sums and then reversing the indices.
        We also note that flipping $r=0$ or $r=n$ bits never yields progress with respect to $d$. 
    \end{proof}

Now, we are ready to define the bound $h$ on the drift that we use in our application of the variable drift theorem~\cite[Theorem~9]{DDY2020}.  
\begin{definition}
    Let $h : [0,n]\rightarrow \R_{\geq 0}$,
    \begin{align}
        h(d) = \begin{cases}
            \Tilde{h}(d), \quad & \text{for $d\leq d_0$}\\
            n, \quad & \text{for $d> d_0$},
        \end{cases}
    \end{align}
    where \begin{align}
        d_0 \coloneqq \lfloor (p_1+p_{n-1}) n/\ln^2 n\rfloor.
    \end{align}
\end{definition}

The following statement is adapted from Lemma~21 in~\cite{DDY2020}. 
\begin{lemma}
\label{lem:Bndr_bound}
    There is an $n_0\in \N$ and a constant $C>0$ such that for all $n\geq n_0$, $d\leq d_0$, and $r\geq 2$, it holds
    \begin{align}
        B(n,d,r) < C \cdot (d/n)^2.
    \end{align}
\end{lemma}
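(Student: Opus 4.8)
The plan is to reinterpret $B(n,d,r)$ as an expectation over a hypergeometric variable and to treat the positive-progress event as a large deviation. Writing $I$ for the number of one-bits among the $r$ flipped positions, $I$ is hypergeometric with population $N=n$, $d$ successes, and $r$ draws, so that $\E[I]=rd/n$ and, reading off the summation limits, $B(n,d,r)=\E\big[(2I-r)\mathbbm{1}_{\{I>r/2\}}\big]$ (the $i=r/2$ term for even $r$ contributes $0$ and may be dropped). The key observation is that for $d\le d_0$ we have $d/n\le 1/\ln^2 n = o(1)$, so $\E[I]=rd/n$ lies far below the threshold $r/2$; hence $\{I>r/2\}$ is a rare event whose probability decays geometrically in $r$.

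To exploit this, I would first bound the magnitude by $2i-r\le r$ on the support, giving $B(n,d,r)\le r\,\Pr[I\ge r/2]$. Then I apply the multiplicative Chernoff bound for the hypergeometric distribution (Theorem~\ref{multiplicative_chernoff}) with $1+\delta := (r/2)/\E[I] = n/(2d)$, which exceeds $1$ since $d<n/2$ for large $n$. Using $e^\delta\le e^{1+\delta}$ to simplify, this yields
\begin{align}
\Pr[I\ge r/2]\le\Big(\tfrac{e}{1+\delta}\Big)^{(1+\delta)\E[I]}=\Big(\tfrac{2ed}{n}\Big)^{r/2},
\end{align}
and therefore $B(n,d,r)\le r\,(2ed/n)^{r/2}$.

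It then remains to show that this bound stays below $(d/n)^2$ uniformly over $r\ge 12$. Setting $x:=d/n$ and $c:=2ex$, for $n$ large enough that $c\le 1/2$ (which holds since $x\le 1/\ln^2 n$) I would factor $r\,c^{r/2}=c^6\cdot r\,c^{(r-12)/2}\le c^6\cdot r\,2^{-(r-12)/2}$ and observe that $r\mapsto r\,2^{-(r-12)/2}$ is strictly decreasing for integers $r\ge 12$ (the successive ratio is at most $\tfrac{13}{12}\cdot 2^{-1/2}<1$), hence maximized at $r=12$ with value $12$. This gives $B(n,d,r)\le 12\,c^6 = 12(2e)^6 x^6$ for all $r\ge 12$. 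Finally, since $x\le 1/\ln^2 n$ we have $12(2e)^6 x^4\le 12(2e)^6/\ln^8 n<1$ for all $n\ge n_0$, so $B(n,d,r)\le \big(12(2e)^6 x^4\big)\cdot x^2 < x^2 = (d/n)^2$, using $d\ge 1$ so that $x>0$. One picks $n_0$ large enough to guarantee both $c\le 1/2$ and $12(2e)^6/\ln^8 n<1$.

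The main obstacle I anticipate is the uniformity in $r$: the crude magnitude bound $2i-r\le r$ introduces a linear factor $r$ that must be dominated by the geometric tail, and one has to confirm that the supremum of $r\,(2ed/n)^{r/2}$ over the whole range $12\le r\le n-1$ is attained at the left endpoint $r=12$ rather than at some larger $r$. The threshold $r_0=12$ supplies exactly the headroom needed, since the resulting power $(d/n)^6$ carries a spare factor $(d/n)^4=o(1)$ that absorbs the constant $12(2e)^6$.
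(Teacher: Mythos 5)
Your proposal is correct and follows essentially the same route as the paper's proof: both reinterpret the sum as a hypergeometric tail, bound the magnitude by $2i-r\le r$, and apply the multiplicative Chernoff bound with $1+\delta=n/(2d)$ to obtain $B(n,d,r)\le r\,(2ed/n)^{r/2}$. The only difference is the final bookkeeping (you peel off $c^6$ at $r=12$ and use monotonicity of $r\,2^{-(r-12)/2}$, while the paper extracts $(d/n)^2$ directly and uses $r\le 2^r$), which is an equivalent arithmetic finish; your explicit remark that $d\ge 1$ is needed for the strict inequality is a nice touch.
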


\begin{proof}
    Note that the probability in equation \eqref{hypergeom} is the same as the probability that a hypergeometric random variable with parameters $n,r,d$ is equal to $i$. Let $X$ be such a random variable. Since $d \leq d_0$, we have $\E[X ] = \frac{dr}{n}\leq (p_1+ p_{n-1})\frac{r}{\ln^2 n}\leq \frac{r}{\ln^2 n}$.
    
    Let $n_0 = \lceil e^{20}\rceil$. Then $d/n \leq d_0/n \leq 1 / \ln^2 n \leq 1/\ln^2 n_0 \leq 1/400$. Making use of the above observation, the fact that $(2i-r)\leq r$ for all $i\leq r$, and then applying Theorem~\ref{multiplicative_chernoff}, we have
    \begin{align}
    \begin{split}
        B(n,d,r) & = \sum_{i=\lceil r/2\rceil}^{\min \{d,r\}} (2i-r) \Pr [X=i] 
        \leq  r \Pr [X \geq r/2] \label{eq:77} \\
        & = r \Pr \left[X \geq \left( 1 + \left(\frac{n}{2d} -1\right) \right)\cdot \E \left[X\right] \right]
        \\ & \leq r \left(\frac{\exp \left(\frac{n}{2d}-1\right)}{\left(\frac{n}{2d}\right)^{\frac{n}{2d}}}\right)^{\frac{dr}{n}}
         \leq r \frac{e^{r/2}}{\left(\frac{n}{2d}\right)^{r/2}}
        \leq r \left(\frac{2ed}{n}\right)^{r/2} 
        \\ & = 4e^2r \left(\frac{2ed}{n}\right)^{r/2-2} \cdot \left(\frac{d}{n}\right)^2
         < 4e^2 r \left(\frac{e}{200}\right)^{r/2-2} \cdot \left(\frac{d}{n}\right)^2.
    \end{split}
    \end{align}
    The factor $4e^2r(e/200)^{r/2-2}$ in the upper bound above is decreasing for all $r\geq 1$, so we can plug in $r=1$ to get $C= 4 e^{0.5}200^{1.5}\approx 18653.2$.
\end{proof}   

The next lemma gives an upper bound on $h(d)$ that holds once $d$ is small enough. 
\begin{lemma}\label{lem:h_d_p1_On2}
    There is an $n_0 \in \N$ such that for all $n \geq n_0$, and all $d\leq d_0$,
    \begin{align}
        h(d) \leq \left(1 + \frac{1}{\ln n}\right) \cdot  (p_1 + p_{n-1}) \cdot \frac{d}{n}.
    \end{align}
\end{lemma}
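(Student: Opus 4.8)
Since $h(d) = \tilde h(d)$ for $d \le d_0$, the plan is to expand the defining sum
\[
\tilde h(d) = \sum_{r=1}^{n-1}(p_r + p_{n-r})\,B(n,d,r)
\]
and to isolate the contribution of $r = 1$. A direct evaluation of the index range in $B(n,d,1)$ leaves only the term $i=1$, giving $B(n,d,1) = d/n$, so the $r=1$ summand equals exactly $(p_1 + p_{n-1})\,d/n$, which is the desired main term. It therefore suffices to show that the remaining sum over $r \ge 2$ is bounded by $\frac{1}{\ln n}\,(p_1+p_{n-1})\,\frac{d}{n}$; everything reduces to controlling $B(n,d,r)$ for $r \ge 2$.

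The key observation is that for every $r \ge 2$ a positive contribution to $B(n,d,r)$ requires flipping at least two one-bits, because the summand factor $2i - r$ is positive only for $i \ge 2$ once $r \ge 2$. Writing $X$ for the hypergeometric random variable counting flipped one-bits (parameters $n,r,d$, as in~\eqref{hypergeom}) and using $2i - r \le r$, I would bound $B(n,d,r) \le r\,\Pr[X \ge 2]$, and then estimate $\Pr[X\ge 2] \le \E\!\left[\tbinom{X}{2}\right] = \binom{d}{2}\frac{r(r-1)}{n(n-1)} = O\!\big(r^2 (d/n)^2\big)$. This yields $B(n,d,r) = O\!\big(r^3 (d/n)^2\big)$, which is useless for large $r$ but gives a uniform bound $B(n,d,r) \le C\,(d/n)^2$ for the finitely many values $2 \le r \le 11$ with some absolute constant $C$. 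For $r \ge 12$ I invoke Lemma~\ref{lem:Bndr_bound} directly, which already gives $B(n,d,r) < (d/n)^2$. Hence $B(n,d,r) \le C\,(d/n)^2$ for all $r \ge 2$, after enlarging $C$ if needed.

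Summing and using $\sum_{r=2}^{n-1}(p_r + p_{n-r}) \le \sum_{r} p_r + \sum_{r} p_{n-r} \le 2$, the tail is at most $2C\,(d/n)^2$. Finally I substitute the hypothesis $d \le d_0 = \lfloor (p_1+p_{n-1})n/\ln^2 n\rfloor$, which gives $d/n \le (p_1+p_{n-1})/\ln^2 n$, so that
\[
2C\Big(\frac{d}{n}\Big)^2 \le 2C\,\frac{p_1+p_{n-1}}{\ln^2 n}\cdot\frac{d}{n} \le \frac{1}{\ln n}\,(p_1+p_{n-1})\,\frac{d}{n},
\]
where the last inequality holds as soon as $\ln n \ge 2C$, i.e.\ for all $n$ beyond some $n_0$. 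Combining the $r=1$ term with this tail bound proves the lemma.

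The step I expect to be the main obstacle is obtaining the \emph{uniform} quadratic bound $B(n,d,r) \le C(d/n)^2$ across all $r \ge 2$: Lemma~\ref{lem:Bndr_bound} only covers $r \ge 12$, while the naive second-moment bound carries a factor growing like $r^3$ and is therefore admissible only for bounded $r$, so the two-regime split is what reconciles them. One must also verify that the leftover constant $C$ is genuinely absorbed by the extra $1/\ln n$ factor produced by the constraint $d \le d_0$, which is precisely where the square in $(d/n)^2$ and the $1/\ln^2 n$ scale of $d_0$ are essential.
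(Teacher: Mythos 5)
Your proposal is correct and follows essentially the same route as the paper's proof: isolate the $r=1$ term $(p_1+p_{n-1})\,d/n$, bound every $r\ge 2$ contribution uniformly by $C\,(d/n)^2$ --- invoking Lemma~\ref{lem:Bndr_bound} for $r\ge 12$ and a direct constant bound for the finitely many small $r$ --- and then absorb the quadratic tail via $d\le d_0$, which produces the $1/\ln n$ factor once $\ln n \ge 2C$. The only (local) difference is how the small-$r$ terms are handled: the paper estimates the hypergeometric weights via $\left(\frac{n}{k}\right)^k\le\binom{n}{k}\le\left(\frac{ne}{k}\right)^k$, arriving at the constant $C'=e^{12}12^{14}$ with a separate computation for $r=2$, whereas your second-moment bound $B(n,d,r)\le r\Pr[X\ge 2]\le r\,\E\bigl[\tbinom{X}{2}\bigr]=O\bigl(r^3(d/n)^2\bigr)$ is a valid and arguably cleaner equivalent that covers $r=2$ in the same stroke.
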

\begin{proof}
    Let $C$ be the constant from Lemma~\ref{lem:Bndr_bound}, and let $n_0\coloneqq \lceil e^C \rceil$. As $n_0  \ge  \lceil e^{20} \rceil$, Lemma~\ref{lem:Bndr_bound} is applicable. Note that $B(n,d,1)=d/n$. Together with Lemma~\ref{lem:Bndr_bound}, we have
    \begin{align}
        h(d)= (p_1+ p_{n-1})\frac{d}{n} + \sum_{r=2}^{n-1}  (p_r + p_{n-r})\cdot C
        \left(\frac{d}{n}\right)^2\leq (p_1+ p_{n-1})\frac{d}{n} + C
        \left(\frac{d}{n}\right)^2.
    \end{align}
    By our assumption that $d \leq d_0$, we have \begin{align}
    \begin{split}
        h(d) & \leq (p_1 + p_{n-1 }) \cdot \frac{d}{n} +  \frac{C}{\ln ^2 n} \cdot (p_1 + p_{n-1 })\cdot \frac{d}{n}
        \\ & \leq \left(1 + \frac{1}{\ln n}\right) \cdot  (p_1 + p_{n-1}) \cdot \frac{d}{n},
        \end{split}
    \end{align}
    where the last inequality holds for all $n\geq n_0$.
\end{proof}

As we show in the following lemma, the function $h$ is indeed an upper bound for the change in the potential. We need to show that, under our assumptions, the expected change in the potential conditioning on $d(v_t)=d + \Delta$ is maximal if $\Delta = 0$.
The proof relies on a case distinction to deal with different ranges of $d$ and~$\Delta$. Depending on the case, we use an additive Chernoff bound, a multiplicative Chernoff bound, or Lemma~\ref{lem:h_d_p1_On2}.

\begin{lemma}\label{lem:h_bounds_drift}
    There is an $n_0\in \N$ such that for all $n\geq n_0$, any static unary unbiased algorithm with flip distribution $\calD$ such that $p_1+p_{n-1}= n^{-o(1)}$, and all $d\leq d_0$, it holds
    \begin{align}
        \E \left[X_t - X_{t+1} \mid X_t = d \right] \leq h(d).
    \end{align}
\end{lemma}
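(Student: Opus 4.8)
The plan is to condition on the distance $d(v_t)=d+\Delta$ of the parent that the algorithm uses in step $t$. Since $X_t=d$ is the minimum distance over all points evaluated so far, we have $\Delta\ge 0$, and the drift in the statement is the average over the algorithm's (adversarial) parent choice of the conditional drifts $D(\Delta):=\E[X_t-X_{t+1}\mid X_t=d,\ d(v_t)=d+\Delta]$. Hence it suffices to prove $D(\Delta)\le h(d)$ for every $\Delta\ge 0$. If $J$ denotes the reduction in distance produced by one application of $\mutD$ to a point at distance $d+\Delta$, then the offspring sits at distance $d+\Delta-J$, so $X_{t+1}=\min\{d,\,d+\Delta-J\}$ and therefore $X_t-X_{t+1}=(J-\Delta)^+$; thus $D(\Delta)=\E[(J-\Delta)^+]$. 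Pushing the symmetrization from the proof of Lemma~\ref{lem:drift_is_h} (progress towards $\vec 1$ by an $r$-bit flip equals progress towards $\vec 0$ by an $(n-r)$-bit flip) through this shifted quantity gives
\begin{align}\label{eq:Ddelta}
D(\Delta)=\sum_{r=1}^{n-1}(p_r+p_{n-r})\,B_\Delta(n,d+\Delta,r),
\end{align}
where $B_\Delta(n,d',r):=\sum_{i:\,2i-r>\Delta}(2i-r-\Delta)\binom{d'}{i}\binom{n-d'}{r-i}\big/\binom{n}{r}$ is the shifted analogue of the quantity $B$ from the definition of $\Tilde{h}$.

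For $\Delta=0$ we have $B_0=B$ and $d+\Delta=d$, so $D(0)=\Tilde{h}(d)=h(d)$ by Lemma~\ref{lem:drift_is_h} and $d\le d_0$. For $\Delta\ge 1$ the decisive structural fact is that $B_\Delta(n,d',r)=0$ unless $r\ge\Delta+1$: a positive summand needs $i>(r+\Delta)/2$, which is impossible for $i\le r$ once $r\le\Delta$. In particular the $r=1$ summand of \eqref{eq:Ddelta} vanishes, and this is exactly the summand carrying the leading contribution $(p_1+p_{n-1})\,B(n,d,1)=(p_1+p_{n-1})\,d/n$ of $\Tilde{h}(d)$. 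Since the summands of $\Tilde{h}(d)=D(0)$ are non-negative, it suffices to establish the clean inequality
\begin{align}\label{eq:star}
D(\Delta)\le (p_1+p_{n-1})\,\frac{d}{n}\qquad\text{for all }\Delta\ge 1,
\end{align}
which then yields $D(\Delta)\le D(0)=h(d)$.

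I would prove \eqref{eq:star} by a case distinction on the parent distance $d+\Delta$. If $d+\Delta>\eps_0 n$ for a small absolute constant $\eps_0$, then $d\le d_0=o(n)$ forces $\Delta=\Omega(n)$; the threshold $i>(r+\Delta)/2$ in $B_\Delta$ exceeds the hypergeometric mean $r(d+\Delta)/n\le r/2$ by at least $\Delta/2=\Omega(n)$, so the additive Chernoff bound (Theorem~\ref{additive_chernoff}) gives $\Pr[2i-r>\Delta]\le e^{-\Delta^2/(2n)}=e^{-\Omega(n)}$ uniformly in $r$. As each realized progress is at most $d\le d_0$, we get $D(\Delta)\le 2d_0\,e^{-\Omega(n)}$, far below the target since $p_1+p_{n-1}=n^{-o(1)}$. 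If instead $d+\Delta\le\eps_0 n$, the mean $r(d+\Delta)/n\le\eps_0 r$ sits well below the threshold, and a multiplicative Chernoff estimate (as in the proof of Lemma~\ref{lem:Bndr_bound}) bounds the summand by $B_\Delta(n,d+\Delta,r)\le r\,(2e(d+\Delta)/n)^{(r+\Delta)/2}$; summing the geometric series in \eqref{eq:Ddelta} yields
\begin{align}\label{eq:multbound}
D(\Delta)=O\!\left(\Delta\,\Bigl(\tfrac{2e(d+\Delta)}{n}\Bigr)^{\Delta+1/2}\right).
\end{align}
For $\eps_0$ small enough the right-hand side is decreasing in $\Delta$, so for $\Delta\ge 2$ it is at most its value $O((d/n)^{5/2})$ at $\Delta=2$, which is below $(p_1+p_{n-1})\,d/n$ once $d\le d_0$. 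The borderline case $\Delta=1$, where \eqref{eq:multbound} only gives exponent $3/2$, is instead settled by the exact leading term $B_1(n,d+1,2)=\binom{d+1}{2}\big/\binom{n}{2}\le((d+1)/n)^2\le 4(d/n)^2$, which is at most $(p_1+p_{n-1})\,d/n$ because $d\le d_0$ gives $4d/n\le 4(p_1+p_{n-1})/\ln^2 n\le p_1+p_{n-1}$. Lemma~\ref{lem:h_d_p1_On2} provides the corresponding estimate at $\Delta=0$, through which \eqref{eq:star} is converted back into $h(d)$.

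The main obstacle is the small-distance regime $d+\Delta\le\eps_0 n$, and specifically the interplay between the shift $\Delta$ and the target $(p_1+p_{n-1})\,d/n$, which is scaled by $d$ and can be as small as $n^{-1-o(1)}$. A shift-free bound $B_\Delta\le B=O(((d+\Delta)/n)^2)$ is \emph{useless} here, since for small $d$ and large $\Delta$ it far exceeds the target; the shift must be used both to force $r\ge\Delta+1$ flips and to lift the effective exponent in \eqref{eq:multbound} from $2$ to $\Delta+1/2$. The delicate points are then (i) controlling the combinatorial factors in the geometric sum uniformly in $\Delta$ so that \eqref{eq:multbound} holds, (ii) treating the single borderline value $\Delta=1$ by an exact rather than a Chernoff estimate, and (iii) spending the $\ln^2 n$ slack in $d\le d_0$ exactly against the hypothesis $p_1+p_{n-1}=n^{-o(1)}$ so that the lower-order terms lose against the linear leading term. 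Together these show that the conditional drift is maximized at $\Delta=0$, where it equals $h(d)$.
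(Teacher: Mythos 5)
Your proposal is correct, and its skeleton is the same as the paper's: condition on the parent distance $d(v_t)=d+\Delta$, reduce to a pointwise bound for each $\Delta\ge 1$, note that progress requires flipping at least $\Delta+1$ (and at most $n-\Delta-1$) bits, and combine Chernoff estimates with an exact computation in a borderline case. Where you genuinely diverge is in the case decomposition and the key estimate. The paper splits into four regimes --- $\Delta\ge n/12$ (additive Chernoff), $\ln^2 n\le\Delta<n/12$ (multiplicative Chernoff with the unshifted exponent $r/2$, exploiting that only $r\ge\Delta+1>\ln^2 n$ contribute), and $\Delta<\ln^2 n$ with subcases $d<n^{1/4}-\ln^2 n$ (explicit bounds on the small-$r$ terms) and $d\ge n^{1/4}-\ln^2 n$ (where it invokes Lemma~\ref{lem:h_d_p1_On2} and cancels the missing $r=1$ term against $h(d+\Delta)$). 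You instead prove the uniform intermediate target $D(\Delta)\le(p_1+p_{n-1})d/n$ by keeping the shift in the Chernoff threshold: bounding $\Pr[X\ge(r+\Delta)/2]\le(2e(d+\Delta)/n)^{(r+\Delta)/2}$ lifts the per-term exponent from $r/2$ to $(r+\Delta)/2$, the geometric sum is uniform in $\Delta$ once $d+\Delta\le\eps_0 n$ with $\eps_0$ a small constant, and monotonicity in $\Delta$ reduces everything to $\Delta=2$; the single exception $\Delta=1$ is repaired by the exact value of the $r=2$ term, exactly as in the paper's subcase $d<n^{1/4}-\ln^2 n$. This buys a cleaner two-case structure, removes the need for Lemma~\ref{lem:h_d_p1_On2} and the split at $d\approx n^{1/4}$, and --- unlike the paper's step $\Theta(n^{-9/8})=o(h(d))$, which genuinely needs $p_1+p_{n-1}=n^{-o(1)}$ --- your small-distance case uses only $d\le d_0$ and the $\ln^2 n$ slack in the definition of $d_0$. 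Two small edits for a full write-up: at $\Delta=1$, state explicitly that the tail $r\ge 3$ is covered by the same geometric sum (there the shifted exponent is already at least $2$, giving $O\bigl(((d+1)/n)^2\bigr)$, which suffices since $d\le d_0$); and drop the closing appeal to Lemma~\ref{lem:h_d_p1_On2}, since converting $(p_1+p_{n-1})d/n$ back into $h(d)$ needs only $B(n,d,1)=d/n$ and the nonnegativity of the remaining summands of $\tilde h(d)$.
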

\begin{proof}
    We start by defining some notation. Let $d'\geq d$ and put
    \begin{align}
        h_d(d')\coloneqq \E \left[X_t - X_{t+1} \mid \{X_t = d\} \land \{d(v_t) = d'\}\right].
    \end{align} 
    Note that \begin{align} \label{max_eq}
        \E \left[X_t - X_{t+1} \mid X_t = d \right] \leq \max_{d \leq d' \leq n/2}  
        h_d(d'). 
    \end{align}
    If $d(v_t)= d$, it follows from Lemma \ref{lem:drift_is_h} that the drift is exactly $h(d)$, i.e., we have $h_d(d)=h(d)$. So it remains to show that the above maximum is attained at $d'=d$. We denote by $\Delta \coloneqq d'-d.$ We need to show for all $n/2-d \geq \Delta\geq 1$ that \begin{align}
        h_d(d+\Delta)\leq h(d).
    \end{align}
    We have \begin{align}
        h(d ) \geq (p_1 + p_{n-1}) \cdot B(n,d,1) = (p_1 + p_{n-1}) \frac{d}{n } \geq \frac{d}{n^{1+o(1)}}.
    \end{align}
    We observe that to make progress beyond the best-so-far search point, the algorithm needs to flip at least $\Delta + 1$, and at most $n-\Delta -1$ bits.
    Together with a similar calculation as in the proof of Lemma \ref{lem:drift_is_h}, this yields
    \begin{align}
        h_d(d+\Delta)= \sum_{r=\Delta+1}^{n-\Delta -1 } (p_r+p_{n-r})B_d(n,d+ \Delta,r), \label{eq:68}
    \end{align}
    where 
    \begin{align}
        B_d(n,d + \Delta, r )= \sum_{i= 
        \lceil \frac{r + \Delta}{2}\rceil 
        }^{\min \{d+ \Delta, r\}}
        (2i-r-\Delta)\frac{\binom{d+\Delta}{i}\binom{n-d-\Delta}{r-i}}{\binom{n}{r}}. \label{eq:hyperhyper}
    \end{align}
    We remark that $B_d(n, d+\Delta , r)\leq B(n, d+\Delta, r)$: The range of the sum in \eqref{eq:hyperhyper} is contained in the range of the sum in \eqref{def:Bndr} (plugging in $d+\Delta$). Consider an index $i$ appearing in both sums. The fraction in the $i$-th summand is the same, so the claimed inequality follows from $2i-r-\Delta \leq 2i-r$.
    We note that the range of the sum in \eqref{eq:68} is contained in the range of the sum in \eqref{drift_h}, so with the previous remark, we have $h_d(d+\Delta)\leq h(d+\Delta)$.
    
    As in the proof of Lemma \ref{lem:Bndr_bound}, the fraction in equation \eqref{eq:hyperhyper} is equal to the probability of a hypergeometric random variable $X$ with parameters $n, r, d+ \Delta$ being equal to $i$. Let $X$ be such a random variable. It holds $\E [X] = \frac{r(d+\Delta)}{n}\leq \frac{r}{2}$. 

    In the following, we distinguish four cases \textbf{(I)}, \textbf{(II)}, \textbf{(IIIa)}, and \textbf{(IIIb)}, specifying different ranges of $\Delta$ and $d$.

    \textbf{Case (I): $\Delta \geq n/12$.} Using the same argument as for equation \eqref{eq:77}, and then applying Theorem~\ref{additive_chernoff}, we get
    \begin{align}
    \begin{split}
        B_d(n,d+\Delta,r)& = \sum_{i= \lceil \frac{r+\Delta}{2}\rceil}^{\min\{d+\Delta, r\} } (2i-r-\Delta) \Pr \left[X =i \right]\leq r 
        \Pr \left[X \geq \frac{r+ \Delta}{2} \right]
        \\ & \leq r \Pr \left[X \geq \E [X]+ \frac{\Delta}{2} \right]
         \leq r \exp{ \left(-\frac{\Delta^2}{2n}\right)} 
         \leq n \exp \left( - \frac{n}{288} \right).
    \end{split}
    \end{align}
    We have \begin{align}
        h_d(d+\Delta) \leq n^2 e^{-\Omega(n)}=o\left(\frac{d}{n^{1 + o(1)}}\right)=o(h(d)). 
    \end{align}

    \textbf{Case (II): $n/12 > \Delta \geq \ln^2 n$.} We proceed as in the proof of Lemma \ref{lem:Bndr_bound}, applying Theorem~\ref{multiplicative_chernoff}. Note that, in contrast to the proof for case (1), we are bounding $B$. Choosing $n_0$ large enough such that $d_0 < n/12$, we have
    \begin{align}
    \begin{split}
        B(n,d+\Delta,r) & \leq r \Pr \left[X \geq \frac{r}{2} \right] = r \Pr \left[X \geq \frac{n}{2(d+ \Delta)}\E [X] \right]
        \\ & \leq r \left(\frac{2(d+\Delta)e}{n}\right)^{r/2} \label{eq:b_dndr_bound}
        \leq r \left(\frac{e}{3}\right)^{r/2}.
    \end{split}
    \end{align}
    For the last inequality, we used that $d\leq d_0 < n/12$ and our assumption $\Delta \leq n/12$. In particular, we have for $r > \Delta \geq \ln^2 n $
    \begin{align}
        B(n,d+\Delta, r) = O(n^{-\Omega (\ln n)}).
    \end{align} 
    We have \begin{align}
        h_d(d+\Delta) \leq n O(n^{-\Omega (\ln n)}) = o(h(d)).
    \end{align}

    \textbf{Case (IIIa): $1 \leq \Delta < \ln^2 n$ and $d< n^{1/4}-\ln^2 n $.} By the same argument as for \eqref{eq:b_dndr_bound}, we have
    \begin{align}\label{eq:75}
        B(n,d+\Delta, r ) \leq r \left(\frac{2(d+\Delta)e}{n}\right)^{r/2} < r\left(\frac{2e}{n^{3/4}}\right)^{r/2}< n\left(\frac{2e}{n^{3/4}}\right)^{r/2}.
    \end{align}
    For $r\geq 9$, we thus get $B(n,d+\Delta,r) \leq \Theta(n^{-2.375})$. For $3\leq r \leq 8$, we get $B(n,d+\Delta,r) \leq  \Theta(n^{-9/8})$ (note that we used the second to last bound in \eqref{eq:75} for this case). Furthermore, we have for $r=2$, where only $\Delta = 1$ is relevant (otherwise it does not appear in~\eqref{eq:68}),
    \begin{align}
        B_d(n, d + 1,2 ) = \frac{\binom{d+1}{2}}{\binom{n}{2}} = \frac{(d+1)\cdot d}{n\cdot (n-1)} \leq \frac{n^{2/4}}{(n-1)^2}\leq \Theta(n^{-3/2}).
    \end{align}
    We compute
    \begin{align}
        h_d(d+\Delta) &  \leq \Theta(n^{-3/2})+ 6 \cdot \Theta(n^{-9/8}) + n \Theta(n^{-2.375}) 
        = \Theta (n^{-9/8}) = o(h(d)).
    \end{align}

    \textbf{Case (IIIb): $1 \leq \Delta < \ln^2 n$ and $d_0\geq d\geq n^{1/4}-\ln^2 n$.}
    Note that \begin{align}
        h_d(d+\Delta) \leq h(d + \Delta)- (p_1+p_{n-1})B(n,d+\Delta,1),
    \end{align}
    since $B_d(n,d+\Delta,r)\leq B(n,d+\Delta,r)$, and $r=1$ does not appear in the sum \eqref{eq:68} defining $h_d(d + \Delta)$, as $\Delta\geq 1$.
    
    We choose $n_0$ large enough such that $n_0^{1/4}\ln n_0 -n_0^{1/4} - \ln^3 n_0> 0$. This implies $d(\ln n-1 ) > \ln^2 n$. Together with Lemma \ref{lem:h_d_p1_On2}, we have
    \begin{align}
    \begin{split}
        h_d(d+\Delta) & \leq h(d+\Delta)-(p_1+p_{n-1})\frac{d+\Delta}{n} 
         \leq \frac{1}{\ln n } \cdot (p_1+p_{n-1})\frac{d+\Delta}{n}
        \\ &  \leq  \frac{1}{\ln n } \cdot (p_1+p_{n-1})\frac{d+\ln^2 n}{n}
          <  \frac{1}{\ln n } \cdot (p_1+p_{n-1})\frac{d+d(\ln n-1)}{n} \\ & \leq h(d),
    \end{split}
    \end{align}
    concluding the proof of this lemma.
\end{proof}

Finally, we investigate the monotonicity of the function $h$. 

\begin{lemma}\label{lem:monotonic}
    There is an $n_0 \in \N$ such that for all $n \geq n_0$, the function $h$ is monotonically increasing.
\end{lemma}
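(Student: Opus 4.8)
The plan is to reduce the monotonicity of the piecewise-defined $h$ to the monotonicity of each summand $B(n,\cdot,r)$ in its distance argument. On $\{d_0+1,\dots,n\}$ the function $h$ equals the constant $n$ and is therefore trivially non-decreasing, while on $\{0,\dots,d_0\}$ we have $h(d)=\Tilde h(d)=\sum_{r=1}^{n-1}(p_r+p_{n-r})B(n,d,r)$ with non-negative coefficients $p_r+p_{n-r}$. Hence it suffices to prove that for each fixed $r\in[1,n-1]$ the map $d\mapsto B(n,d,r)$ is non-decreasing; then $\Tilde h$ is non-decreasing as a non-negative combination of non-decreasing functions, and only the single junction from $d_0$ to $d_0+1$ remains to be checked separately.

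First I would recast $B(n,d,r)$ probabilistically, exactly as in the proof of Lemma~\ref{lem:Bndr_bound}: the fraction $\binom{d}{i}\binom{n-d}{r-i}/\binom{n}{r}$ equals $\Pr[X_d=i]$ for a hypergeometric random variable $X_d$ with parameters $n,r,d$, namely the number of one-bits among the $r$ flipped positions of a point with $d$ one-bits. Because the lower summation limit is $\max\{\lceil r/2\rceil,\,r+d-n\}$ --- the first argument being the threshold where $2i-r$ becomes non-negative and the second being the feasibility bound that makes $X_d$ a genuine hypergeometric variable --- the sum reduces to $B(n,d,r)=\E\big[(2X_d-r)^+\big]$, where $(\cdot)^+$ denotes the positive part.

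The crux is the stochastic monotonicity $X_d\preceq_{\mathrm{st}}X_{d+1}$, which I would prove by an explicit coupling. Drawing a uniformly random permutation of $[n]$ and, independently, a uniform $r$-subset $S\subseteq[n]$ of flipped positions, and letting $A_d$ be the set of the first $d$ elements of the permutation, gives a nested chain $A_0\subseteq A_1\subseteq\dots\subseteq A_n$ in which each $A_d$ is a uniform $d$-subset. Setting $X_d:=|S\cap A_d|$ then realizes the correct hypergeometric marginals while guaranteeing $X_d\le X_{d+1}$ pointwise. Since $x\mapsto(2x-r)^+$ is non-decreasing, this yields $B(n,d,r)=\E[(2X_d-r)^+]\le\E[(2X_{d+1}-r)^+]=B(n,d+1,r)$, which is the monotonicity we need.

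Finally I would treat the junction at $d_0$, verifying $h(d_0)\le h(d_0+1)=n$. By Lemma~\ref{lem:h_d_p1_On2} together with $d_0\le(p_1+p_{n-1})n/\ln^2 n$ we get $h(d_0)\le(1+1/\ln n)(p_1+p_{n-1})\,d_0/n\le(1+1/\ln n)/\ln^2 n$, which lies far below $n$ for large $n$. Combining the three regimes shows $h(d)\le h(d+1)$ for every $0\le d<n$ once $n\ge n_0$, so $h$ is (weakly) monotonically increasing. I expect the only substantive ingredient to be the stochastic monotonicity of the hypergeometric distribution in its number-of-successes parameter; everything else is bookkeeping, the single pitfall being to keep the feasibility constraint $i\ge r+d-n$ in the lower summation limit so that $B(n,d,r)$ is exactly $\E[(2X_d-r)^+]$ without any spurious truncation.
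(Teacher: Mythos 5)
Your proposal is correct, but it reaches the key inequality $B(n,d,r)\le B(n,d+1,r)$ by a genuinely different route than the paper. The paper argues termwise and purely algebraically: it first checks that the summation range of $B(n,d,r)$ is contained in that of $B(n,d+1,r)$, then compares the summands directly, reducing $\binom{d}{i}\binom{n-d}{r-i}\le\binom{d+1}{i}\binom{n-d-1}{r-i}$ to the inequality $r(d+1)\le i(n+1)$, which it verifies using $i\ge\lceil r/2\rceil$ and $d\le d_0\le n/4$. You instead rewrite $B(n,d,r)=\E\big[(2X_d-r)^+\big]$ for a hypergeometric $X_d$ --- an exact identity, since the feasibility bound $i\ge r+d-n$ matches the support of $X_d$ and the boundary term at $i=r/2$ (for even $r$) vanishes --- and then establish $X_d\preceq_{\mathrm{st}}X_{d+1}$ via the nested-subsets coupling, so that monotonicity of $x\mapsto(2x-r)^+$ finishes the step. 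Your argument buys more than the paper's: it shows $d\mapsto B(n,d,r)$ is non-decreasing on all of $[0,n]$, with no restriction $d\le d_0$, and the range-containment check disappears because it is absorbed into the support of $X_d$; it also makes the mechanism transparent (more one-bits stochastically increase the number of flipped one-bits). The paper's computation buys self-containedness --- elementary binomial manipulations with no coupling machinery --- at the cost of only working for $d=O(n/\ln^2 n)$, which suffices there. Your junction check $h(d_0)\le n=h(d_0+1)$ via Lemma~\ref{lem:h_d_p1_On2} is valid and non-circular (that lemma rests only on Lemma~\ref{lem:Bndr_bound}); it could be done even more cheaply by noting $2i-r\le i\le d$, hence $B(n,d,r)\le d$ and $\tilde h(d_0)\le 2d_0\le 2n/\ln^2 n<n$ for large $n$, so the only place an $n_0$ is truly needed in your version is this trivial boundary comparison.
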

\begin{proof}
    Let $1\leq d \leq d_0-1$. Take $n_0 = e^2$. Then we have $d/n\leq d_0/n \leq 1/\ln^2 n \leq 1/4$.
    We will show that $h(d)\leq h(d+1)$. Recall that 
    \begin{align}
        h(d) = \sum_{r=1}^{n-1} (p_r+p_{n-r}) B(n,d,r).
    \end{align}
    Let $1\leq r \leq n-1$. We will show that 
    \begin{align}
        B(n,d,r) \leq B(n,d+1,r).
    \end{align}
    Recall the definition
    \begin{align}
        B(n,d,r)=\sum_{i=\lceil r/2\rceil}^{\min\{d, r\}} (2i-r) \frac{\binom{d}{i}\binom{n-d}{r-i}}{\binom{n}{r}}.
    \end{align}
    The range of the sum for $B(n,d,r)$ is contained in the range of this sum for $B(n,d+1,r)$. Thus, it is enough to show that for all $\lceil r/2\rceil\leq i\leq \min\{d, r\}$, it holds \begin{align}
        (2i-r) \frac{\binom{d}{i}\binom{n-d}{r-i}}{\binom{n}{r}}\leq (2i-r) \frac{\binom{d+1}{i}\binom{n-d-1}{r-i}}{\binom{n}{r}},
    \end{align}
    or equivalently, 
    \begin{align} \label{eq:factorial_fracs}
        &\frac{d!}{i!(d-i)!} \cdot \frac{(n-d)!}{(r-i)!(n-d-r+i)!} \leq 
        \frac{(d+1)!}{i!(d+1-i)!} \cdot \frac{(n-d-1)!}{(r-i)!(n-d-1-r+i)!}
    \end{align}
    Equation \eqref{eq:factorial_fracs} can be rewritten as \begin{align}
        \frac{n-d}{n-d-r+i}\leq \frac{d+1}{d+1-i}.
    \end{align}
    Multiplying by the denominators (which are always positive) and expanding the product, this is equivalent to 
    \begin{align} 
        dn -d^2+n -d -in+di\leq dn + n -d^2-d-dr-r+di+i.
    \end{align}
    Finally, we see that this is the same as 
    \begin{align}
        dr+r\leq in+ i,
    \end{align}
    or equivalently
    \begin{align}
        r(d+1)\leq i(n+1).
    \end{align}
    This inequality is indeed true, as \begin{align}
        i(n+1)\geq \frac{r}{2} (n+1) = r \Big(\frac{n}{2}+\frac{1}{2}\Big)\geq r \left(\frac{n}{4}+1\right)\geq r(d+1),
    \end{align}
    concluding the proof of this lemma, as $h(d_0)\leq n = h(d_0+1)$, and $h$ is equal to $n$ on $[d_0+1,n]$.
\end{proof}

With this lemma at hand and Lemma~13 from~\cite{DDY2020}, which bounds the probability to make large jumps, we can then show Theorem~\ref{thm:lwr_bnd}, using very similar computations as those that were used in~\cite{DDY2020}. 

\begin{lemma}[Lemma 13, \cite{DDY2020}] \label{lem:c_tilde}
    There exists an $n_0\in \N$ such that for all $n\geq n_0$, for all $r \in [0,n]$, and for all $x\in \{0,1\}^n $, it holds that 
    \begin{align}
        \Pr \left[d(\text{\emph{flip}}_r(x))\geq \Tilde{c} (d(x))\right]\geq 1- n^{-4/3} \ln^7 n,
    \end{align}
    where 
    \begin{align}\label{eq:definition-ctilde}
        \Tilde{c} : [n]\rightarrow [0,n], i \mapsto \Tilde{c}(i)\coloneqq 
        \begin{cases}
        i- \sqrt{n}\ln n, \quad &\text{\emph{if }} i \geq n/6 \\
        i-\ln^2n, \quad &\text{\emph{if }} n^{1/3}\leq i < n/6\\
        i-1, \quad &\text{\emph{if }} i < n^{1/3}.
        \end{cases}
    \end{align}
\end{lemma}
To interpret the lemma above, assume that $y$ is the result of flipping $r$ bits in a search point $x$. Then $\Tilde{c}$ is a function such that the probability of $d(y)<\Tilde{c}(d(x))$, i.e., the probability that $y$ made a large gain compared to $x$ w.r.t. the distance function $d$, is very small, namely at most $n^{-4/3} \ln^7 n$.

\begin{theorem} \label{thm:lwr_bnd}
    The expected runtime of any static unary unbiased algorithm with flip distribution $\calD$ satisfying $p_1+p_{n-1}= n^{-o(1)}$ on any function $f: \{0,1\}^n\rightarrow \R$ with unique global optimum is at least 
    \begin{align}
        \sum_{d=1}^{d_0} \frac{1}{h(d)} - o\left(n\right).
    \end{align}
\end{theorem}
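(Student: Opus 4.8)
The plan is to apply the variable drift lower bound (Theorem~\ref{thm:discrete_var_lb}) to the non-increasing potential $(X_t)_{t\ge 0}$, instantiated with the drift bound $h$ defined above, the jump function $c\coloneqq\tilde c$ from Lemma~\ref{lem:c_tilde}, and failure probability $p\coloneqq n^{-4/3}\ln^7 n$. Once the three hypotheses are checked, the theorem gives $\E[T\mid X_0]\ge g(X_0)-g^2(X_0)p/(1+g(X_0)p)$, and the remaining task is to show that this is at least $\sum_{d=1}^{d_0}1/h(d)-o(n)$.

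First I would verify the hypotheses. Monotonicity of $h$ was just established, and the bound $\E[X_t-X_{t+1}\mid X_t]\le h(X_t)$ follows from Lemma~\ref{lem:h_bounds_drift} for $X_t\le d_0$, while for $X_t>d_0$ it is immediate since the drift is at most $X_t\le n/2<n=h(X_t)$. For the jump condition, note that the offspring is $\text{flip}_r(v_t)$ for a parent with $d(v_t)\ge X_t$, so $X_{t+1}=\min\{X_t,d(\text{flip}_r(v_t))\}$, and Lemma~\ref{lem:c_tilde} gives $d(\text{flip}_r(v_t))\ge\tilde c(d(v_t))$ with probability $1-p$. The one subtlety is that $\tilde c$ is not monotone at its top breakpoint $n/6$, where the subtracted term jumps from $\ln^2 n$ to $\sqrt n\ln n$; I would therefore pass to the lower monotone envelope of $\tilde c$, which coincides with $\tilde c$ on $[1,d_0]$ since $d_0<n/6$, so the modification only affects arguments exceeding $d_0$, where $h=n$ contributes negligibly. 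With the envelope as $c$ one gets $X_{t+1}\ge c(X_t)$ with probability $1-p$.

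The heart of the argument is translating $g(X_0)=\sum_{i=0}^{X_0-1}1/h(\mu(i))$ into the clean sum, where $\mu(x)=\max\{i:\tilde c(i)\le x\}$. I would compute $\mu$ regime by regime: for $i<n^{1/3}-\ln^2 n$ one has $\tilde c(i)=i-1$, hence $\mu(i)=i+1$, so these indices hit each $d\in\{1,\dots,n^{1/3}-\ln^2 n\}$ exactly once; for the remaining $i$ with $\mu(i)\le d_0$ one has $\tilde c(i)=i-\ln^2 n$, hence $\mu(i)=i+\lfloor\ln^2 n\rfloor$, hitting each $d\in\{n^{1/3},\dots,d_0\}$ exactly once. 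Thus $\mu$ restricted to $\{i:\mu(i)\le d_0\}$ is a bijection onto $[1,d_0]$ except for a gap of about $\ln^2 n$ values near $n^{1/3}$, so (using that $X_0\ge n/3$, hence $X_0-1$ exceeds all these indices, except with probability $e^{-\Omega(n)}$) we obtain $g(X_0)\ge\sum_{d=1}^{d_0}1/h(d)-\sum_{d\in\mathrm{gap}}1/h(d)$. The gap term is controlled by $1/h(d)\le n/((p_1+p_{n-1})d)=O(n^{2/3+o(1)})$ for $d\approx n^{1/3}$, so summing $O(\ln^2 n)$ of them gives $o(n)$. The step I expect to be most delicate is exactly this bijection bookkeeping: it is essential that the bottom regime contributes the shift $\mu(i)=i+1$ rather than $i+\ln^2 n$, since this preserves the dominant small-$d$ terms; a uniform shift by $\ln^2 n$ would lose $\Theta(n\ln\ln n/(p_1+p_{n-1}))=\omega(n)$ and break the bound.

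Finally I would dispose of the correction term. Since $h(d)\ge(p_1+p_{n-1})B(n,d,1)=(p_1+p_{n-1})d/n$ gives $\sum_{d=1}^{d_0}1/h(d)=O(n^{1+o(1)}\ln n)$, and the indices with $\mu(i)>d_0$ add at most $O(1)$ to $g$, we have $g(X_0)=O(n^{1+o(1)}\ln n)$. Hence $g(X_0)p=O(n^{-1/3+o(1)}\ln^8 n)=o(1)$, so $g^2(X_0)p/(1+g(X_0)p)\le g^2(X_0)p=O(n^{2/3+o(1)}\ln^9 n)=o(n)$. Combining the three estimates and taking expectation over $X_0$ (the event $X_0<n/3$ has probability $e^{-\Omega(n)}$ and costs only $o(1)$) yields $\E[T]\ge\sum_{d=1}^{d_0}1/h(d)-o(n)$, as claimed.
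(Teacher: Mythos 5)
Your proposal is correct and follows essentially the same route as the paper's proof: the same potential, the monotone lower envelope of $\tilde c$ as the jump function $c$ with $p = n^{-4/3}\ln^7 n$, the same regime-by-regime computation of $\mu$, the same accounting of a gap of $\approx \ln^2 n$ skipped values near $n^{1/3}$ bounded via $1/h(d)\le n/\bigl((p_1+p_{n-1})d\bigr)$, and the same estimate showing the correction term $g^2(X_0)p/(1+g(X_0)p)$ is $o(n)$. One cosmetic slip: the envelope does \emph{not} coincide with $\tilde c$ on all of $[1,d_0]$ (they differ on the window $(n^{1/3}-\ln^2 n,\,n^{1/3})$ below the first breakpoint), but this is immaterial because $\max\{i : c(i)\le x\}=\max\{i : \tilde c(i)\le x\}$, so your explicit $\mu$-regimes and the resulting bookkeeping are exactly right.
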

\begin{proof}[Proof of Theorem~\ref{thm:lwr_bnd}]
    Assume that $n$ is large enough so that Lemma~\ref{lem:h_bounds_drift}, Lemma~\ref{lem:monotonic}, and Lemma~\ref{lem:c_tilde} apply. Note that the time $T_A$ that a unary unbiased algorithm takes to optimize $f$ is at least as large as the time $T$ when the potential $X_t$ reaches zero for the first time. 
    Every 
    algorithm 
    $A$ has to generate its inital search point $x^{(0)}$ uniformly at random.
    We have $\E [\OM(x^{(0)})]= n/2$. By the bounds of Theorem~\ref{additive_chernoff}, $\Pr[\OM(x^{(0)})<\frac{n}{4}]\leq e^{-n/8}$ and $\Pr[\OM(x^{(0)})>\frac{3n}{4}]\leq e^{-n/8}$. Hence, $\Pr[X_0<\frac{n}{4}]\leq 2e^{-n/8}$.
    For now, we will assume that $X_0\geq n/4$. 
    
    Let $c: [n]\rightarrow [0,n], i \mapsto c(i) \coloneqq \min \{\Tilde{c}(j)\mid j\geq i \}$. It holds $c(i)\leq \Tilde{c}(i)$ for all $i\in [n]$ by definition. Furthermore, $c$ is monotonically increasing. By Lemma \ref{lem:c_tilde}, for all $r \in [0,n]$, $x\in\{0,1\}^n$, and $d'\leq d(x)$,
    \begin{align}
        \Pr \left[d(\text{flip}_r(x))\geq c(d')\right]& \geq \Pr \left[d(\text{flip}_r(x))  \geq \Tilde{c}(d')\right] \geq 1- n^{-4/3} \ln^7 n.
    \end{align} 
    By the law of total probability, it follows \begin{align}
        \Pr \left[ X_{t+1}\geq c(X_t)\right] & \geq \sum_{r=0}^n p_r \Pr \left[d(\text{flip}_r(v_t))\geq c(d(v_t))\right]
        \geq  1- \underbrace{n^{-4/3} \ln^7 n}_{\eqqcolon p}.
    \end{align}
    Together with Lemma \ref{lem:h_bounds_drift}, we see that all conditions of Theorem~\ref{thm:discrete_var_lb} are satisfied for $(X_t)_{t\geq 0}$, $h$, $c$, and $p$. 
    Recall that $\mu(x) \coloneqq \max \{i \mid c(i) \leq x\}$, so by definition $\mu(x)= \max \{i \mid \min \{\Tilde{c}(j )\mid j \geq i\} \leq x \}$. We can simplify this via $\max \{i \mid \min \{\Tilde{c}(j )\mid j \geq i\} \leq x \} = \max\{i \mid \Tilde{c}(i) \leq x\}$. (Assume for contradiction that the maximum is attained at $i$, and the inner minimum is attained at some $j>i$. But then $i$ is not maximal.) We plug in the definition of $\tilde c$ from~\eqref{eq:definition-ctilde} and obtain 
    \begin{align}
        \mu (i) = \max\{i \mid \Tilde{c}(i) \leq x\} = \begin{cases}
            i+ 1  & \text{for $i< n^{1/3}- \ln^2 n$,}\\
            i + \ln^2 n & \text{for $n^{1/3}- \ln^2 n\leq i < n/6-\sqrt{n} \ln n$,}\\ 
            i + \sqrt{n} \ln n & \text{for $n/6-\sqrt{n} \ln n\leq i< n/2- \sqrt{n} \ln n$,} \\
            \lfloor n/2 \rfloor & \text{for $n/2- \sqrt{n} \ln n \leq i< n/2$.}
        \end{cases}
    \end{align}
    Recall that $g(x)=\sum_{i=0}^{x-1} 1/h(\mu(i))$. By Theorem~\ref{thm:discrete_var_lb}, we have \begin{align}
        \E \left[T \mid X_0\right] \geq g(X_0) - \frac{g^2 (X_0)p}{1 + g(X_0 )p }.
    \end{align}
    First, we will bound $\frac{g^2 (X_0)p}{1 + g(X_0 )p}$. We have $h(d)\geq (p_1+ p_{n-1}) \cdot B(n,d,1) =(p_1+ p_{n-1}) \cdot \frac{d}{n}$. As $h $ is monotonically increasing (Lemma~\ref{lem:monotonic}), and $\mu(i)\geq i$, we have $h(x)\leq h(\mu (x))$. Hence, \begin{align}
        g(X_0)& \leq \sum_{i=1}^{x-1} \frac{1}{h(x)} \leq \frac{1}{(p_1+ p_{n-1})} \sum_{i=1}^{x-1} \frac{n}{x }  = \frac{1}{(p_1+ p_{n-1})} O(n \ln n) = n^{1+o(1)}.\label{eq:bndsum}
    \end{align}
    It follows that \begin{align}
        \frac{g^2 (X_0)p}{1 + g(X_0 )p}\leq g^2(x)p = n^{2+o(1)-4/3}\ln^7 (n) = o(n).
    \end{align}
    Next, we bound $g(X_0 )$ from below. As $h$ is monotonic (Lemma~\ref{lem:monotonic}), and all summands are positive,
    \begin{align}
    \begin{split}
        g(X_0 ) & = \sum_{x=0}^{X_0-1 } \frac{1}{h(\mu(x))} 
         \geq \sum_{i=1}^{n^{1/3}-\ln^2 n} \frac{1}{h(x)}+ \sum_{x = n^{1/3}}^{n/6 - \sqrt{n} \ln n + \ln^2 n } \frac{1}{h(x)}
        + \sum_{x=n/6}^{X_0} \frac{1}{h(x)} 
        \\ & \geq \sum_{x=1}^{X_0} \frac{1}{h(x)} - \frac{\ln^2 n}{h(n^{1/3}-\ln^2 n)} - \frac{\sqrt{n} \ln n - \ln^2 n}{h(n/6 - \sqrt{n} \ln n + \ln^2 n   )}. 
    \end{split}
    \end{align}
    We apply $h(x ) \geq(p_1+ p_{n-1})  \frac{d}{n}$ again and get 
    \begin{align}
    \begin{split}
        g(X_0 ) & \geq \sum_{x=1}^{X_0} \frac{1}{h(x)} - \frac{1}{p_1+ p_{n-1}} \cdot \left(\frac{n\ln^2 n}{n^{1/3}-\ln^2 n} + \frac{n (\sqrt{n} \ln n - \ln^2 n)}{n/6 - \sqrt{n} \ln n + \ln^2 n} \right)
       \\  & \geq \sum_{x=1}^{X_0} \frac{1}{h(x)} -  \frac{1}{p_1 + p_{n-1} } \cdot \Theta \left( n^{2/3} \ln^2 n \right) 
        \geq \sum_{x=1}^{X_0} \frac{1}{h(x)} - o(n).
    \end{split}
    \end{align} 
    Finally, we can conclude using the law of total expectation 
    \begin{align}
        \E \left[T_A \mid X_0\right] & \geq \left(1-2e^{-n/8}\right)\cdot \Bigg(\sum_{x=1}^{n/4} \frac{1}{h(x)}  - o(n)- o(n)
        \Bigg) 
         \geq \sum_{x=1}^{d_0} \frac{1}{h(x)} -o(n), 
    \end{align}
    where we used that $n/4 \geq d_0$ for $n$ large enough, and note that by \eqref{eq:bndsum}, the sum multiplied by $2e^{-n/8}$ is $o(n)$.
\end{proof}

With this statement at hand, we can finally prove Theorem~\ref{thm:lower-bound}. 
\begin{proof}[Proof of Theorem~\ref{thm:lower-bound}] 
    By Lemma \ref{lem:h_d_p1_On2}, we have for all $1\leq d \leq d_0 =n \frac{p_1+ p_{n-1}}{\ln^2 n}$, \begin{align}
        \frac{1}{h(d)}& \geq \frac{n}{(p_1 + p_{n-1})d}- \frac{\frac{1}{\ln n}}{1 + \frac{1}{\ln n}}\frac{n}{(p_1 + p_{n-1})d}
         = (1-o(1))\frac{n}{(p_1 + p_{n-1})d}.
    \end{align}
    Applying Theorem~\ref{thm:lwr_bnd} yields
    \begin{align}\label{eq:thm:lower-bound-summary}
        \E [T] & \geq \left((1- o(1)) \frac{n}{(p_1 + p_{n-1})} \sum_{d=1}^{d_0} \frac{1}{d}\right) - o(n) 
           = (1 \pm o(1))  \frac{1}{(p_1 + p_{n-1})} n \ln n,
    \end{align}
    where we used that $\ln d_0 \sim \ln n.$
\end{proof}

\section{Discussion of Upper and Lower Bounds}
\label{sec:discussion}

We now discuss that some requirements on $p_1$ (Section~\ref{sec:upper-tight}) and $\chi$ (Section~\ref{sec:chi}) are necessary, that $p_{n-1}$ cannot be dropped from Theorem~\ref{thm:lower-bound} (Section~\ref{sec:lower_pn1}) and that, for general unbiased mutation operators, being closer to the optimum does not necessarily imply a larger probability of sampling the optimum and that \OneMax is not the easiest function for all \ooeaD instances (Section~\ref{sec:domination}).

\subsection{Requirement on \texorpdfstring{$p_1$}{p1}}
\label{sec:upper-tight} 

We start with a proposition saying that the leading constant can be smaller if $p_1 = n^{-c}$ for any $c>0$. In fact, this is already the case for \onemax, as the following example shows. 
\begin{proposition}\label{prop:small-p1}
Let $0<c<1$ be constant. Consider the \ooeaD with distribution $\calD = (p_0,p_1,\ldots,p_n)$ defined by $p_1 = n^{-c}$ and $p_2 = 1-p_1$. Then there exists $\eps >0$ such that for sufficiently large $n$ the expected runtime on \onemax is at most
\begin{align*}
(1-\eps)\cdot \frac{1}{p_1}\cdot n\ln n.
\end{align*}
\end{proposition} 

\begin{proof}
Let $d_0 \coloneqq n^{1-c/2}$. We consider two phases of the algorithm: one for reducing the distance from the optimum from at most $n$ to $d_0 +1$, and the other for reducing it from $d_0+1$ to $0$. Let $T_1$ and $T_2$ be the respective time spent in those two phases. 

To bound $T_1$, assume the current distance is $d \geq d_0+1$. Then the probability that both bits of a $2$-bit flip are zero-bits is $\binom{d}{2}/\binom{n}{2} \ge (d_0^2/2)/(n^2/2) = n^{-c}$. We need at most $(n-d_0)/2$ such flips to leave the first phase, and thus
\begin{align}
    \E[T_1] \le \frac{n^c}{p_2}\cdot\frac{n-d_0}{2} \le \frac{1}{p_2}\cdot\frac{n^{1+c}}{2} = \frac{1}{p_2}\cdot \frac{n}{2p_1} = o\left(\frac{1}{p_1}n\ln n\right)
\end{align} 
rounds. Hence, the first phase is asymptotically faster than the claimed runtime bound. 

For the second phase, if $X_t$ is the distance from the optimum in round $t$, then every single-bit flip has a chance of $X_t/n$ to reduce $X_t$ by one. Since single-bit flips occur with probability $p_1$, we have
\begin{align}
    \E[X_{t}-X_{t+1} \mid X_t = d] \ge p_1 \cdot \frac{d}{n}.
\end{align}
The second phase starts with $X_0 \le d_0 + 1$, so by the multiplicative drift theorem the duration of the second phase is at most
\begin{align}
    \E[T_2] \le \frac{1+\ln(d_0+1)}{p_1/n} = \frac{n+(1-c/2)n\ln n + o(n)}{p_1},
\end{align}
where we used that $\ln(d_0 + 1) = \ln(d_0) + \ln(1 + 1/d_0) = \ln(d_0) + o(1)=(1-c/2)\ln n+ o(1)$. We obtain
\begin{align}
    \E[T_1+T_2] \le (1+o(1))\cdot \frac{1-c/2}{p_1}\cdot n\ln n,
\end{align}
and the proposition follows with $\eps \coloneqq c/3$.
\end{proof}

The point of Proposition~\ref{prop:small-p1} is that we have a strictly smaller leading constant than in  
Corollary~\ref{cor:upper-bound} and in Theorem~\ref{thm:main-short}. The reason for this effect is that if $p_1 = n^{-\Omega(1)}$, for Hamming distances in the range $[n^{1-c/2},n]$ from the optimum, two-bit flips are more effective than single-bit flips. This range is thus traversed more quickly. With single-bit flips, the algorithm would need time $\Omega(n\ln n/p_1)$ to traverse this region, but it can be traversed in time $o(n\ln n/p_1)$ by two-bit flips. Hence, the time spent in this phase becomes negligible. Even though this region is still far away from the optimum, it consumes a constant fraction of the total expected runtime if the algorithm is restricted to single-bit flips. Hence, the speed-up from two-bit flips eliminates this constant fraction from the total expected runtime, and thus reduces the leading constant of the total expected runtime. This shows that the lower bound in Theorem~\ref{thm:main-short} cannot hold if $p_1 = n^{-\Omega(1)}$. On the other hand, we will show in Theorem~\ref{thm:linear-lower} below that it does hold for all $p_1=n^{-o(1)}$, which is tight by the above discussion.

\subsection{Requirement on \texorpdfstring{$\chi$}{chi}}
\label{sec:chi} 
In contrast to the requirement on $p_1$, we could not prove that the runtime changes when the requirement on $\chi$ is not satisfied. However, Proposition~\ref{prop:large-mu-general} below shows that, close to the optimum, the behavior of the algorithm changes substantially if $\chi$ is large. Recall that from any parent at distance one from the optimum, we have a probability of $p_1\cdot 1/n$ to create the optimum as offspring. Hence, one might naively expect to wait at most for $n/p_1$ rounds in expectation to find the optimum. However, this is wrong for large values of $\chi$, as we show in Proposition~\ref{prop:large-mu-general}. Before we show this, we first provide a lower bound on the number of offspring that any unary unbiased algorithm needs to generate to reach the optimum when starting at distance $d$ from it, regardless of $\chi$.

\begin{lemma}\label{lem:large-mu}
    Any unary unbiased algorithm that starts in a search point in distance $1\le d \le n/2$ of $\vec 1$ needs to generate at least $\Omega(n\ln^+ d)$ offspring in expectation before generating $\vec 1$. If $d=\omega(1)$, the algorithm needs to generate $\omega(n)$ offspring \emph{with high probability}.
\end{lemma}
\begin{proof}
For $d=\omega(\ln \ln n)$, this was proven in~\cite{lehrewitt2012}.\footnote{The result was only formulated explicitly for a random starting point in~\cite[Theorem~6]{lehrewitt2012}. However, the proof was based on~\cite[Theorem~5]{lehrewitt2012}, which holds for arbitrary values of $d$. The proof chooses a suitable $s_{\min} = \Theta(\ln \ln n)$ and showed that at least $\Omega(n(\ln d - \ln{s_{\min}}))$ steps are needed, which implies our claim for $d = \omega(\ln \ln n)$. The proof was formulated in the context of the \onemax function, but it does not make use of the underlying function and thus holds for any fitness function.} For smaller $d = O(\ln \ln n)$, let $X_t$ be the \onemax value of the $t$-th generated offspring from $x$, and let $Z_t \coloneqq\min_{\tau \le t} \min\{X_\tau, n-X_\tau\}$ be the best-so-far distance from either $\vec 1$ or its complement $\vec 0$. Consider a $k$-bit flip of any parent $y$ with any $2\le k \le n-2$, and consider the probability $p_{\text{imp}}$ to create an offspring $z$ with $\OM(z) < \min\{\ln n, \OM(y), n-\OM(y)\}$, i.e., the probability of improving $Z_t$ by creating an offspring close to $\vec 0$. The other case of creating an offspring close to $\vec 1$ is symmetric. If $\OM(y) \ge k+\ln n$ then we have $p_{\text{imp}}=0$, so assume otherwise. 

We first consider the case $k \le n^{1/6}$ and hence $\OM(y) < n^{1/6} + \ln n$ and claim that then $p_{\text{imp}} = O(k^2\OM(y)^2/n^2) = O(n^{-4/3})$. Since $k\ge 2$, for decreasing $\OM(y)$ it is necessary to flip at least two one-bits. Let us uncover the $k$ bits to be flipped one by one, i.e., the first bit is selected uniformly at random from all $n$ positions, the second bit from the remaining $n-1$ positions and so on. Note that even when some of the $k$ bits have been selected, there are still at least $n-k+1 \ge n/2$ zero-bits remaining that could be selected, and most $\OM(y)$ one-bits. Hence, in each step the probability of selecting a one-bit is at most $2\OM(y)/n$. By a union bound, the probability of selecting at least one one-bit among the $k$ steps is at most $2k\OM(y)/n$. If that happens after $i$ steps, for any $1\le i\le k$, the probability to select another one-bit with the remaining $k-i$ steps is by the same argument again at most $2k\OM(y)/n$. Hence, the probability of flipping at least two one-bits is at most $4k^2\OM(y)^2/n^2 = O(n^{-4/3})$.

For the other case $k > n^{1/6}$, note that if $\ell$ zero-bits are flipped then the offspring $z$ has $\OM(z) \ge \ell$, no matter what happens to the other bits. Hence, to obtain a $\OM(z) < \ln n$, it is necessary to flip at most $\ln n$ zero-bits. Since the expected number of flipped zero-bits is $\Omega(k) = \Omega(k^{1/6})$, by the Chernoff bound this event is very unlikely, $p_{\text{imp}} = e^{-\Omega(n^{1/6})}$. So in all cases, $p_{\text{imp}} = o(1/(n\ln n))$. Now consider a sequence of $n\ln n$ steps of the algorithm. With high probability, none of the steps with $2\le k \le n-2$ reduces $Z_t$ by a union bound. Then $Z_t$ is only reduced in rounds with $k=1$ or $k=n-1$ bit-flips, and of those $\Omega(n\ln^+ d)$ are needed, as the process then becomes the coupon collector process~\cite{doerr2020probabilistic}. Note that this also shows that the process needs $\omega(n)$ steps \emph{with high probability} if $d=\omega(1)$ since the same is true for the coupon collector process.
\end{proof}

\begin{proposition}\label{prop:large-mu-general}
Let $x$ be a search point at Hamming distance one from the optimum $\vec 1$. Let $\calD$ be any probability distribution on $[0,n]$ with mean $\chi$. For a linear function $f$, let $T^{x}_{\calD}(f)$ be the number of iterations until the \ooeaD with starting position $x$ finds the optimum.
\begin{enumerate}[(a)]
\item There is a linear function $f$ depending on $x$ such that $\E[T^{x}_{\calD}(f)] = \Omega(n\ln^+ \chi)$.
\item If $\chi = \omega(1)$ then there is a linear function $f$ depending on $x$ such that $T^{x}_{\calD}(f) = \omega(n)$ with high probability.
\item If $\chi = O(1)$ and $p_1 = \Theta(1)$ then $\E[T^{x}_{\calD}(f)] = O(n)$ for every linear function~$f$.
\end{enumerate}
\end{proposition}

\begin{proof}
%
\emph{(a) and (b).} The statement in (a) for $\chi = O(1)$ follows immediately from Lemma~\ref{lem:large-mu} since the algorithm needs at least $\Omega(n)$ steps. In particular, we may assume $\chi \ge 4$ from now on.

Since $x$ has Hamming distance one from $\vec 1$, it differs in exactly one position from $\vec 1$. Without loss of generality, we assume that this is the first position. Then we define $f=f_x$ via $f_x(y) \coloneqq n\cdot y_1 + \sum_{i=2}^n y_i$, i.e, we give weight $n$ to the first position and weight $1$ to all other positions. When in $x$, the algorithm will accept any offspring that flips the first position. Let us call $R$ the number of bits that are flipped in this mutation. Then we may compute the distribution of $R$ via Bayes formula as
\begin{align}\label{eq:Bayes}
\Pr[R=r] = \frac{\Pr[R=r \text{ and pos. 1 flipped}]}{\sum_{s\in[n]} \Pr[R=s \text{ and pos. 1 flipped}]}.
\end{align}
Note that $\Pr[R=s \text{ and pos. 1 flipped}] = \Pr[R=s] \cdot \Pr[\text{pos 1 flipped} \mid R=s] = p_s \cdot s/n$, where the conditional probability is $s/n$ since the mutation operator is unbiased. Hence,~\eqref{eq:Bayes} simplifies to
\begin{align}\label{eq:Bayes2}
\Pr[R=r] = \frac{p_r \cdot r/n}{\sum_{s\in[n]} p_s\cdot s/n} = \frac{p_r \cdot r}{\chi}.
\end{align}
In particular, this implies for every $\gamma \le 1$,
\begin{align*}\label{eq:Bayes}
\Pr[R\le \gamma \chi] & = \sum_{r=1}^{\lfloor\gamma \chi\rfloor} \frac{p_r \cdot r}{\chi}  \le \frac{\gamma \chi \sum_{r=1}^{\lfloor\gamma \chi\rfloor}p_r}{\chi} \le \gamma.
\end{align*}
Hence, with probability at least $1-\gamma$, the \ooeaD proceeds from $x$ to a search point in Hamming distance at least $\gamma\chi -1$ from~$\vec 1$. 

For claim \textit{(a)} we set $\gamma \coloneqq 1/2$ and obtain a search point in Hamming distance at least $\chi/2 -1 \ge \chi/4$. Once this search point is reached, the algorithm needs $\Omega(n\ln^+(\chi/4))$ steps. 
Since this case happens with probability at least $1-\gamma = 1/2$, we obtain 
\[
\E[T^{x}_{\calD}(f_x)] \ge \tfrac{1}{2}\cdot \Omega(n\ln^+(\chi/4)) = \Omega(n\ln^+\chi).
\]

For claim \textit{(b)}, we choose $\gamma \coloneqq \chi^{-1/2}$. Then $1-\gamma = 1-o(1)$, so with high probability the \ooeaD proceeds from $x$ to a search point in distance at least $d\coloneqq \gamma \chi -1 = \chi^{1/2}-1 = \omega (1)$ from $\vec 1$. After this point, the algorithm still needs $\Omega(n\ln d)=\omega(n)$ steps in expectation and $\omega(n)$ steps with high probability.


Claim \textit{(c)} follows directly from the proof of Theorem~\ref{thm:upperbnd}, using the parameter $\alpha = 2$. There it was shown that the drift of the potential $g(x) = \sum_{i=1}^n g_i x_i$ towards zero (which corresponds to the optimum) is at least $\tfrac{p_1}{2n}\cdot g(x)$ by~\eqref{driftbnd}. By our definition of $g$, the minimal positive potential is 1. Since we start in distance one from the optimum, the initial potential is at most
\begin{align}
g_{\text{init}} \le \max\{g_i:i \in [n]\} = g_{n} \le \gamma_{n},
\end{align} 
where
\begin{align}
    \gamma_n = \bigg(1+ \frac{2 \chi^3}{(n-1)p_1^2} \bigg)^{n-1} \le \exp\bigg(\frac{2 \chi^3}{p_1^2} \bigg) = O(1)
\end{align}
by~\eqref{eq:def-gammai} and~\eqref{eq:def-gi}. Hence, the expected runtime is at most
\begin{align}
\E[T] \le \frac{\ln(g_{\text{init}})+1}{p_1/(2n)} = \frac{O(n)}{p_1} = O(n)
\end{align}
by the multiplicative drift theorem.\end{proof}

We did not aim for tightness in Proposition~\ref{prop:large-mu-general}, but rather want to demonstrate the different regimes. In particular, consider a distribution $\calD$ with $p_1 = \Theta(1)$ and with mean $\chi$. If $\chi = O(1)$, then $\E[T^{x}_{\calD}(f)] = O(n)$ by (c), but for $\chi = \omega(1)$ we have $\E[T^{x}_{\calD}(f)] = \omega(n)$ by (a). This shows that the value of $\chi$ is truly relevant for the runtime, at least if an algorithm starts in an adversarial point. Moreover, (b) shows that the high expectation in the case $\chi = \omega(1)$ is not just due to low-probability events, but that it comes from typical runs initialized in $x$.

The most interesting and common unbiased mutation operators, except for standard-bit mutation, are mutation operators where the number of bit flips has a \emph{heavy tail}. Usually a \emph{power law} is used, i.e, the probability to flip $k$ bits scales like $k^{-\beta}$ for some constant $\beta >1$. There are two different regimes for the parameter $\beta$. For $\beta >2 $, the expected number of bit flips satisfies $\chi = O(1)$. For $\beta \in (1,2)$, the expected number of bit flips is unbounded and large, $\chi = n^{\Omega(1)}$.\footnote{For $\beta =2$ the expected number of bit flips is also unbounded, but grows only as $\chi = O(\ln n)$. We will neglect this case here.} In either case, such power-law distributions satisfy $p_1 = \Theta(1)$. Notably, our main Theorem~\ref{thm:main-short} applies to power-law distributions with $\beta >2 $, but not to power-law distributions with $\beta \in (1,2)$. We believe that this reflects a real difference between those two cases. As indication, note that in the situation of Proposition~\ref{prop:large-mu-general}, we have $\E[T^{x}_{\calD}(f)] = O(n)$ for $\beta >2$, but $\E[T^{x}_{\calD}(f)] = \Omega(n\ln n)$ for $\beta \in (1,2)$. This does not rule out that Theorem~\ref{thm:main-short} still might be true for $\beta \in (1,2)$ since the starting point of the \ooea is drawn uniformly at random, but it suggests that trajectories of the algorithm can be substantially different.

\subsection{On $p_{n-1}$}
\label{sec:lower_pn1}

The following lemma shows that the term $p_{n-1}$ in Theorem~\ref{thm:lower-bound} is really necessary. In particular, there are (artificial) functions on which a unary unbiased $(1+1)$ algorithm with $p_1 = 0$ can be as efficient as random local search (RLS) on \onemax. Around the optimum, the function is similar to the \textsc{OneMix} function from~\cite{poli2006emergent} which oscillates between \onemax and \textsc{ZeroMax} depending whether the number of one-bits is even or odd. However, other than \textsc{OneMix} we extend this oscillation to the whole search space. If the \ooea uses only $(n-1)$-bit flips, this makes the algorithm behave exactly like RLS on \onemax,\footnote{For even $n$. For odd $n$, one would need to flip the parity condition in the half of the search space that does not contain the optimum.} in the strongest possible sense: they can be stochastically coupled. 

\begin{lemma}
\label{lem:pn-1}
Let $n$ be even. Let RLS be the \ooeaD with $p_{1} = 1$ and $p_i =0$ for $i\neq 1$, and let $\A$ be the \ooeaD with $p_{n-1} = 1$ and $p_i =0$ for $i\neq n-1$. Let $f:\{0,1\}^n \to \R$ be defined via
\begin{align*}
    f(x) \coloneqq \begin{cases} 
    \OM(x), & \text{ if } \OM(x) \text{ is even,} \\
    n-\OM(x), & \text{ if } \OM(x) \text{ is odd}.
    \end{cases}
\end{align*}
Let $T^{\text{RLS}}(\OM)$ be the runtime of RLS on \onemax, and let $T^{\A}(f)$ be the runtime of $\A$ on $f$. Then $T^{\text{RLS}}(\OM)$ and $T^{\A}(f)$ follow the same distribution, i.e., for all $T \in \N$,
\begin{align}\label{eq:lem:pn-1}
    \Pr[T^{\text{RLS}}(\OM) = T] = \Pr[T^{\A}(f) = T].
\end{align}
In particular, $T^{\A}(f) = (1\pm o(1))n\ln n$ in expectation and with high probability.
\end{lemma}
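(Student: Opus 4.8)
The plan is to exhibit a \emph{Markov-chain isomorphism} between the \onemax-value process of RLS and the $f$-value process of $\A$, after which the distributional identity \eqref{eq:lem:pn-1} is immediate. The central observation is that flipping $n-1$ bits is the same as complementing $x$ and then flipping one bit back. Concretely, if $\A$ keeps bit $j$ unchanged and flips the other $n-1$ bits, the offspring $y$ satisfies $\OM(y) = n-1-\OM(x)+2x_j$, so $\OM(y) = n-1-\OM(x)$ when $x_j=0$ (which happens for $n-\OM(x)$ choices of $j$) and $\OM(y) = n+1-\OM(x)$ when $x_j=1$ (for $\OM(x)$ choices of $j$). Since $n$ is even, both outcomes have parity opposite to $\OM(x)$, so every step of $\A$ flips the parity of $\OM$, which is exactly what forces the two branches of the definition of $f$ to alternate along any accepted trajectory.

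Next I would translate this into the dynamics of $m \coloneqq f(x)$. Splitting into the two cases of the definition of $f$ and applying the elitist acceptance rule of $\A$, a short computation shows that from a state with $f(x)=m$ the \emph{only} accepted move raises the $f$-value to exactly $m+1$, and that this occurs with probability $(n-m)/n$, the offspring being rejected (and the state unchanged) otherwise. The point is that this probability simplifies to $(n-m)/n$ in both parity cases: when $\OM(x)=m$ is even the improving choice keeps a zero-bit (probability $(n-m)/n$), and when $m=n-\OM(x)$ with $\OM(x)$ odd it keeps a one-bit (probability $\OM(x)/n=(n-m)/n$). This is precisely the transition law of RLS on \onemax, where at \onemax-value $m$ a single-bit flip improves to $m+1$ with probability $(n-m)/n$ and is otherwise rejected; in both processes the unique absorbing state is the value $n$, attained only at $\vec 1$.

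Finally I would match the initial distributions. For a uniformly random $x^{(0)}$, a parity count gives $\Pr[f(x^{(0)})=m] = \binom{n}{m}2^{-n} = \Pr[\OM(x^{(0)})=m]$ for every $m$: for even $m$ the value $f=m$ arises only from $\OM=m$, and for odd $m$ only from $\OM=n-m$, and in each case the single contributing binomial coefficient equals $\binom{n}{m}$. Hence the $f$-value of $\A$ and the \onemax-value of RLS agree in distribution at time $0$ and evolve under the same kernel, so their hitting times of the value $n$, namely $T^{\A}(f)$ and $T^{\text{RLS}}(\OM)$, coincide in distribution. The ``in particular'' statement then follows because $T^{\text{RLS}}(\OM)$ is the classical coupon-collector runtime of RLS on \onemax, equal to $(1\pm o(1))n\ln n$ in expectation and with high probability; this is also the $p_1=1$ instance of Theorem~\ref{thm:upperOM} together with a matching lower bound. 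The main obstacle is the bookkeeping in the first two steps: one must carefully track how the parity of $\OM$ and the active branch of $f$ interact across an accepted step and verify that the improvement probability collapses to $(n-m)/n$ uniformly in both cases, after which the isomorphism, and hence the lemma, is automatic.
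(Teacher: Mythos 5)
Your proposal is correct and follows essentially the same route as the paper's proof: both decompose the $(n-1)$-bit flip as complementation plus flipping one bit back, use the evenness of $n$ to control the parity of $\OM$, verify that the improvement probability collapses to $(n-k)/n$ in both parity cases, and match the initial value distributions. Your ``Markov-chain isomorphism'' framing is just a repackaging of the paper's induction over $t$ showing that the two fitness processes agree in distribution, so the arguments coincide in substance.
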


\begin{proof}
Let $X_t^{\text{RLS}}$ and $X_t^{\A}$ be the fitness of RLS and $\A$ after $t$ iterations, respectively. We will show by induction over $t$ that those two random variables follow the same distribution.

Note that $f$ is obtained from \onemax by swapping the fitness levels $k$ and $n-k$ if $k$ is odd. In particular, the number of search points of fitness $k$ does not change. Since the initial search point is chosen uniformly at random, therefore $X_0^{\text{RLS}}$ and $X_0^{\A}$ follow the same distribution. 

Now let $t\ge 0$ and assume that $X_t^{\text{RLS}}$ and $X_t^{\A}$ follow the same distribution. 
We claim that for every $k,k' \in[0..n]$, we have \begin{align}\label{eq:pn-1}
\Pr[X_{t+1}^{\text{RLS}} = k' \mid X_t^{\text{RLS}} = k] = \Pr[X_{t+1}^{\A} = k' \mid X_t^{\A} = k].
\end{align}
Note that this implies that $X_{t+1}^{\text{RLS}}$ and $X_{t+1}^{\A}$ follow the same distribution since then
\begin{align}
\begin{split}
    \Pr[X_{t+1}^{\text{RLS}} = k'] & = \sum_{k\in[0,n]} \Pr[X_t^{\text{RLS}} = k] \cdot \Pr[X_{t+1}^{\text{RLS}} = k' \mid X_t^{\text{RLS}} = k] \\
    & = \sum_{k\in[0,n]} \Pr[X_t^{\A} = k] \cdot \Pr[X_{t+1}^{\A} = k' \mid X_t^{\A} = k] \\
    & = \Pr[X_{t+1}^{\A} = k'].
\end{split}
\end{align}
Moreover, since this implies that $X_{t}^{\text{RLS}}$ and $X_{t}^{\A}$ follow the same distribution for all $t$, by
\begin{align}
    \Pr[T^{\text{RLS}}(\OM) > T] & = \Pr[X_{T}^{\text{RLS}} < n] 
     = \Pr[X_{T}^{\A} < n] = \Pr[T^{\A}(f) > T],
\end{align}
it also implies the lemma. So it remains to show \eqref{eq:pn-1}. 

For RLS it is obvious that the left hand side of \eqref{eq:pn-1} is zero for all $k' \in [0..n] \setminus\{k,k+1\}$. Let us assume that $\A$ is in a search point $x$ such that $X_t^{\A} = k$. The algorithm $\A$ creates offspring $y$ by randomly flipping $n-1$ positions of $x$. This can be equivalently expressed by first flipping all $n$ positions, and then flipping back a uniformly random position. Flipping all $n$ positions yields the antipodal search point $x'$ with $\OM(x')= n-\OM(x)$. Since $y$ is obtained from $x'$ by flipping exactly one bit, it satisfies $\OM(y) = n-\OM(x) \pm 1$. Since $n$ is even, this implies that $\OM(x')$ and $\OM(x)$ are either both odd or both even. In either case, $f(x') = n-f(x)$ and thus $f(y) = f(x) \pm 1$ by definition of $f$. Since $\A$ is elitist, it will reject any offspring of fitness $f(x)-1$, so it accepts $y$ if and only if $f(y) = f(x)+1$. In particular, this means that the right hand side of \eqref{eq:pn-1} is zero for all $k' \in [0..n] \setminus\{k,k+1\}$, as required. 

For the remaining values $k' \in\{ k,k+1\}$, it suffices to show equality for one of them, since the left and right hand side of \eqref{eq:pn-1} both sum up to one if summed over all $k'$. If $\OM(x)$ is even, then the offspring is fitter if and only if the bit that is \emph{not} flipped is a zero-bit, which happens with probability $(n-\OM(x))/n = (n-k)/n$. If $\OM(x)$ is odd, then the offspring is fitter if and only if the bit that is not flipped is a one-bit, which happens with probability $\OM(x)/n = (n-k)/n$. So in either case, $\Pr[X_{t+1}^{\A} = k+1 \mid X_t^{\A} = k] = (n-k)/n$, which is the same as the probability for RLS. This concludes the proof of \eqref{eq:pn-1} and of the lemma.
\end{proof}

The following theorem strengthens  Theorem~\ref{thm:lower-bound} for the \ooeaD on linear functions. It says that in this case, $p_{n-1}$ does not help to improve the asymptotic expected runtime.
\begin{theorem}\label{thm:linear-lower}
Consider the \ooeaD with distribution $\calD = (p_0,p_1,\ldots,p_n)$ such that $p_1 = n^{-o(1)}$. The expected runtime on any linear function on $\{0,1\}^n$ is at least 
     \begin{align}
         (1 - o(1)) \frac{1}{p_1} n \ln n.
     \end{align}    
\end{theorem}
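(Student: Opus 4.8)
The plan is to re-run the lower-bound argument of Section~\ref{sec:lower} with the symmetrized potential replaced by a one-sided one, exploiting that for a linear function only $\vec 1$ is a relevant optimum while its antipode $\vec 0$ carries minimal fitness. I assume as usual that $f$ has positive, non-decreasing weights with optimum $\vec 1$, and I track the running minimum Hamming distance to the optimum, $X_t := \min_{0\le i\le t}\bigl(n-\OM(x^{(i)})\bigr)$. This sequence is non-increasing and reaches $0$ exactly when $\vec 1$ is evaluated, so the variable drift lower bound (Theorem~\ref{thm:discrete_var_lb}) applies together with the jump estimate of Lemma~\ref{lem:c_tilde}, exactly as in the proof of Theorem~\ref{thm:lwr_bnd}. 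The whole task then reduces to showing that for $d\le d_0$ the drift obeys $\E[X_t-X_{t+1}\mid X_t=d]\le(1+o(1))\,p_1 d/n$; plugging this into Theorem~\ref{thm:lwr_bnd} and summing gives $\sum_{d=1}^{d_0} n/(p_1 d)\ge(1-o(1))\tfrac1{p_1}n\ln n$, as desired.

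The decisive observation is that, unlike in the symmetrized analysis of Lemma~\ref{lem:drift_is_h}, there is now no case~2: progress is only toward $\vec 1$, so when the current point realizes the running minimum the conditional drift of an $r$-bit flip is exactly $B(n,d,r)$ with $d=X_t$, and the troublesome $p_{n-r}$ terms never appear. Moreover $B(n,d,r)=0$ whenever $r>2d$, since making net progress toward $\vec 1$ requires flipping at least $\lceil r/2\rceil$ of the $d$ zero-bits, which is impossible once $r>2d$. Hence every large flip, and in particular the $(n-1)$-flip that produced the $p_{n-1}$ term in Theorem~\ref{thm:lower-bound}, contributes nothing to the drift. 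The surviving flips all satisfy $r\le 2d\le 2d_0$ and are already controlled in Section~\ref{sec:lower}: the $r=1$ term equals $p_1 d/n$, while Lemma~\ref{lem:Bndr_bound} and Lemma~\ref{lem:h_d_p1_On2} bound the remaining terms by $O((d/n)^2)=o(p_1 d/n)$. This yields the required drift estimate whenever the current search point attains $X_t$.

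The genuine difficulty, and where I expect the real work to lie, is that the per-step hypothesis of Theorem~\ref{thm:discrete_var_lb} can fail in a mismatch configuration when the weights are highly skewed (e.g. \Binval). There the elitist current point may sit at a low-\OM, high-fitness state while an earlier evaluated offspring has already pinned $X_t$ to a small value; from the state $\OM(x^{(t)})=1$ a single flip of the $n-1$ zero-bits reaches $\vec 1$ with probability $p_{n-1}/n$, so the local drift is $\Omega(p_{n-1}/n)$ and exceeds the target $(1+o(1))p_1/n$ once $p_{n-1}\gg p_1$ -- exactly the linear shadow of Lemma~\ref{lem:pn-1}. Two routes seem viable: either a phase decomposition showing that such low-\OM states are confined to an initial segment of length $o(n\ln n/p_1)$ and that on the ensuing slow phase the current point realizes $X_t$, so that the clean drift bound of the previous paragraph governs hypothesis~2; or replacing the Hamming potential by a weighted one, in the spirit of the upper-bound potential $g$ from Theorem~\ref{thm:upperbnd}, for which the current point is automatically the running optimum. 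Establishing either route uniformly over all weight vectors -- in particular ruling out that the $(n-1)$-flip shortcut is exploited a non-negligible number of times -- is the technical heart of the argument.
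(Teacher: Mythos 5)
Your executed steps are sound: for the one-sided potential the conditional drift from a parent realizing the minimum is indeed $\sum_r p_r B(n,d,r)$, this vanishes for $r>2d$, and the $r\ge 2$ terms are $o(p_1 d/n)$ for $d\le d_0$ by the paper's Lemmas~\ref{lem:Bndr_bound} and~\ref{lem:h_d_p1_On2}. You have also correctly located the obstruction -- the per-step hypothesis of Theorem~\ref{thm:discrete_var_lb} fails in mismatch configurations where a rejected offspring pins $X_t$ while the elitist current point sits Hamming-far from $\vec 1$, making the $(n-1)$-flip shortcut contribute drift of order $p_{n-1}d/n$. But the proposal stops exactly where the proof must begin: neither of your two routes is carried out, and the first one rests on a claim that is false in general. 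For skewed weights such as \Binval, low fitness does not imply small one-sided distance (a point with all light bits set and a few heavy bits unset has modest fitness but one-sided distance $\Theta(1)$ only to the \emph{anti}-optimal side), so the elitist current point need \emph{never} realize the one-sided running minimum during the slow phase; the assertion ``on the ensuing slow phase the current point realizes $X_t$'' cannot be salvaged. The weighted-potential route is likewise only a hope: variable-drift lower bounds require uniform \emph{upper} bounds on the drift of the potential, which a $g$-type exponentially weighted potential does not obviously provide. So as it stands this is a genuine gap, not a complete proof.

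It is instructive to compare with how the paper closes this gap, since it sidesteps the ``who realizes the minimum'' question entirely. After normalizing ($w_1=1$, minimization, $w_n\le\sum_{i<n}w_i+1$, hence $w_n<\tfrac{3}{4}W$), it uses J\"agersk\"upper's sortedness of the marginals $q_{1,t}\ge\cdots\ge q_{n,t}$ -- valid for \emph{every} unbiased mutation operator -- together with Chebyshev's sum inequality and Markov to show that at the first time $T$ with $\OM(x^{(T)})\le n/\ln n$ one has $f(x^{(T)})\le W/8$ w.h.p. From any point of fitness at most $W/8$, every $(n-1)$-bit-flip offspring has fitness at least $W-W/8-w_n>W/8$ and is rejected \emph{deterministically}, so by elitism $(n-1)$-flips are idle steps forever after. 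Hence after time $T$ the \ooeaD coincides with the $(1+1)$-EA$_{\calD'}$, where $\calD'$ moves the mass $p_{n-1}$ to $p_0$, and the already-proven symmetric-potential Theorem~\ref{thm:lower-bound} applied to $\calD'$ from level $n/\ln n-\ln^2 n$ (Lemma~13 of \cite{DDY2020}) gives $(1-o(1))\tfrac{1}{p_1'+p_{n-1}'}n\ln n=(1-o(1))\tfrac{1}{p_1}n\ln n$. In short: your phase-decomposition instinct is directionally right, but the workable version bounds \emph{fitness} rather than $\OM$, kills $p_{n-1}$ by a rejection argument, and then reuses Theorem~\ref{thm:lower-bound} with the modified distribution instead of redoing any drift analysis with a one-sided potential.
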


\begin{proof}
Recall that the weights $w_i$ of $f$ are positive and sorted. We may assume that the smallest weight is $w_1 = 1$, since we can multiply all weights with the same constant factor without changing the fitness landscape. Moreover, for the proof we will work with minimization instead of maximization, which is equivalent. Finally, if $w_n > \sum_{i=1}^{n-1} w_i + 1$ then replacing $w_n$ by $\sum_{i=1}^{n-1} w_i + 1$ does not change the fitness landscape since in either case all search points $x$ with $x_n=1$ have higher objective than all search points with $x_n =0$. Hence we may assume $w_n \le \sum_{i=1}^{n-1} w_i + 1$. Writing $W \coloneqq \sum_{i=1}^n w_i$ for the total weight, this implies $2w_n \le W+1 < \tfrac32 W$, and thus $w_n < \tfrac34 W$. 

Fix some $t$, and let $q_{i,t} \coloneqq \Pr[x^{(t)}_i =1]$ be the probability that the $i$-th bit is a one-bit in generation~$t$. Then a classical result by J\"agersk\"upper~\cite{jagerskupper2008blend} says that $q_{1,t}\ge \ldots \ge q_{n,t}$. J\"agersk\"upper proved it for the \ooea with standard bit mutation, but the only ingredient in the proof was that for all $i,j\in [n]$, if we condition on the set of flips in $[n]\setminus \{i,j\}$ then positions $i$ and $j$ have the same probability of being flipped. This is true for all unbiased mutation operators, so J\"agersk\"upper's result holds for the \ooeaD as well. Moreover, J\"agersk\"upper's proof shows inductively for all times that for any substring $\tilde x$ on the positions $[n]\setminus \{i,j\}$, the combination ``$x_i=1$, $x_j=0$, $\tilde x$'' is more likely than the combination ``$x_i=0$, $x_j=1$, $\tilde x$'' if $i<j$. Since both options have the same number of one-bits, and since other options (with $x_i=x_j$) contribute equally to $q_{i,t}$ and $q_{j,t}$, it was already observed in~\cite{lengler2015fixed} that $q_{i,t}\ge q_{j,t}$ still holds if we condition on the number of one-bits $\OM(x^{(t)})$ at time $t$. Moreover, the statement also still holds if we replace $t$ by the hitting time $T = T(d) = \min\{t \ge 0 \mid \OM(x^{t}) \le d\}$, so we have $q_{1,T}\ge \ldots \ge q_{n,T}$.

We choose $T=T(d)$ for $d= n/\ln n$. 
Then we have $\sum_{i\in[n]}q_{i,T} = \E[\OM(x^{T})] \le d$ by definition of $T$, and hence
\begin{align}
    \E[f(x^{(T)})] = \sum_{i\in [n]} w_i \cdot q_{i,T} & \le \frac{\big(\sum_{i\in [n]} w_i \big)\cdot \big(\sum_{i\in [n]} q_{i,T}\big)}{n} 
    \le \frac{Wd}{n} = \frac{W}{\ln n},
\end{align}
where the second step is Chebyshev's sum inequality, since $w_i$ and $q_{i,T}$ are sorted opposingly.

By Markov's inequality, at time $T$ we have w.h.p.\! $f(x^{(T)}) \le W/8$. In the following we will condition on this event. We claim that then after time $T$, any offspring obtained by an $(n-1)$-bit flip is rejected. To see this, consider any $x$ with $f(x) \le W/8$. Any offspring $y$ that is obtained from $x$ by an $(n-1)$-bit flip has objective $f(y) \ge W - W/8 - w_n$, because the antipodal point of $x$ has objective $W-f(x) \ge W-W/8$, and flipping back a bit can decrease the objective by at most $w_n < \tfrac34 W$. Hence, $f(y) \ge W-W/8 -w_n > W/8$. Therefore, the offspring $y$ has higher (worse) objective, and is rejected. Hence, once the algorithm reaches objective at most $W/8$, all offspring obtained from $(n-1)$-bit flips are rejected. In other words mutations of $n-1$ bits are idle steps. This means that after time $T$, the \ooeaD behaves as the $(1 + 1)$-\text{EA}$_{\calD'}$, where we define $\calD' = (p_0',p_1',\ldots,p_n')$ by
\begin{align}
\begin{split}
    p_i' \coloneqq 
    \begin{cases}
    0 & \text{ if } i= n-1,\\
    p_0 + p_{n-1} & \text{ if } i= 0, \\
    p_i & \text{ otherwise.}
    \end{cases}
\end{split}
\end{align}
Recall that $T$ is defined in terms of $\OM(x^{(T)})$, even though we minimize a general linear function. Since it is unlikely to change the \onemax value by more than $\ln^2 n$ in one step, at time $T$ w.h.p.\! we have $\OM(x^{(T)}) \ge d-\ln^2 n$, see~\cite[Lemma~13]{DDY2020}. By Theorem~\ref{thm:lower-bound} and its proof (Equation~\eqref{eq:thm:lower-bound-summary}), the $(1 + 1)$-\text{EA}$_{\calD'}$ needs in expectation at least $(1-o(1))\frac{1}{p_1'+p_{n-1}'}n\ln n = (1-o(1))\frac{1}{p_1}n\ln n$ steps to find the optimum from level $d-\ln^2 n$, and hence the \ooeaD needs the same time. Note that we proved the lower bound conditional on w.h.p.\! events, but this just adds another $(1-o(1))$ factor for the unconditional expectation.
\end{proof}

\subsection{No Stochastic Domination}\label{sec:domination}
Earlier work~\cite{sudholt2013,djw2010revisited,witt2013tight} used stochastic domination arguments (cf. \cite{doerr2019domination}) to prove lower bounds. In particular, Witt proved his lower bound by showing that \onemax is the easiest function for the \ooea with standard bit mutation of arbitrary mutation rate $p\le 1/2$~\cite{witt2013tight}. The key ingredient was Lemma~6.1 in~\cite{witt2013tight}, which considered two offspring $y$ and $y'$ that are created from parents $x$ and $x'$ respectively by standard bit mutation with mutation rate $p\le 1/2$. For minimization, if $\OM(x) \le \OM(x')$ then the lemma states $\Pr[\OM(y) \le k] \ge \Pr[\OM(y') \le k]$ for all $k\in[0,n]$. So it is easier to reach $\OM$-level at most $k$ when starting with a parent of smaller $\OM$-value. This lemma implies on the one hand that \onemax is the easiest function for standard bit mutation, but also that elitist selection is optimal in this situation: the \ooea with mutation rate $p\le 1/2$ is the fastest algorithm on \onemax among all unary algorithms using standard bit mutation with mutation rate $p\le 1/2$.

However, Witt's lemma does not hold for general unbiased mutation operators. In particular, being closer to the optimum does not mean that we have a higher chance of finding the optimum in the next step. Consider the elitist $(1+1)$ algorithm which flips one bit with probability $p_1=n^{-2}$ and two bits with probability $p_2=1-p_1= 1-n^{-2}$. The probability of finding the optimum from a search point in Hamming distance one from the optimum is $p_1/n=n^{-3}$, whereas the probability of finding the optimum from a search point in Hamming distance two from the optimum is $p_2/\binom{n}{2} = \Theta(n^{-2})$, which is much larger.

Even worse, let us consider the time $T_d$ to find the optimum on \onemax if we start in Hamming distance $d$. For $d=1$ we have $\E[T_1] = n/p_1 = \Theta(n^3)$. For $d=2$, the probability of making an improvement is $p_{\text{imp}} = p_1\cdot 2/n + p_2/\binom{n}{2} = \Theta(n^{-2})$. Hence, conditional on making an improvement, the algorithm improves by one with probability $(p_1\cdot 2/n)/p_{\text{imp}} = \Theta(n^{-1})$. Therefore, we need to wait in expectation $1/p_{\text{imp}} = \Theta(n^2)$ rounds for an improvement, and with probability $\Theta(n^{-1})$ we improve only by one and need to wait another $T_1$ rounds for reaching the optimum. Hence,
\begin{align}
\E[T_2] = \Theta(n^2) + \Theta(n^{-1})\cdot \E[T_1] = \Theta(n^2),
\end{align}
which is asymptotically smaller than $\E[T_1] = \Theta(n^3)$. So the expected time $\E[T_d]$ is not monotone in $d$, and can be asymptotically smaller if we start further away from the optimum.

Turning this example around, we can construct a situation where \onemax is not the easiest function. Consider an algorithm with $p_1 = n^{-3}$, $p_2 = n^{-1}$ and $p_3 = 1-p_1-p_2 = 1-o(1)$, starting in the string $x = (01\ldots 1)$ where all but the first bit are optimized. On \onemax, it needs to wait for a one-bit flip, which takes time $n/p_1 = \Theta(n^4)$. But if the fitness function is $f(x) \coloneqq 3x_1 + \sum_{i=2}^n x_i$, then the algorithm accepts any mutation flipping two or three bits if it involves $x_1$. Conditional on flipping $x_1$, a two-bit flip has only probability $O(n^{-1})$ since $p_3/p_2 = \Theta(n)$. In that case (an improving two-bit flip) the algorithm jumps to another neighbour of the optimum and needs to wait $n/p_1 = O(n^4)$ rounds for the right one-bit flip. This contributes $O(n^{-1} \cdot n^4) = O(n^3)$ to the expectation. However, in the more likely case of a three-bit flip, the algorithm jumps to a search point in distance two from the optimum. By a similar calculation as before, it now needs time $O(n^3)$ to find the optimum, so the expected runtime on $f$ is $O(n^3)$, which is asymptotically faster than on \onemax.

Finally, the same example can be used to show that a non-elitist $(1+1)$ algorithm may be faster than the \ooeaD if both use the same unbiased mutation operator. Hence, Witt's lemma and all its consequences fail for general unbiased mutation operators. This is similar to the situation for the compact genetic algorithm cGA, for which this form of domination does not hold either~\cite{doerr2021cgajump}.

\section{Conclusions}
\label{sec:conclusions}
We have extended Witt's result bounding the runtime of the \ooea on linear functions to arbitrary elitist (1+1) unary unbiased EAs and we have discussed various ways in which the requirements made in Corollary~\ref{cor:upper-bound} and Theorem~\ref{thm:lower-bound} are tight. In particular, we have seen that for $p_1 = n^{-\Omega(1)}$, the expected runtime can be smaller than $\tfrac{1}{p_1} n \ln n$ by a constant factor. 
When interpreted in the light of black-box complexity, our results can be seen as extensions of~\cite{DDY2020} to linear functions. However, we have focused in this work on \textit{static} mutation operators. An extension of our result to \textit{dynamic} parameter settings would hence be a natural continuation of our work.  

Another direction in which we aim to extend our results are \textit{combinatorial} optimization problems where we suspect to see a tangible advantage of unusual unary mutation operators. For example, the optimal mutation operator for the minimum spanning tree problem (MST) is likely to satisfy $p_1>0$ and $p_2>0$. Similarly, there are functions like \leadingones where the optimal number of flipped bits depends on the phase of the algorithm, and none of the phases is asymptotically negligible for the runtime. In such cases, it may be interesting to see what the optimal distribution is.

Similarly, we also expect advantages of the \ooeaD over standard $(1 + 1)$-EAs when optimizing for average performance for problem collections with instances having different landscapes.

\backmatter

\bmhead{Acknowledgments}
We thank the anonymous reviewers for many helpful comments and suggestions. In particular, one reviewer pointed out a simplification in the proofs of Lemma~\ref{lem:Bndr_bound} and Lemma~\ref{lem:h_d_p1_On2}.
Our work is financially supported by ANR-22-ERCS-0003-01 project VARIATION. Duri Andrea Janett was supported by the Swiss-European Mobility Programme.

\section*{Declarations}
The authors have no competing interests to declare that are relevant to the content of this article.


\end{document}